\definecolor{MyBlue}{HTML}{9467bd}
\definecolor{MyRed}{HTML}{d62728}
\definecolor{MyGreen}{HTML}{01825d}
\newtheorem{theorem}{Theorem}
\newtheorem{proposition}[theorem]{Proposition}
\newtheorem{lemma}[theorem]{Lemma}
\newtheorem{conjecture}[theorem]{Conjecture}
\newcommand{\N}{\mathcal{N}}
\newcommand{\R}{\mathbb{R}}
\newcommand{\E}[2]{\mathbb{E}_{{#1}}\left[{#2}\right]}
\newcommand{\Exp}[1]{\mathbb{E}\left[{#1}\right]}
\newcommand{\Cov}[1]{\text{Cov}\left[#1\right]}
\newcommand{\Var}[1]{\mathbb{V}\left[{#1}\right]}
\newcommand{\trnsp}{\mathsf{T}}
\newcommand{\mhat}[1]{\widehat{#1}}
\newcommand{\tr}{\text{tr}}
\newcommand{\brackets}[1]{\left\{#1\right\}}
\newcommand{\norm}[1]{\|#1\|}
\newcommand{\pinv}{\dagger}
\newcommand{\bigO}{\mathcal{O}}
\newcommand{\longdash}[1][2em]{%
  \makebox[#1]{$\m@th\smash-\mkern-7mu\cleaders\hbox{$\mkern-2mu\smash-\mkern-2mu$}\hfill\mkern-7mu\smash-$}}
\newcommand{\omitskip}{\kern-\arraycolsep}
\title{Overparameterized Linear Regression\\ under Adversarial Attacks}
\author{Antônio H. Ribeiro  and Thomas B. Sch\"on \thanks{The authors are with the Department of Information Technology, Uppsala University, Sweden (emails: \href{mailto:antonio.horta.ribeiro@it.uu.se}{antonio.horta.ribeiro@it.uu.se}, \href{mailto:thomas.schon@it.uu.se}{thomas.schon@it.uu.se})}}
\begin{document}

\maketitle

 \vspace{-4pt}
\begin{abstract}
We study the error of linear regression in the face of adversarial attacks. In this framework, an adversary changes the input to the regression model in order to maximize the prediction error. We provide bounds on the prediction error in the presence of an adversary as a function of the parameter norm and the error in the absence of such an adversary. We show how these bounds make it possible to study the adversarial error using analysis from non-adversarial setups. The obtained results shed light on the robustness of overparameterized linear models to adversarial attacks. Adding features might be either a source of additional robustness or brittleness. On the one hand, we use asymptotic results to illustrate how double-descent curves can be obtained for the adversarial error. On the other hand, we derive conditions under which the adversarial error can grow to infinity as more features are added, while at the same time, the test error goes to zero. We show this behavior is caused by the fact that the norm of the parameter vector grows with the number of features. It is also established that $\ell_\infty$ and $\ell_2$-adversarial attacks might behave fundamentally differently due to how the $\ell_1$ and $\ell_2$-norms of random projections concentrate. We also show how our reformulation allows for solving adversarial training as a convex optimization problem. This fact is then exploited to establish similarities between adversarial training and parameter-shrinking methods and to study how the training might affect the robustness of the estimated models.
\end{abstract}
\vspace{-3pt}

\section{Introduction}
\label{sec:intro}

As machine learning models start to be considered for critical applications such as medical settings~\citep{rajpurkar_ai_2022} or autonomous driving~\citep{hussain_autonomous_2019}, their vulnerabilities and brittleness become a pressing concern~\citep{hendrycks_unsolved_2022}. The adversarial attack framework is popular for studying these issues. It considers inputs contaminated with small disturbances deliberately chosen to maximize the model error. The susceptibility of state-of-the-art neural network models to very small input modifications~\citep{bruna_intriguing_2014} gave the framework a lot of attention from the research community. 

There is a  conflicting view on the relationship between high-dimensionality and model robustness to adversarial attacks that served as the driving force for this work. On the one hand, high-dimensionality is pointed out as a source of vulnerability to adversarial attacks~\cite{gilmer_adversarial_2018,tsipras_robustness_2019,ilyas_adversarial_2019}. On another hand, a new line of work has established the advantages of high-dimensionality: the study of
\textit{double-descent} curves show that it is sometimes possible to obtain improvements in performance if we continue to increase the model size beyond the point of interpolation. The phenomenon is counterintuitive because it shows that under certain conditions overfitting the dataset can also be benign to model generalization and performance; precise conditions for \textit{benign overfitting} are provided by~\citet{bartlett_benign_2020}. The idea also applies to robustness analysis and increasing the model size can be a recipe to obtain more robust models---as shown by~\citet{bubeck_universal_2021} using isoperimetric inequalities.

Linear models are a natural setting to study the role of high-dimensions in the robustness to adversarial attacks. Not only can linear models be made vulnerable to adversarial attack~\citep{goodfellow_explaining_2015}, but the double-descent and the benign overfitting phenomenon can also be observed in a purely linear setting~\citep{bartlett_benign_2020,hastie_surprises_2019}. Indeed, there is a growing body of work that study fundamental properties of adversarial attacks in linear models~\cite{taheri_asymptotic_2021,javanmard_precise_2020,hassani_curse_2022,min_curious_2021,yin_rademacher_2019}.

In this paper, we consider adversarial attacks in the context of linear regression. Given an input $x_0\in \R^m$ and an output $y_0 \in \R$, a linear model $\mhat{\beta}$ makes a prediction $\mhat{\beta}^\trnsp x$. The adversary modifies the input with a disturbance $\Delta x$ such that, for a magnitude of at most $\delta$, it maximizes the squared adversarial error,
\begin{equation}
    \label{eq:adversarial_error}
    \max_{\|\Delta x\|_2 \le \delta}(y_0 - (x_0 + \Delta x)^\trnsp\mhat{\beta})^2.
\end{equation}
We refer to the above attack as $\ell_2$-adversarial attack since it constrains the attack to a ball in the $\ell_2$-norm. More generally, we consider the framework of $\ell_p$-adversarial attacks, which contains the commonly used $\ell_2$ and $\ell_\infty$ adversaries as special cases~\citep{bruna_intriguing_2014,goodfellow_explaining_2015,gilmer_adversarial_2018,tsipras_robustness_2019,ilyas_adversarial_2019, kurakin_adversarial_2018, fawzi_analysis_2018, ilyas_adversarial_2019, yuan_adversarial_2019}.

Adversarial examples in linear regression have been studied before, e.g., \citet{javanmard_precise_2020} presents an exact asymptotic analysis. 
Here we use an approximate analysis instead. The quantity of interest is the \emph{adversarial risk}, i.e., the expected value of the squared adversarial error displayed in Eq.~\eqref{eq:adversarial_error}. We approximate the adversarial risk by the sum of risk and the parameter norm $\|\hat\beta\|$ and show that the analysis of these two components is enough to explain high-dimensionality both as a source of brittleness and as a potential recipe for producing robust models. 
This setting can be used to gain insight into the problem. Its simplicity offers extra flexibility for quickly navigating between different setups and reusing results from other papers, as we illustrate throughout our study.

\vspace{-6pt}
\subsection*{Contributions}
\noindent
This paper makes the following contributions:
\begin{enumerate}
    \item We show that \emph{the analysis of adversarial attacks can be simplified in linear regression problems}.
    The adversarial error in Eq.~\eqref{eq:adversarial_error} can be rewritten as:
    \begin{equation*}
        \left(|y_0 - x_0^\trnsp\mhat{\beta}| + \delta\|\mhat{\beta}\|_2\right)^2
    \end{equation*}
    This is proved in Section~\ref{sec:advrisk-linear-regresion}, where we also discuss the implications of the reformulation.
    \item In Section~\ref{sec:adversarial-robustness},  we show that the ratio between $\ell_2$-adversarial risk and   $(\text{risk} + \delta^2\|\mhat{\beta}\|_2^2)$ is always a factor between~1 and~2. We use this approximation to analyse the adversarial risk:
    \begin{enumerate}
        \item  The minimum-norm solution is commonly used to select overparameterized models in connection to the study of the double-descent phenomenon~\citep{belkin_reconciling_2019,bartlett_benign_2020}. We use asymptotic and non-asymptotic analysis to show for this estimate that  $\|\mhat{\beta}\|_2$ decreases as we add more features. This fact is used to obtain cases where we can observe \emph{double-descent in adversarial scenarios}.
        \item We generalize the analysis to other $\ell_p$-norms. While $\|\mhat{\beta}\|_2$ decreases as we add more features, the $\ell_1$-norm, $\|\beta\|_1$, does not. This is used to explain why more features increase the models \emph{vulnerability to some types of adversarial attacks}: i.e., in Fig. 1 the model becomes more vulnerable to $\ell_\infty$-adversarial attacks as more features are added, but not to $\ell_2$-adversarial attacks.
        \end{enumerate}
      \item \emph{Adversarial training} is a standard method to produce models that are robust to adversarial attacks~\citep{madry_deep_2018}. In Section~\ref{sec:adversarial-training}, we show how adversarial training affects the conclusion obtained for minimum-norm solutions.
        \begin{enumerate}
          \item We show how our formulation allows for  solving adversarial training in linear regression as a~\emph{convex optimization} problem.
          \item We compare and establish similarities with \emph{ridge regression} and~\emph{lasso}~\citep{tibshirani_regression_1996}.
          \item We study how adversarial training  and parameter shrinking methods affect the parameter norm grow with the number of features. We use our observations to explain when it effectively prevents vulnerability to adversarial attacks.
        \end{enumerate}
\end{enumerate}

 \begin{figure}[t]
    \centering
    \includegraphics[width=0.5\textwidth]{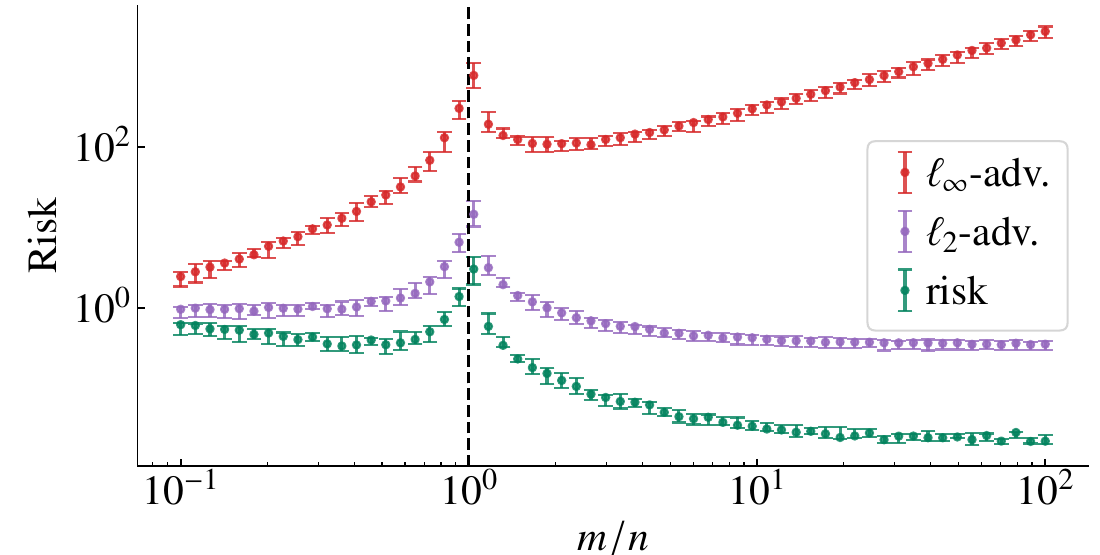}
    \caption{\emph{Double-descent \textit{vs} robustness-accuracy trade-off}. On the $x$-axis we have the ratio between  the number of features $m$ and the number of training datapoints $n$. The model out-of-sample prediction \textit{risk} (when there is no adversarial disturbance) continually decreases in the overparameterized region, achieving significantly better results than in the underparameterized region. On the other hand, increasing the number of features continuously produce worse adversarial robustness for the risk of $\ell_\infty$-adversarial attacks. For an $\ell_2$-adversary, however, the model actually benefits from operating in the overparameterized region and larger models yield better adversarial robustness.  The precise setup for this example is provided in Section~\ref{sec:latent-space-model}.}
    \label{fig:three-types-of-risk}
\end{figure}

\section{Problem formulation}

Consider a training dataset $\mathcal{S} = \{(x_i, y_i)\}_{i=1}^{n}$ consisting of $n$ i.i.d. datapoints of dimension $\R^m \times \R$, sampled from the distribution ${(x_i, y_i)\sim P_{x, y}}$. To this data, we fit a \emph{linear model} from the function class $\left\{\tilde{\beta}^\trnsp x \mid \tilde{\beta} \in \R^m \right\}$. We use $\mhat{\beta}$ to denote the parameter estimated from the training data. The estimation method is detailed in what follows.

We will use subscripts to denote the source of randomness considered in the conditional expectation. For instance, let $x, y, z, w$ be random variables, we use $\E{x, y}{f(x, y, z, w)}$ to denote the expectation with respect to $x, y$ and conditioned on the variables that are not explicitly mentioned in the subscript, here $z$  and $w$. 

Let $(x_0, y_0)\sim P_{x, y}$ be a point not seen during training and independently sampled from the same distribution as the rest of the data. We denote the out-of-sample prediction \emph{risk} by 
\begin{equation}
    R(\mhat{\beta}) = \E{x_0, y_0}{(y_0 - x_0^\trnsp\mhat{\beta})^2}.
  \end{equation}
The $\ell_p$-\emph{adversarial risk} is defined as
\begin{equation}
    \label{eq:adversarial-risk}
    R^{\text{adv}}_p(\mhat{\beta})= \E{x_0, y_0}{\max_{\|\Delta x_0\|_p \le \delta}(y_0 - (x_0 + \Delta x_0)^\trnsp\mhat{\beta})^2},
  \end{equation}
which is the risk when the model is subject to a disturbance $\Delta x_0$ that results in the worst possible performance inside the region $\|\Delta x_0\|_p\le \delta$.
Here, we use $\|\cdot\|_p$ to denote the $\ell_p$-norm of a vector. That is, given a vector $a \in \R^n$, $\|a\|_p = \left(\sum_{i=1}^n |a_i|^p\right)^{\frac{1}{p}}$ and $\|a\|_\infty = \max_{1\le i\le n}{|a_i|}$. The design parameter $\delta$ is the maximum size for the adversarial perturbation.

Moreover, the \emph{empirical risk} is denoted by
\begin{equation}
   \label{eq:empirical-risk}
    \mhat{R}(\mhat{\beta}) = \frac{1}{n}\sum_{i=1}^n{(y_i - x_i^\trnsp\mhat{\beta})}^2,
  \end{equation}
where the expectation w.r.t. the true distribution is replaced by the average over the observed training samples. We use a similar notation for the \emph{empirical adversarial risk},  $\mhat{R}^{\text{adv}}_p(\mhat{\beta})$.

\section{Adversarial risk in linear regression}
\label{sec:advrisk-linear-regresion}

In this paper, we show that the adversarial risk for linear regression can be simplified. The following lemma gives a quadratic form for the adversarial risk defined in Eq.~\eqref{eq:adversarial-risk}.

\begin{lemma}
  \label{thm:advrisk-closeform}
  Let $q$ be a positive real number for which $\frac{1}{p} + \frac{1}{q} = 1$, then\footnote{The result still holds for the pair of values ${(p = 1, q = \infty)}$ and ${(p = \infty, q = 1)}$.}
\begin{equation}
  \label{eq:closeform-adv-risk}
  R_p^{\text{adv}}(\mhat{\beta}) =  \E{x_0, y_0}{ \left(|y_0 - x_0^\trnsp\mhat{\beta}| + \delta\|\mhat{\beta}\|_q\right)^2}.
\end{equation}
\end{lemma}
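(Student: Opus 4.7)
The plan is to reduce the inner maximization in Eq.~\eqref{eq:adversarial-risk} to a one-dimensional problem using Hölder's inequality, and then exploit the tightness of that inequality together with the monotone structure of $t \mapsto (r-t)^2$ restricted to a symmetric interval. Fix a realization of $(x_0,y_0)$ and abbreviate $r = y_0 - x_0^\trnsp \mhat{\beta}$ and $s = \Delta x_0^\trnsp \mhat{\beta}$. Then the inner objective reads $(r-s)^2$, and the feasibility constraint $\|\Delta x_0\|_p \le \delta$ restricts $s$ to lie in some interval determined by the duality between $\ell_p$ and $\ell_q$.

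The first step is to show that the reachable set for $s$ is exactly $[-\delta\|\mhat{\beta}\|_q,\,\delta\|\mhat{\beta}\|_q]$. Hölder's inequality gives
\begin{equation}
|s| \;=\; |\Delta x_0^\trnsp \mhat{\beta}| \;\le\; \|\Delta x_0\|_p \,\|\mhat{\beta}\|_q \;\le\; \delta\|\mhat{\beta}\|_q ,
\end{equation}
which yields the upper bound. For the reverse inclusion, I exhibit maximizers: for $1<p<\infty$ choose $\Delta x_0$ componentwise as $\pm\delta\,\mathrm{sign}(\mhat{\beta}_i)\,|\mhat{\beta}_i|^{q-1}/\|\mhat{\beta}\|_q^{q-1}$; for $(p=1,q=\infty)$ take $\Delta x_0 = \pm\delta\,\mathrm{sign}(\mhat{\beta}_{i^\star}) e_{i^\star}$ with $i^\star \in \arg\max_i |\mhat{\beta}_i|$; for $(p=\infty,q=1)$ take $\Delta x_0 = \pm\delta\,\mathrm{sign}(\mhat{\beta})$ entrywise (with any convention on zero entries). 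In each case both signs of the maximal value are attained, and by continuity of $\Delta x_0 \mapsto \Delta x_0^\trnsp \mhat{\beta}$ on the connected feasible ball, every intermediate value of $s$ is attained as well.

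Having identified the reachable interval, the maximization of $(r-s)^2$ over $s \in [-\delta\|\mhat{\beta}\|_q,\,\delta\|\mhat{\beta}\|_q]$ is trivial: the quadratic is convex in $s$, so the maximum over an interval is attained at an endpoint, and by choosing the endpoint with sign opposite to $r$ we obtain
\begin{equation}
\max_{\|\Delta x_0\|_p \le \delta}(r - s)^2 \;=\; \bigl(|r| + \delta\|\mhat{\beta}\|_q\bigr)^2.
\end{equation}
Substituting $r = y_0 - x_0^\trnsp \mhat{\beta}$ and taking the expectation over $(x_0,y_0)$ yields Eq.~\eqref{eq:closeform-adv-risk}.

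The main obstacle, such as it is, is the verification that Hölder's inequality is tight in the endpoint cases $(1,\infty)$ and $(\infty,1)$ where the dual vector is not given by the smooth formula, and the handling of the degenerate case $\mhat{\beta}=0$ (in which the claim reduces to $R_p^{\text{adv}}(0)=\Exp{y_0^2}$ and both sides agree trivially). Everything else is bookkeeping, and no interchange of expectation and maximization is needed since the maximum is solved in closed form pointwise in $(x_0,y_0)$ before taking the expectation.
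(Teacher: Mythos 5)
Your proposal is correct and follows essentially the same route as the paper: Hölder's inequality plus an explicit tightness construction reduces the inner maximization to a scalar quadratic over the interval $[-\delta\|\mhat{\beta}\|_q,\,\delta\|\mhat{\beta}\|_q]$, whose maximum sits at the endpoint of sign opposite to the residual, after which the expectation is taken pointwise. The only cosmetic differences are that you maximize $(r-s)^2$ directly rather than first expanding the square, and you write out the normalized dual maximizers (including the $(1,\infty)$, $(\infty,1)$ and $\mhat{\beta}=0$ cases) inline instead of invoking them as a separate proposition, as the paper does with Proposition~\ref{thm:tight-holder}.
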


The lemma above turns out to be a useful tool when analysing robustness to adversarial attacks and adversarial training.
A contribution of this paper is to show how it can be used in various situations.
In Section~\ref{sec:adversarial-robustness}, Lemma~\ref{thm:advrisk-closeform} is used to analyse the adversarial robustness of linear models and the interplay between overparametrization and robustness.  In Section~\ref{sec:adversarial-training}, we show how the formula allows for an efficient method for adversarial training using convex programming.

The proof of the lemma is based on Hölder's inequality ${|\beta^\trnsp\Delta x| \le \|\beta\|_p \|\Delta x\|_q}$ and on the fact that, given $\beta$, we have $\Delta x$ such that the equality holds.
The next proposition gives the precise construction that we will make use of.

\begin{proposition}
  \label{thm:tight-holder}
  Given $p, q \in (1, \infty)$ such that $1/p + 1/q = 1$ and $\beta\in \R^m$, $\Delta x\in \R^m$. We have $|\beta^\trnsp\Delta x| = \|\beta\|_q \|\Delta x\|_p$ when the $i$-th component is $\Delta x_i = \text{sign}(\beta_i) |\beta_i|^{q/p}$.  Moreover, if $\Delta x_i =  \text{sign}(\beta_i)$ for every $i$, then $|\beta^\trnsp\Delta x| = \|\beta\|_1 \|\Delta x\|_\infty$. If
    \begin{equation*}
    \Delta x_i = \frac{s_i}{\sum_i s_i} 
        \quad \text{for}\quad
        s_i=
        \begin{cases}
        1 & \text{if }\beta_i = \max_i \beta_i\\
        0 & \text{otherwise}
        \end{cases}
    \end{equation*}
   then $|\beta^\trnsp\Delta x| = \|\beta\|_\infty \|\Delta x\|_1$.
\end{proposition}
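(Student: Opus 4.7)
The plan is to verify the three claimed equality constructions by substituting them and comparing to the upper bound provided by Hölder's inequality. First I would invoke Hölder: for any conjugate pair with $1/p + 1/q = 1$ (including the limiting pairs $(1, \infty)$ and $(\infty, 1)$), one has $|\beta^\trnsp \Delta x| \le \|\beta\|_p \|\Delta x\|_q$. This gives the upper direction automatically, so the whole proof reduces to checking that each proposed $\Delta x$ saturates this bound. The argument naturally splits into the generic case $p,q\in(1,\infty)$ and the two endpoint cases.

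For the generic case, I would plug the explicit construction into both sides. Using $p/q = p-1$, the inner product collapses to $\sum_i |\beta_i|^p = \|\beta\|_p^p$, while the $q$-norm evaluates to $\|\Delta x\|_q = \bigl(\sum_i |\beta_i|^{p}\bigr)^{1/q} = \|\beta\|_p^{p/q}$, so that $\|\beta\|_p \|\Delta x\|_q = \|\beta\|_p^{1+p/q} = \|\beta\|_p^{p}$, matching the inner product. (I would flag that, as stated, the exponent reads $q/p$; for the identity to hold in general one needs $p/q$ — equivalently, $|\Delta x_i|^q$ proportional to $|\beta_i|^p$, which is the classical equality condition in Hölder. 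The two prescriptions coincide precisely in the $\ell_2$ case, which is consistent with the proposition but suggests a typographical swap in the general case. I would also insert $\mathrm{sign}(\beta_i)$ to line up signs so that the absolute value can be dropped.)

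For the pair $(p,q) = (1,\infty)$, the check is immediate: with $\Delta x_i = \mathrm{sign}(\beta_i)$, each product $\beta_i \Delta x_i = |\beta_i|$, so $\beta^\trnsp \Delta x = \|\beta\|_1$ while $\|\Delta x\|_\infty = 1$ (the case $\beta=0$ is trivial), and these multiply correctly. For the pair $(p,q)=(\infty,1)$, the construction supported on the argmax set and normalized to unit sum gives $\|\Delta x\|_1 = 1$ and $\beta^\trnsp \Delta x = \max_i \beta_i$; to equate this with $\|\beta\|_\infty = \max_i |\beta_i|$ one should use $\mathrm{sign}(\beta_{i^\star})$ at an index $i^\star$ of maximum absolute value, which is the natural limiting form of the general construction.

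I do not anticipate any substantive obstacle: the entire statement is a case-by-case reading of the equality conditions in Hölder's inequality. The only subtleties are bookkeeping — matching signs so that $|\beta^\trnsp \Delta x|$ equals $\beta^\trnsp \Delta x$ on the chosen vector, and treating the degenerate endpoints $p \in \{1,\infty\}$ as limits of the generic equality condition $|\Delta x_i|^q \propto |\beta_i|^p$.
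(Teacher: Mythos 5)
Your proposal is correct, and it is essentially the only natural argument: the paper itself states Proposition~\ref{thm:tight-holder} without proof (calling it well-known), and your case-by-case verification of the Hölder equality conditions is exactly the intended justification. Your side remarks are also on target — for the generic case equality with the pairing $\|\beta\|_p\|\Delta x\|_q$ requires $|\Delta x_i| \propto |\beta_i|^{p/q}$ together with $\mathrm{sign}(\beta_i)$ (the stated exponent $q/p$ instead matches the pairing $\|\beta\|_q\|\Delta x\|_p$ actually used in the proof of Lemma~\ref{thm:advrisk-closeform}), and in the $(\infty,1)$ case the mass should be placed at an index maximizing $|\beta_i|$ with the corresponding sign — these are minor typographical slips in the statement rather than gaps in your argument.
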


The above proposition is well-known. See, for instance, Exercise 4, Section 2.4 of \citet{ash_probability_2000}. 
Indeed, most adversarial attacks are constructed based on it. For instance, the Fast Gradient Sign Method (FGSM)~\citep{goodfellow_explaining_2015} linearizes the neural network and applies the above construction for $p = \infty$ to obtain the adversarial perturbation. For completeness, we also provide a proof of the proposition in the appendix. We are now ready to prove Lemma~\ref{thm:advrisk-closeform}.

\begin{proof}[Proof of Lemma~\ref{thm:advrisk-closeform}]
For $1\le p\le \infty$, let $q$ be a positive real number such that $\frac{1}{p} + \frac{1}{q} = 1$. Let $e_0 = y_0 - x_0^\trnsp\mhat{\beta}$. After some algebraic manipulation,~\eqref{eq:adversarial-risk} can be rewritten as
\begin{equation*}
    \label{eq:adversarial_risk_expansion}
    R_p^{\text{adv}}(\mhat{\beta}) = \E{x_0, y_0}{e_0^2 + \max_{\|\Delta x_0\|_p \le \delta}\left((\Delta x_0^\trnsp\mhat{\beta})^2 - 2e_0 \Delta x_0^\trnsp \mhat{\beta} \right)}.
\end{equation*}
In turn, if we define $r = \Delta x_0^\trnsp \mhat{\beta} $ and use the fact that $\|\Delta x_0\|_p \le \delta$,  H\"older's inequality yields $$|r|  \le \delta \|\mhat{\beta}\|_q, $$ for $q$ satisfying  $1/p + 1/q = 1$. Since Proposition~\ref{thm:tight-holder} guarantees that we can always choose vectors such that the equality holds, the term inside the expectation is equal to $M + e_0^2$, where \[M = \max_{|r| \le \delta \|\mhat{\beta}\|_q} (r^2 - 2 e_0 r).\]
Now, the maximum is attained at $r = - \delta \|\mhat{\beta}\|_q$ if $e_0 \ge 0$ and at $r = \delta \|\mhat{\beta}\|_q$ if $e_0 < 0$. Hence, $M = \delta^2 \|\mhat{\beta}\|_q^2 +2 \delta \|\mhat{\beta}\|_q |e_0|$ and
\begin{equation}
  R_p^{\text{adv}}(\mhat{\beta}) =  \E{x_0, y_0}{\delta^2\|\mhat{\beta}\|_q^2  + 2\delta\norm{\mhat{\beta}}_q |e_0| +  |e_0|^2}.
\end{equation}
\end{proof}

\section{Adversarial robustness of linear models}
\label{sec:adversarial-robustness}
In this section, we analyse the adversarial risk, by exploiting Lemma~\ref{thm:advrisk-closeform}. Expanding~\eqref{eq:closeform-adv-risk} and using the linearity of the expectation operator we obtain 
\begin{equation}
    \label{eq:closeform-adv-risk1}
  R_p^{\text{adv}}(\mhat{\beta}) = \Bigg(R(\mhat{\beta}) + \delta^2\|\mhat{\beta}\|_q^2\Bigg)  + 2\delta\norm{\mhat{\beta}}_q \E{x_0, y_0}{|y_0 - x_0^\trnsp\mhat{\beta}|} .
\end{equation} 
The term inside the parenthesis is present in most regularized settings and we can naturally see similarities with the cost function of lasso and ridge regression for $q=1$ and $2$, respectively. The last term is always positive and can be upper bounded using Jensen's inequality. Hence, it follows that
\begin{equation}
  \label{eq:ineq_adversarial_risk}
   R(\mhat{\beta}) + \delta^2\|\mhat{\beta}\|_q^2 \le R_p^{\text{adv}}(\mhat{\beta}) \le \left(\sqrt{R(\mhat{\beta})} + \delta  \|\mhat{\beta}\|_q \right)^2.
 \end{equation}
The inequality can be further simplified using the fact that for all $a,b \ge 0$ the inequality $(a+b)^2 \le 2(a^2+b^2)$ holds. Hence,
$$1 \le \frac{R_p^{\text{adv}}(\mhat{\beta})}{ R(\mhat{\beta}) + \delta^2\|\mhat{\beta}\|_q^2} \le 2.$$
That is, the adversarial risk is between 1 and 2 times the quantity $R(\mhat{\beta}) + \delta^2\|\mhat{\beta}\|_q^2$. Such bounds allow for the analysis of adversarial robustness from values that are often obtained from other analyses (the risk and the parameter norm). 
For instance, from the double-descent literature, it is well-known that the $\ell_2$-norm of the estimated parameter often also exhibits a double-descent behavior as a function of the number of features. This is observed experimentally, for instance, by~\citet{belkin_reconciling_2019}. Hence, the above approximation offers an easy way to understand why we can expect double-descent behavior for the $\ell_2$-adversarial risk.

The approximation is also enough to justify the potential brittleness of high-dimensional models. It gives sufficient and necessary conditions for a model with good test performance to be made vulnerable to adversarial attacks as more features are added. For instance, Eq.~\eqref{eq:ineq_adversarial_risk} implies that:
\textit{For a sufficiently small risk ${R(\mhat{\beta}) < \epsilon}$,  the adversarial risk ${R_p^{\text{adv}}(\mhat{\beta}) \rightarrow \infty}$ as ${m \rightarrow \infty}$ if and only if: $\delta \|\mhat{\beta}\|_q \rightarrow \infty$.} The next example demonstrates a fairly simple case where such behavior can be observed.

\subsection{Motivating example: weak features}
\label{sec:unidimensional-latent-space}

 \begin{figure}[t]
    \centering
   \subfloat[Optimal predictor]{ \includegraphics[width=0.45\textwidth]{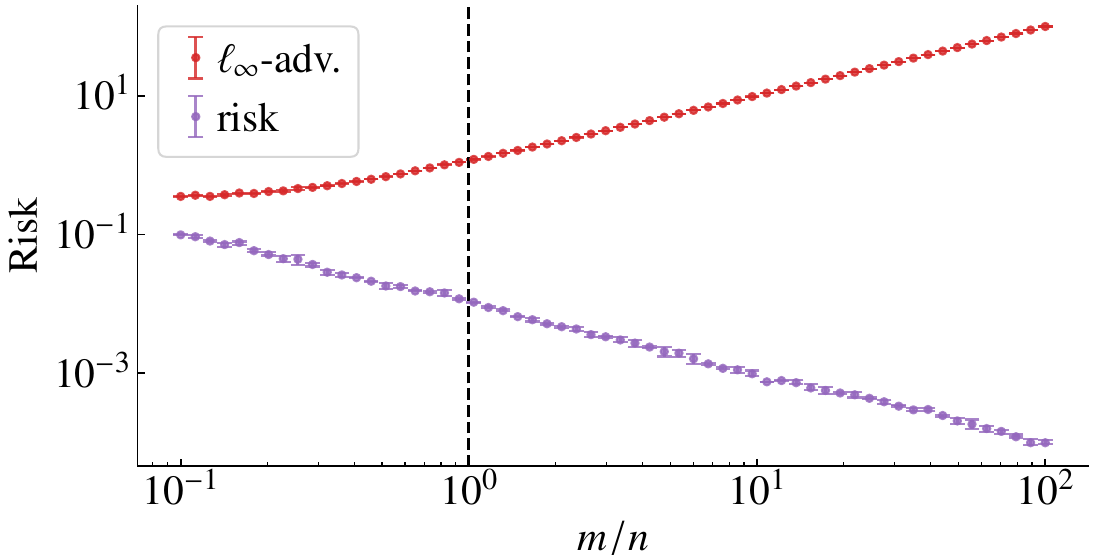}}\\
    \subfloat[Min $\ell_2$-norm predictor]{ \includegraphics[width=0.45\textwidth]{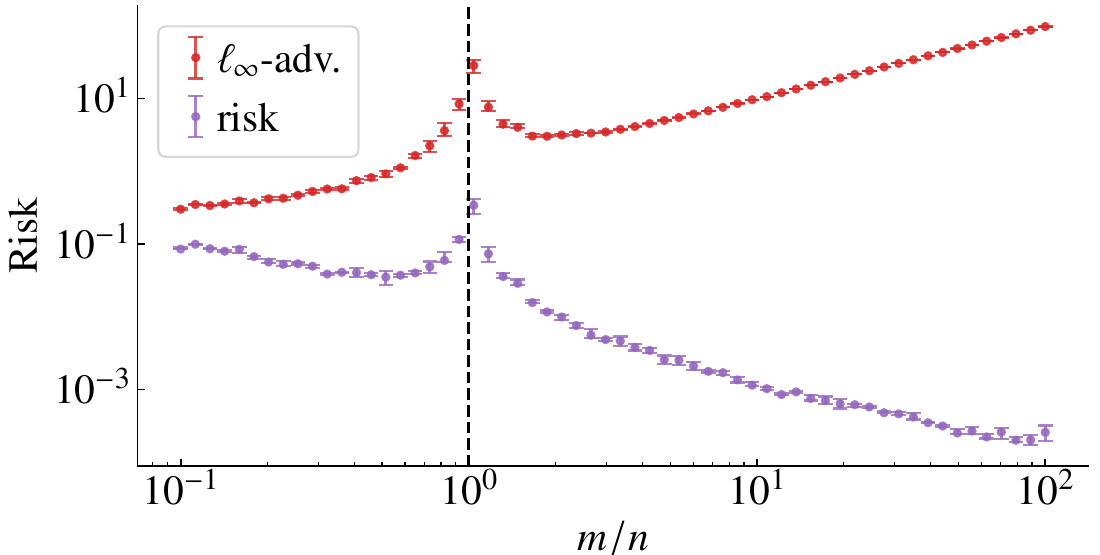}}
    \caption{\emph{Motivating example: weak features}. For the data generated as in~\eqref{eq:weak_features_problem}, we show the risk and the $\ell_\infty$-adversarial risk on the test dataset for: (a) the optimal predictor ${\mhat{\beta} = \left[\frac{1}{\sqrt{m}}, \dots, \frac{1}{\sqrt{m}}\right]}$; and, (b) the predictor obtained from a training dataset using the minimum-norm solution, i.e. Eq.~\eqref{eq:min-norm-sol}.}
    \label{fig:weak-features}
\end{figure}

In this section, we show brittleness and vulnerability to $\ell_\infty$-adversarial attacks arising from high-dimensionality.
\citet{tsipras_robustness_2019} makes use of a linear example to motivate their argument that robustness and accuracy might be at odds. We show a modified  construction\footnote{The construction is quite similar. The only differences are: 1) it is a regression problem rather than a classification problem; and, 2) we use $y/\eta$ rather than $y$ as the conditional mean of $x^j$ (in Eq.~\eqref{eq:weak_features_problem}). This is done to ensure that the signal-to-noise ratio is constant with $\eta$.}  to motivate how brittleness might appear in linear examples. The construction makes use of many features that are weakly correlated with the output. Let the input $x = (x^1, \cdots, x^m)$ and the output $y$ be normally distributed:
\begin{equation}
  \label{eq:weak_features_problem}
    y\sim \N(0, 1) \text{ and } x^j \mid y \sim \N\left(\frac{y}{\eta} ,  \frac{1}{\eta}\right) \text{ for } j = 1, \dots,  m.
\end{equation}
Following the choice of~\citet{tsipras_robustness_2019}, we use $\eta = \frac{1}{\sqrt{m}}$ implying that $\E{x}{\|x\|_2}$ remains constant with the number of features.
Here, we can show that the optimal predictor ${\mhat{\beta} = \left[\frac{1}{\sqrt{m}}, \cdots, \frac{1}{\sqrt{m}}\right]}$ results in a prediction $\hat{y} = \mhat{\beta}^\trnsp x$ that follows
the distributions ${(\mhat{\beta}^\trnsp x) | y \sim \N(y, \frac{1}{m})}$. This means that the prediction risk of this model is $R(\mhat{\beta}) = \E{x, y}{(y - \mhat{\beta}^\trnsp x)^2} = \frac{1}{m}$. Hence, the risk of our predictor goes to zero 
$R \rightarrow 0$ as the number of features goes to infinity, $m\rightarrow \infty$.

Here, $\|\mhat{\beta}\|_1 = \sqrt{m}$, implying that the $\ell_\infty$-adversarial risk would grow with a rate $\Omega(m)$ by Eq.~\eqref{eq:ineq_adversarial_risk}. This is one example where the risk goes to zero with the number of features \textit{at the same time} as the $\ell_\infty$-adversarial risk grows indefinitely. The risk and the $\ell_\infty$-adversarial risk for this example are displayed in Fig.~\ref{fig:weak-features}(a).

There are two aspects of this example that we will further refine in the coming sections. The first is that we are using an estimator that is obtained by minimizing the true risk, which is not a procedure that can be used in practice. In what follows, we show that a similar effect can be obtained if the empirical risk is  minimized (i.e., as in Fig. \ref{fig:weak-features}(b)). The second aspect is the scaling $\eta$, we will motivate different choices and show how they can yield quite different results.

\subsection{Preliminaries}

Here we focus on the minimum norm solution, which is often used when studying the behavior of overparameterized models in connection with the double-descent phenomenon~\citep{belkin_reconciling_2019}.  We will assume that the training and test data have been generated linearly with additive noise:
\begin{align}
    \label{eq:linear-data-model}
    (x_i, \epsilon_i) \sim P_x\times P_\epsilon,\qquad y_i = x_i^\trnsp \beta + \epsilon_i,
\end{align}
where $P_\epsilon$ is a distribution in $\R$ such that $\Exp{\epsilon_i} = 0$ and  $\Var{\epsilon_i} = \sigma^2$ and $\epsilon_i$ is assumed to be independent of~$x_i$. Moreover, $\Exp{x_i} = 0$ and $\Cov{x_i} = \Sigma$. The $\ell_2$-norm of the data generation parameter is denoted by $\|\beta\|^2_2 = r^2$.

Let $X\in\R^{n \times m}$ denote the matrix consisting of stacked training inputs $x_i^\trnsp$ and similarly let $y\in\R^{n}$ denote the output vector. The parameters are estimated as
\begin{equation}
  \label{eq:min-norm-sol}
  \mhat{\beta} =  (X^\trnsp X)^\pinv X^\trnsp y,
\end{equation}
where $ (X^\trnsp X)^\pinv$ represents the pseudo-inverse of $X^\trnsp X$. In the underparameterized  ($m < n$) region, it corresponds to the least-square solution.
In the overparameterized ($m > n$) case, where more than one solution is possible, this corresponds to the solution for which the parameter norm $\|\mhat{\beta}\|_2$  is minimum, the \textbf{minimum-norm solution}.

From Eq.~\eqref{eq:linear-data-model} and Eq.~\eqref{eq:min-norm-sol} it follows that:
\begin{equation}
  \label{eq:beta-closeform}
    \mhat{\beta} = \underbrace{(X^\trnsp X)^\pinv X^\trnsp X}_{\Phi} \beta + \underbrace{(X^\trnsp X)^\pinv}_{n\hat{\Sigma}^\pinv} X^\trnsp \epsilon,
  \end{equation}
where we have introduced the following notation $\hat{\Sigma} = \frac{1}{n}X^\trnsp X$,  $\Phi = \hat{\Sigma}^\pinv \hat{\Sigma}$ and $\Pi = I - \Phi$. Here, $\Phi$ and $\Pi$ are orthogonal projectors: $\Pi$ is the projection into the null space of $X$ and $\Phi$ into the row space of $X$. The first term in Eq.~\eqref{eq:beta-closeform} can be understood as a projection of the original parameter into the row space of the regressors and it is the parameters estimated in a noiseless scenario. The second term is the consequence of the noise. It follows that the risk and the expected parameter norm can be decomposed as in the subsequent Lemma. The proof is provided in the Supplementary Material.
\begin{lemma}[Bias-variance decomposition]
    \label{thm:bias-variance-decomposition} Denote  $\| z\| _{\Sigma}^ 2  =  z^ \trnsp \Sigma z$.  The expected risk and $\ell_2$ parameter norm are
\begin{eqnarray}
    \label{eq:bias-variance-decomposition}
    \E{\epsilon}{R(\beta)} &=& \| \Pi \beta\| _{\Sigma}^ 2 +  \frac{\sigma^2}{n} \tr(\hat{\Sigma}^\pinv \Sigma) + \sigma^2, \\
    \label{eq:l2norm-decomposition}
    \E{\epsilon}{\|\mhat{\beta}\|_2^2} &=& \| \Phi \beta\|_2^ 2 +  \frac{\sigma^2}{n} \tr(\hat{\Sigma}^\pinv).
\end{eqnarray}
\end{lemma}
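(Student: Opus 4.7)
The starting point is the closed-form decomposition in Eq.~\eqref{eq:beta-closeform}, which I will rewrite as $\mhat{\beta} = \Phi\beta + \frac{1}{n}\hat{\Sigma}^{\pinv} X^\trnsp \epsilon$ (using $(X^\trnsp X)^{\pinv}=\frac{1}{n}\hat{\Sigma}^{\pinv}$). Both identities then follow by expanding a quadratic form, taking the expectation over $\epsilon$ (killing a linear cross term), and simplifying the resulting quadratic form in $\epsilon$ using $\mathbb{E}_\epsilon[\epsilon\epsilon^\trnsp]=\sigma^2 I$ together with standard pseudo-inverse identities.

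For the risk, I would substitute $y_0 = x_0^\trnsp\beta + \epsilon_0$ into the definition of $R(\mhat{\beta})$, use that $\epsilon_0$ is independent of $x_0$ with mean zero and variance $\sigma^2$, and integrate over $x_0$ to obtain $R(\mhat{\beta}) = \|\beta-\mhat{\beta}\|_\Sigma^2 + \sigma^2$. Writing $\beta - \mhat{\beta} = \Pi\beta - \frac{1}{n}\hat{\Sigma}^{\pinv}X^\trnsp\epsilon$ and taking $\mathbb{E}_\epsilon$, the cross term vanishes because $\Pi\beta$ is deterministic in $\epsilon$ and $\mathbb{E}_\epsilon[\epsilon]=0$. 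What remains is $\|\Pi\beta\|_\Sigma^2 + \frac{1}{n^2}\mathbb{E}_\epsilon[\epsilon^\trnsp X\hat{\Sigma}^{\pinv}\Sigma\hat{\Sigma}^{\pinv}X^\trnsp\epsilon] + \sigma^2$. The quadratic form in $\epsilon$ equals $\frac{\sigma^2}{n^2}\tr(X\hat{\Sigma}^{\pinv}\Sigma\hat{\Sigma}^{\pinv}X^\trnsp) = \frac{\sigma^2}{n}\tr(\hat{\Sigma}\hat{\Sigma}^{\pinv}\Sigma\hat{\Sigma}^{\pinv})$, where I used the cyclic property of the trace and $X^\trnsp X = n\hat{\Sigma}$.

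The same strategy handles the parameter-norm identity: expand $\|\mhat{\beta}\|_2^2 = \|\Phi\beta\|_2^2 + \frac{2}{n}(\Phi\beta)^\trnsp\hat{\Sigma}^{\pinv}X^\trnsp\epsilon + \frac{1}{n^2}\epsilon^\trnsp X(\hat{\Sigma}^{\pinv})^2 X^\trnsp\epsilon$, and the expected cross term vanishes, leaving $\|\Phi\beta\|_2^2 + \frac{\sigma^2}{n}\tr(\hat{\Sigma}\hat{\Sigma}^{\pinv}\hat{\Sigma}^{\pinv})$ after the same cyclic manipulation.

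The one genuinely non-routine step, and the main obstacle, is reducing the two residual trace expressions to the clean forms in the statement: I need to show $\tr(\hat{\Sigma}\hat{\Sigma}^{\pinv}\Sigma\hat{\Sigma}^{\pinv}) = \tr(\hat{\Sigma}^{\pinv}\Sigma)$ and $\tr(\hat{\Sigma}\hat{\Sigma}^{\pinv}\hat{\Sigma}^{\pinv}) = \tr(\hat{\Sigma}^{\pinv})$. Both follow from the Moore--Penrose identity $\hat{\Sigma}^{\pinv}\hat{\Sigma}\hat{\Sigma}^{\pinv}=\hat{\Sigma}^{\pinv}$ (valid because $\hat{\Sigma}$ is symmetric PSD, so $\Phi = \hat{\Sigma}^{\pinv}\hat{\Sigma} = \hat{\Sigma}\hat{\Sigma}^{\pinv}$ is the orthogonal projector onto the range of $\hat{\Sigma}$): cycling once inside the trace places an $\hat{\Sigma}^{\pinv}\hat{\Sigma}\hat{\Sigma}^{\pinv}$ block which collapses to $\hat{\Sigma}^{\pinv}$. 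With this simplification both stated identities follow immediately, which completes the plan.
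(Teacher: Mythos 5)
Your proposal is correct and follows essentially the same route as the paper's own proof: expand $\mhat{\beta}=\Phi\beta+\frac{1}{n}\hat{\Sigma}^{\pinv}X^\trnsp\epsilon$, kill the cross term via $\E{\epsilon}{\epsilon}=0$, evaluate the quadratic term with the trace trick and $\E{\epsilon}{\epsilon\epsilon^\trnsp}=\sigma^2 I$, and collapse the traces using the Moore--Penrose identity $\hat{\Sigma}^{\pinv}\hat{\Sigma}\hat{\Sigma}^{\pinv}=\hat{\Sigma}^{\pinv}$. The only difference is cosmetic: the step you flag as "genuinely non-routine" is exactly the same one-line pseudo-inverse simplification the paper uses, so there is no gap.
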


The following bounds can be used in conjunction with Lemma~\ref{thm:bias-variance-decomposition} to analyze the adversarial risk
\begin{equation}
    \label{eq:bounds-linear-regression}
    R + \delta^2 L_q \le R_p^{\text{adv}} \le \left(\sqrt{R} + \delta  \sqrt{L_q} \right)^2,
  \end{equation}
where $L_q = \E{\epsilon}{\|\mhat{\beta}\|_q^2}$,  $R = \E{\epsilon}{R(\mhat{\beta})}$ and $R_p^{\text{adv}} = \E{\epsilon}{R_p^{\text{adv}}(\mhat{\beta})}$.

\begin{figure}
    \centering
    \includegraphics[width=0.45\textwidth]{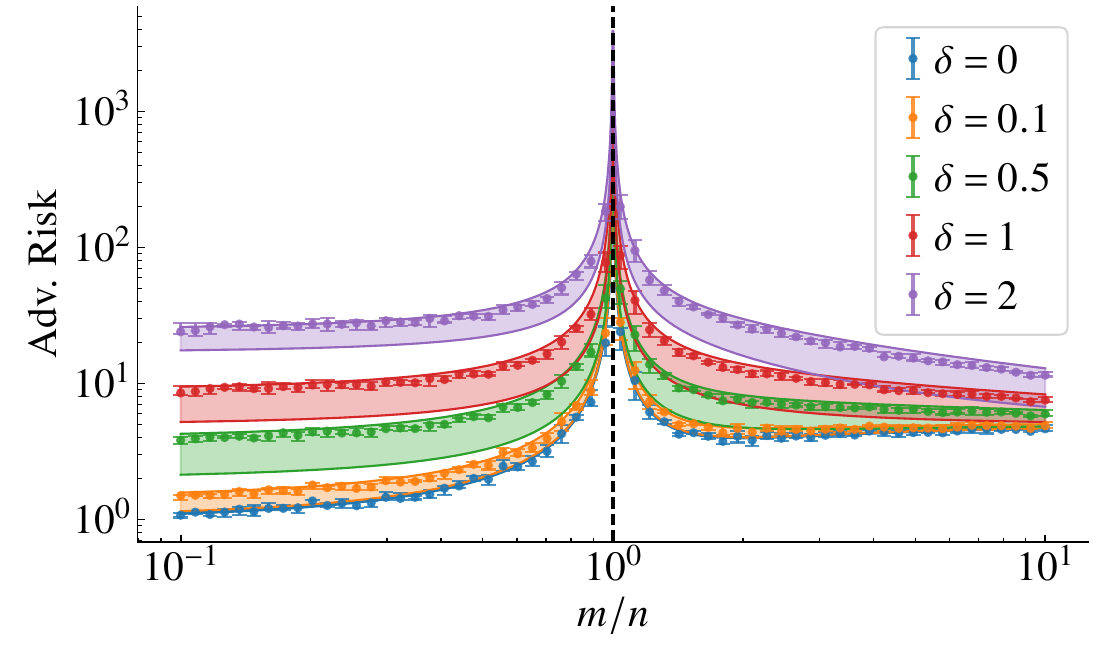}
    \caption{\emph{Adversarial $\ell_2$-risk, isotropic features.} The solid line shows the upper and lower bounds on the asymptotic risk. The results are for isotropic features with $r^2 =2$, $\sigma^2 = 1$.  The error bars give the median and the 0.25 and 0.75 quantiles obtained from numerical experiments (10 realizations) with a fixed training dataset of size $n=300$.}
    \label{fig:double-descent-l2-isotropic}
\end{figure}

\subsection{Isotropic feature model}
\label{sec:isotropic-features-model}
Let us now analyse the case of isotropic features, i.e.  we assume that the input $x_i$ has zero mean and unit variance ($\Sigma = I_m$).  Despite its simplicity, this model captures some interesting aspects of overparametrization, for instance, it  produces double-descent curves~\citep{hastie_surprises_2019}. It is used as the starting point for our study, which later is enriched by the analysis of a latent space model---in Section~\ref{sec:latent-space-model}---and of an equicorrelated features model---in Appendix~\ref{sec:equicorrelated-model}.
For this case, results from random matrix theory can be used to establish asymptotics for the terms in Eqs.~\eqref{eq:bias-variance-decomposition} and~\eqref{eq:l2norm-decomposition}. 

\begin{lemma}[Isotropic features, {\citet{hastie_surprises_2019}}]
\label{thm:asymptotics-linear-regression} 
For isotropic features with a moment of order greater than 4 that is finite. As $m, n\rightarrow \infty$, $m/n \rightarrow \gamma$, almost surely
\begin{eqnarray}
    \E{\epsilon}{R(\mhat{\beta})} &\rightarrow&
    \begin{cases}
    \sigma^2 \frac{\gamma}{1 - \gamma} + \sigma^2& \gamma < 1, \\
    r^2 (1 - \frac{1}{\gamma}) + \sigma^2 \frac{1}{\gamma - 1}+ \sigma^2 & \gamma > 1.
    \end{cases}\\
    \label{eq:asymptotic_l2_isotropic}
      \E{\epsilon}{\|\mhat{\beta}\|_2^2} &\rightarrow&
    \begin{cases}
    r^2 + \sigma^2 \frac{\gamma}{1 - \gamma} & \gamma < 1, \\
    r^2 \frac{1}{\gamma} + \sigma^2 \frac{1}{\gamma - 1} &  \gamma > 1.
    \end{cases}
\end{eqnarray}
\end{lemma}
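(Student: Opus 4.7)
My plan is to reduce the statement to Lemma~\ref{thm:bias-variance-decomposition} specialized to $\Sigma = I_m$, and then apply the Marchenko-Pastur law (together with its ``companion'' form for the rank-deficient case) to each remaining trace term. The two asymptotic regimes require different handling, so I would split the argument into $\gamma < 1$ and $\gamma > 1$ from the outset.

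In the underparameterized regime $\gamma < 1$, the sample covariance $\hat\Sigma = \frac{1}{n}X^\trnsp X$ is almost surely invertible for $n$ large, so $\Phi = I_m$, $\Pi = 0$, and the bias terms vanish. The risk and norm then reduce to ${\sigma^2 + \frac{\sigma^2}{n}\tr(\hat\Sigma^{-1})}$ and ${r^2 + \frac{\sigma^2}{n}\tr(\hat\Sigma^{-1})}$ respectively. Writing $\frac{1}{n}\tr(\hat\Sigma^{-1}) = \frac{m}{n}\cdot\frac{1}{m}\tr(\hat\Sigma^{-1})$ and recognizing the latter as the Stieltjes transform of the empirical spectral distribution of $\hat\Sigma$ evaluated at $0$, the Marchenko-Pastur law gives $\frac{1}{m}\tr(\hat\Sigma^{-1}) \to \int x^{-1} dF_{MP,\gamma}(x) = \frac{1}{1-\gamma}$ almost surely (this requires only the stated fourth-moment condition on the entries of $x_i$). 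Multiplying by $\gamma$ produces the claimed $\frac{\sigma^2\gamma}{1-\gamma}$.

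The overparameterized regime $\gamma > 1$ is the main technical step. Here $\hat\Sigma$ has rank $n<m$, so both the projection terms $\|\Pi\beta\|_2^2$, $\|\Phi\beta\|_2^2$ and the pseudo-inverse trace $\frac{1}{n}\tr(\hat\Sigma^\pinv)$ must be analyzed. For the projections, I would use rotational invariance: since the isotropic distribution is invariant under orthogonal transformations, the row space of $X$ is (in distribution) a uniformly random $n$-dimensional subspace of $\R^m$, independent of $\beta$. This gives $\E{}{\|\Phi\beta\|_2^2} = \frac{n}{m}\|\beta\|_2^2$, which concentrates to $r^2/\gamma$, and correspondingly $\|\Pi\beta\|_2^2 \to r^2(1-1/\gamma)$. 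For the pseudo-inverse trace, the nonzero eigenvalues of $\hat\Sigma = \frac{1}{n}X^\trnsp X$ coincide with those of $\frac{1}{n}XX^\trnsp$, and the latter $n\times n$ matrix has a Marchenko-Pastur limit with aspect ratio $1/\gamma < 1$. Summing the reciprocals of its nonzero eigenvalues and dividing by $n$ produces $\frac{1}{\gamma-1}$ in the limit, which is then multiplied by $\sigma^2$.

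The hard part is justifying convergence of the specific linear spectral statistics $\frac{1}{n}\tr(\hat\Sigma^\pinv)$ and $\frac{1}{n}\tr(\hat\Sigma^\pinv \Sigma)$ given that $x^{-1}$ is unbounded near zero; strictly, one needs control of the smallest nonzero eigenvalue (a Bai-Yin type lower bound) to ensure the integral against the limiting spectral density is the correct limit. Handling the projection term in the overparameterized regime rigorously also demands a concentration argument showing $\|\Phi\beta\|_2^2$ does not only agree in expectation but almost surely with $r^2/\gamma$; this is where the fourth-moment assumption and standard Gaussian-equivalence style arguments from \citet{hastie_surprises_2019} are invoked. Since the statement is attributed directly to that reference, the cleanest presentation is to assemble Lemma~\ref{thm:bias-variance-decomposition} with $\Sigma = I_m$, state the asymptotic evaluations of each individual trace and projection term as above, and defer the technical convergence details to the cited work.
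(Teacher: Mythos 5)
Your reconstruction is correct in outline and, in fact, the paper offers no proof of this lemma at all: it is stated as a direct import from \citet{hastie_surprises_2019}, so your plan --- specialize Lemma~\ref{thm:bias-variance-decomposition} to $\Sigma=I_m$, evaluate $\frac{1}{n}\tr(\hat\Sigma^{\pinv})$ via the Marchenko--Pastur law in each regime, and treat $\|\Phi\beta\|_2^2$, $\|\Pi\beta\|_2^2$ as projections onto the row space of $X$ --- is a faithful account of how the cited result is actually obtained, and your limit values all check out. One caveat: your statement that ``the isotropic distribution is invariant under orthogonal transformations'' is not true in general (isotropy only fixes the covariance; e.g.\ i.i.d.\ Rademacher coordinates are isotropic but not rotationally invariant), a distinction the paper itself emphasizes when motivating the stronger hypotheses of Theorem~\ref{thm:rate-beta}. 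So the uniform-random-subspace picture for $\Phi$ is a heuristic here, and the almost-sure convergence of $\beta^{\trnsp}\Pi\beta$ under only the fourth-moment assumption genuinely requires the quadratic-form convergence machinery of the cited work --- which you acknowledge by deferring to it, so this is a presentational imprecision rather than a gap in the intended argument.
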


Both terms  do have an asymptotic behavior that depends on the ratio  $\gamma$  between the number of features $m$ and the number of training datapoints $n$. In Fig.~\ref{fig:double-descent-l2-isotropic}, we illustrate one example of how the bounds in~\eqref{eq:bounds-linear-regression} can be combined with the asymptotic results above to obtain  asymptotic lower and upper bounds. The results obtained from the experiments closely follow these asymptotic bounds.

Let us already now informally point out that the second part of Lemma~\ref{thm:asymptotics-linear-regression} states that for a sufficiently large problem, in the overparameterized region, 
\begin{equation}
    \label{eq:approx_norm_beta}
     \|\hat\beta\|^2_2 \approx r^2\frac{1}{m/n} + \sigma^2 \frac{1}{m/n -1}.
   \end{equation}
We formalize the notion in Section~\ref{sec:non-asymptotic} using a concentration of measure.
It follows from it that even for a fixed signal magnitude, the norm of the estimated parameter decays with the number of parameters for overparameterized problems, i.e., $\|\hat\beta\|_2 = \bigO\left((\frac{m}{n})^{-\frac{1}{2}}\right)$. The  model becomes \emph{`smoother'} as more parameters are added to the model. This naturally yields models more robust to $\ell_2$ perturbations.  This can be observed in Fig.~\ref{fig:double-descent-l2-isotropic}: after the local minima $\gamma =  \frac{r}{r+ \sigma}$ in the standard risk, the risk is increasing with $\gamma$. However, the adversarial risk for, say, $\delta = 2$ is decreasing due to the tendency of the minimum-norm solution to select  smoother solutions. Moreover, while the standard risk does not have better results in the overparameterized region than in the underparameterized region, the adversarial risk in the overparameterized region can actually be better than the adversarial risk in the underparameterized region.

\subsection{Non-asymptotic results for the parameter $\ell_2$-norm}
\label{sec:non-asymptotic}

Central to our analysis of $\ell_2$-adversarial attacks is the idea that the parameter norm decays with the rate $\bigO\left((\frac{m}{n})^{-\frac{1}{2}}\right)$ even when the data generator parameter remains constant, i.e. $\|\beta\|_2 = r$. This intuition is formalized in the theorem below. We do not attempt to provide the most general result, instead, our choice is motivated by the fact that many of the steps in proving this theorem can be carried on to the $\ell_1$-norm.

\begin{theorem}
  \label{thm:rate-beta}
  Let the data be generated according to  Eq.~\eqref{eq:linear-data-model}. Assume additionally that $m > n$ and:
  \begin{enumerate}
  \item The noise $\epsilon$ and the regressor $x$ are sub-Gaussian.
  \item the regressor $x$ is sampled from a rotationally invariant distribution.
  \end{enumerate}
  Then there exists constants $c, C > 0$ such that for all $t < \sqrt{\frac{m}{n}}$ with probability ${1 - 2\exp(- c t^2 n)}$ we have:
  \begin{equation*}
     \|\mhat{\beta}\|_2 \le r \frac{1 + t}{\sqrt{\nicefrac{m}{n}}}  + \sigma \frac{1 + t}{C \sqrt{\nicefrac{m}{n}} - 1}.
  \end{equation*}
\end{theorem}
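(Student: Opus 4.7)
The plan is to start from the closed-form decomposition in Eq.~\eqref{eq:beta-closeform}, which in the overparameterized regime $m > n$ (where $XX^\trnsp$ is almost surely invertible under the stated assumptions) simplifies to
\begin{equation*}
  \mhat\beta = \Phi\beta + X^\trnsp(XX^\trnsp)^{-1}\epsilon,
\end{equation*}
so that by the triangle inequality $\|\mhat\beta\|_2 \le \|\Phi\beta\|_2 + \|X^\trnsp(XX^\trnsp)^{-1}\epsilon\|_2$. The two summands of the theorem correspond naturally to these two terms; I would control each with high probability and combine via a union bound at the end.

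For the signal term, rotational invariance of the distribution of $x$ implies that the row-span of $X$ is a uniformly (Haar) distributed random $n$-dimensional subspace of $\R^m$, and $\Phi$ is the orthogonal projector onto it. By orthogonal invariance, $\|\Phi\beta\|_2$ has the same distribution as $r\,\|\Phi e_1\|_2$ for any fixed unit vector $e_1$, and the standard Gaussian representation of Haar-projectors gives that $\|\Phi e_1\|_2^2$ is distributed as $Z_n/(Z_n + Z_{m-n})$ with independent $Z_n \sim \chi^2_n$ and $Z_{m-n} \sim \chi^2_{m-n}$. Laurent--Massart tail bounds for $\chi^2$ variables then give $\|\Phi e_1\|_2^2 \le (n/m)(1+t)^2$ with probability at least $1 - \exp(-c_1 t^2 n)$ in the stated range $t \le \sqrt{m/n}$, yielding $\|\Phi\beta\|_2 \le r(1+t)/\sqrt{m/n}$.

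For the noise term, use the identity $\|X^\trnsp(XX^\trnsp)^{-1}\epsilon\|_2^2 = \epsilon^\trnsp(XX^\trnsp)^{-1}\epsilon \le \|\epsilon\|_2^2/\sigma_{\min}(X)^2$. Since $\epsilon$ is independent of $X$, the two factors can be concentrated separately: sub-Gaussian norm concentration gives $\|\epsilon\|_2 \le \sigma\sqrt n\,(1+t)$ with probability $1 - \exp(-c_2 t^2 n)$, and the classical lower-tail estimate for the smallest singular value of a tall sub-Gaussian matrix with i.i.d.\ rotationally invariant rows gives $\sigma_{\min}(X) \ge C(\sqrt m - \sqrt n)$ with probability $1 - \exp(-c_3 n)$. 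Writing $\sqrt m - \sqrt n = \sqrt n\,(\sqrt{m/n} - 1)$, the noise term is bounded by $\sigma(1+t)/[C(\sqrt{m/n} - 1)]$, which reduces to the second summand of the theorem after absorbing constants into $C$.

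The main technical obstacle is the smallest-singular-value estimate for $X$: this is the one place where both the sub-Gaussian tail and the rotational invariance assumptions are genuinely needed, and its proof goes through an $\varepsilon$-net argument on the unit sphere combined with a sub-Gaussian concentration / chaining bound (Bai--Yin, Rudelson--Vershynin). The other two concentration bounds are routine. A final union bound over the three events, with $c = \min(c_1, c_2, c_3)$ after mild constant adjustment, delivers the stated $1 - 2\exp(-c t^2 n)$ failure probability, and the structure of the argument (Hölder-style triangle inequality, chi-squared concentration for the projector, operator-norm control for the noise) is precisely what the authors indicate can be ported to the $\ell_1$ setting later in the paper.
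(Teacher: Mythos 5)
Your proposal is correct and follows essentially the same route as the paper's proof: the same decomposition of $\hat{\beta}$ from Eq.~\eqref{eq:beta-closeform} with a triangle-inequality split (Proposition~\ref{thm:concentration_around_proj_norm}), concentration of $\|\Phi\beta\|_2$ for the Haar-random projection induced by rotational invariance (the paper cites Lemma~\ref{thm:bounds-norm} where you re-derive the same fact via the Beta/chi-square representation), sub-Gaussian concentration of $\|\epsilon\|_2$, and extreme-singular-value control of $X$ equivalent to Lemma~\ref{thm:bounds-trace}, all combined by a union bound. The only differences are presentational (operator-norm of $\hat{\Sigma}^\pinv$ versus $\sigma_{\min}(X)$, cited versus re-proved projection lemma), not a different method.
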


This theorem provides a non-asymptotic result and an exponential rate of convergence. It strengthens the assumptions from the previous section in two ways. First, it assumes the variables are sub-Gaussian, which is used to obtain an exponential rate of convergence. If this assumption is relaxed lower rates of concentration are obtained. For instance,~\citet{hastie_surprises_2019} does not assume this, which results in a convergence rate of $n^{-1/7}$. 

Secondly, it assumes the regressor to be rotationally invariant. Hence, given an orthogonal matrix $Q$, multiplication by this matrix does not change the distribution, i.e. $x\sim Qx$. Standard examples where~$x$ is rotationally invariant are values sampled from standard Gaussian or from the uniform distribution over the sphere. Rotational invariance implies isotropy, but not all isotropic distributions are rotationally invariant. Hence, this is again a stronger assumption.

From Lemma~\ref{thm:bias-variance-decomposition} we have:
\begin{equation*}
    \E{\epsilon}{\|\mhat{\beta}\|_2^2} = \|\Phi \beta\|^ 2_2 +  \frac{\sigma^2}{n} \tr(\hat{\Sigma}^\pinv).
  \end{equation*}

The next Lemma gives concentration inequalities for the eigenvalues of $\hat{\Sigma}^\pinv$. If we use $\lambda_i(\hat{\Sigma}^\pinv)$ to denote the $i$-th eigenvalue of $\hat{\Sigma}^\pinv$, we have that $\tr(\hat{\Sigma}^\pinv) =  \frac{1}{n}\sum_i \lambda_i(\hat{\Sigma}^\pinv)$. Hence, the following result immediately implies that the second term of the above expression concentrates around $\Theta((\frac{m}{n})^{-1})$. The proof is provided in the Supplementary Material.
\begin{lemma}
  \label{thm:bounds-trace}
  Let $x_i\in \R^{m} $ be independently sampled sub-Gaussian random vectors, $\hat{\Sigma} = \frac{1}{n} \sum_{i=1}^n x_i x_i^\trnsp$, and let $m  > n$. Then there exists a constant $C>0$ such that, with probability greater than $1 - 2 \exp(-m)$,
\begin{equation*}
   \frac{1}{\left(C \sqrt{\frac{m}{n}}+1\right)^2} \le \lambda_i(\hat{\Sigma}^\pinv) \le \frac{1}{\left(C\sqrt{\frac{m}{n}} - 1\right)^2}.
\end{equation*}
\end{lemma}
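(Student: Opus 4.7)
The plan is to reduce the claimed eigenvalue bound to a standard non-asymptotic two-sided bound on the extreme singular values of the random matrix $X \in \R^{n \times m}$ whose rows are the $x_i^\trnsp$. Since $\hat{\Sigma} = \tfrac{1}{n} X^\trnsp X$, its non-zero eigenvalues coincide with $s_i(X)^2/n$, where $s_i(X)$ denote the non-zero singular values of $X$. Because $m > n$, there are at most $n$ such singular values, and the non-zero eigenvalues of $\hat{\Sigma}^\pinv$ equal $n / s_i(X)^2$. It therefore suffices to establish a two-sided bound of the shape
\[
  c_1 \sqrt{m} - c_2 \sqrt{n} \;\le\; s_{\min}(X) \;\le\; s_{\max}(X) \;\le\; c_1' \sqrt{m} + c_2' \sqrt{n}
\]
with the prescribed probability; dividing by $\sqrt{n}$, squaring, and inverting then yields the claimed bounds on $\lambda_i(\hat{\Sigma}^\pinv)$, with the single constant $C$ in the statement absorbing $c_1, c_2, c_1', c_2'$.

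For the singular value bound I would invoke a standard concentration inequality for random matrices with independent, mean-zero, isotropic, sub-Gaussian rows: for every $t \ge 0$,
\[
  \sqrt{m} - C_0(\sqrt{n}+t) \;\le\; s_{\min}(X) \;\le\; s_{\max}(X) \;\le\; \sqrt{m} + C_0(\sqrt{n}+t)
\]
holds with probability at least $1 - 2\exp(-c t^2)$, for constants $c, C_0 > 0$ depending only on the sub-Gaussian norm. The isotropy hypothesis is not explicit in the statement of the lemma, but it is consistent with the rotational invariance hypothesis of the companion Theorem~\ref{thm:rate-beta} in whose proof this lemma will be used. Choosing $t = \sqrt{m/c}$ makes the failure probability at most $2 \exp(-m)$ as required and, provided $m/n$ exceeds a constant threshold determined by $c$ and $C_0$, keeps the leading $\sqrt{m}$ term dominant so that $s_{\min}(X)$ remains bounded away from zero.

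The step I expect to take the most care over is the cosmetic one of matching the resulting inequality to the precise algebraic form of the statement, where $C$ multiplies $\sqrt{m/n}$ rather than appearing next to the $\pm 1$ term. The rearrangement is routine: factoring out $C_0$ turns $(\sqrt{m/n} \pm C_0)^2$ into $C_0^2 ((1/C_0)\sqrt{m/n} \pm 1)^2$, and the outer constant $C_0^2$ can be absorbed by a further redefinition of $C$, at the price of a constant lower bound on the ratio $m/n$ (which is consistent with the regime where the lemma is applied). Beyond this, the only thing to verify is that the chosen $t$ does not drive $s_{\min}(X)$ to zero, which is guaranteed once $m/n$ exceeds the aforementioned threshold.
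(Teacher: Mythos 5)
Your proposal follows essentially the same route as the paper's proof: there, too, the claim is reduced to a two-sided concentration bound on the singular values of $X$ taken from \citet[Theorem 4.6.1]{vershynin_high-dimensional_2018}, the parameter $t$ is set proportional to $\sqrt{m}$ to obtain the failure probability $2\exp(-m)$, and the nonzero singular values are then converted into the eigenvalues $\lambda_i(\hat{\Sigma}^\pinv) = n/s_i(X)^2$, with all constants absorbed into the single $C$ of the statement. One point deserves attention, and it is the only substantive difference: the inequality you invoke, centered at $\sqrt{m}$ with deviation $C_0(\sqrt{n}+t)$, is not what the independent-rows theorem gives when applied to $X$ itself, whose $n$ independent rows live in $\R^m$; in that orientation the centering is $\sqrt{n}$ and the deviation is $C(\sqrt{m}+t)$, which is how the paper writes it (and whose lower bound is vacuous once $m>n$, a looseness the paper glosses over when it squares it). Your orientation is the one that actually produces a nontrivial lower bound on the nonzero singular values, and hence the upper bound on $\lambda_i(\hat{\Sigma}^\pinv)$, but it amounts to applying the theorem to $X^\trnsp$, whose rows are the feature coordinates across samples and are not independent under the stated hypotheses alone; one needs, e.g., i.i.d. sub-Gaussian entries or the rotational invariance assumed in Theorem~\ref{thm:rate-beta}, which you partially acknowledge by flagging the missing isotropy hypothesis. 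With that assumption made explicit, your argument—including the choice $t \asymp \sqrt{m}$, the requirement that $m/n$ exceed a constant threshold so the bound is nonvacuous, and the final constant-matching—coincides with the paper's proof up to bookkeeping, and is if anything slightly more careful about where it can fail.
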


\begin{proof}
From the lemma statement: $x_i$ are independently sampled sub-Gaussian vectors. Let $X$ be a matrix containing the vectors  $x_i$  as its rows.
Let $s_i(X)$ denote the $i$-th sigular value of $X$.
From~\citet[Theorem 4.6.1]{vershynin_high-dimensional_2018} we have that there exist a constant $C$ such that with probability larger then $1- 2 \exp(- t^2)$,
\begin{equation*}
   \sqrt{n} - C \left(\sqrt{m} + t\right) \le s_i(X) \le  \sqrt{n} + C \left(\sqrt{m} + t\right), \forall i = 1, \dots, n
 \end{equation*}
Set $t = \sqrt{m}$ then, since $\lambda_i(\hat{\Sigma}) = \left(\frac{1}{\sqrt{n}} s_{(n-i)}(X)\right)^{-2}$ we obtain with probability greater than $1- 2 \exp(-m)$ that
\begin{equation*}
   \frac{1}{(1 + C \left(\sqrt{\frac{m}{n}}\right))^2}\le \lambda_i(\hat{\Sigma}) \le  \frac{1}{(1 - C \left(\sqrt{\frac{m}{n}} \right))^2}, \,  \forall i = 1, \dots, n.
 \end{equation*}
Finally, since $m > n$, we have that  $1- 2 \exp(-m) >  1- 2 \exp(-n)$ and the result follows.
\end{proof}

Next, we turn to the analysis of the first term in Eq.~\eqref{eq:l2norm-decomposition}. In the case $m>n$, $\Phi$ is a projection matrix that projects a vector from $\R^m$ into a subspace of dimension $n$.
The set of all possible subspaces of dimension $n$ in $\R^m$ is well studied and known as the Grassmannian manifold $G(m, n)$. 
There is a one-to-one relationship between the projection matrices $\Phi$ and the points in this manifold.
For the case when $X$ is rotationally invariant, we have, given any orthogonal matrix $Q$ that
\begin{equation*}
    \Phi = (X^\trnsp X)^\pinv(X^\trnsp X) \sim Q(X^\trnsp X)^\pinv(X^\trnsp X)Q^\trnsp = Q\Phi Q^\trnsp.
  \end{equation*}
Hence, the subspace is invariant to rotation and it is possible to establish that the matrix $\Phi$ is a random projection that projects into a subspace sampled uniformly (i.e., Haar measure) from the Grassmannian $G(m, n)$. The next result is from~\cite[Lemma 5.3.2]{vershynin_high-dimensional_2018} and it states that the norm $\|\Phi \beta\|_2$ of the projection of $\beta$ into this $n$ dimensional subspace concentrates around $\sqrt{\frac{n}{m}} \|\beta\|$.

\begin{lemma}[{{\citet[Lemma 5.3.2]{vershynin_high-dimensional_2018}}}]
  \label{thm:bounds-norm}
 Let $\beta\in \R^m$ be a vector and $\Phi\in \R^{m \times m}$ be a projection from $\R^m$ onto a random $n$-dimensional subspace uniformly sampled from $G(m, n)$. Then,
\begin{enumerate}
    \item $\E{\Phi}{\|\Phi\beta\|_2^2} = \frac{n}{m}  \|\beta\|_2^2$;
    \item There exist a constant $c$, such that with probability greater then $1 - 2 \exp(-ct^2 n)$, we have:
    \begin{equation}
        (1-t)\sqrt{\frac{n}{m}} \|\beta\|_2 \le \|\Phi\beta\|_2\le (1+t) \sqrt{\frac{n}{m}}  \|\beta\|_2
    \end{equation}
\end{enumerate}
\end{lemma}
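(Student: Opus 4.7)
The plan is to reduce the problem to a concentration statement about chi-distributed random variables by exploiting the rotational invariance of the Haar measure on the Grassmannian $G(m,n)$.

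Part~1 follows from a symmetry and trace argument. For every orthogonal matrix $Q$ the invariance of the Haar measure yields $\Phi \stackrel{d}{=} Q\Phi Q^\trnsp$, so $\E{\Phi}{\Phi}$ commutes with every orthogonal matrix and must therefore be a scalar multiple of the identity, $\E{\Phi}{\Phi}=\alpha I_m$. Taking traces of both sides and using $\tr(\Phi)=n$ for any $n$-dimensional projection forces $\alpha=n/m$, and hence $\E{\Phi}{\|\Phi\beta\|_2^2}=\beta^\trnsp \E{\Phi}{\Phi}\beta = (n/m)\|\beta\|_2^2$.

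For part~2, the first step is to convert the problem from ``random subspace, fixed vector'' to ``fixed subspace, random direction''. Writing $\Phi = U^\trnsp P_n U$ with $U$ Haar-distributed on the orthogonal group and $P_n$ the projector onto the first $n$ coordinates, I obtain $\|\Phi\beta\|_2 = \|P_n U\beta\|_2$. Since $U\beta/\|\beta\|_2$ is uniform on $S^{m-1}$, I realise it as $g/\|g\|_2$ with $g\sim \mathcal{N}(0,I_m)$, so that
\begin{equation*}
    \|\Phi\beta\|_2 \;\stackrel{d}{=}\; \|\beta\|_2 \cdot \frac{\|g_{1:n}\|_2}{\|g\|_2},
\end{equation*}
where $g_{1:n}$ denotes the first $n$ coordinates. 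Proving the desired two-sided inequality is then equivalent to showing that the ratio $\|g_{1:n}\|_2/\|g\|_2$ concentrates around $\sqrt{n/m}$.

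The numerator and denominator are chi-distributed with $n$ and $m$ degrees of freedom. Lipschitz concentration for Gaussian norms (or Laurent--Massart for chi-squared) gives, for any $s>0$, that $\bigl|\|g_{1:n}\|_2 - \sqrt{n}\bigr|$ and $\bigl|\|g\|_2 - \sqrt{m}\bigr|$ each exceed $s$ with probability at most $2\exp(-c s^2)$. Choosing $s=\tfrac{t}{2}\sqrt{n}$ and applying a union bound, both deviations are controlled simultaneously with probability at least $1-4\exp(-c' t^2 n)$. On this event, the ratio $\|g_{1:n}\|_2/\|g\|_2$ lies between $\sqrt{n/m}$ times $(1-t/2)/(1+(t/2)\sqrt{n/m})$ and $\sqrt{n/m}$ times $(1+t/2)/(1-(t/2)\sqrt{n/m})$. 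Using $\sqrt{n/m}\le 1$ and restricting $t$ to a fixed interval, elementary algebra bounds both fractions inside $[1-t,1+t]$ after absorbing constants into $c$.

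The main obstacle is getting the exponent to scale with $n$ rather than $m$: the slack $s$ must be tuned to the \emph{smaller} chi variable $\|g_{1:n}\|_2$ so that the probability bound is $\exp(-\Theta(t^2 n))$ while the induced deviation of the ratio is still of relative order $t$. A final rescaling of $t$ and of the constants in the exponent then yields the lemma as stated.
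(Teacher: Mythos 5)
This lemma is imported by the paper verbatim from \citet[Lemma 5.3.2]{vershynin_high-dimensional_2018} without proof, and your argument correctly reconstructs the standard proof of that cited result: the symmetry-plus-trace computation $\E{\Phi}{\Phi}=\frac{n}{m}I_m$ for part~1, the reduction via $\Phi = U^\trnsp P_n U$ and $U\beta \stackrel{d}{=} \|\beta\|_2\, g/\|g\|_2$ to concentration of the ratio $\|g_{1:n}\|_2/\|g\|_2$ around $\sqrt{n/m}$, and the choice $s=\tfrac{t}{2}\sqrt{n}$ that makes the exponent scale with $n$ rather than $m$. The only step requiring the final rescaling you allude to is the upper fraction $(1+t/2)/(1-(t/2)\sqrt{n/m})$, which for $t\le 1$ is bounded by $1+2t$ rather than $1+t$, so one substitutes $t\mapsto t/2$ and absorbs the resulting factors (and the union-bound prefactor $4$) into the constant $c$ --- exactly the ``absorbing constants'' step you flag, so the proof is complete.
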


The following proposition will also be needed.
\begin{proposition}
      \label{thm:concentration_around_proj_norm}
  We have that:
  \begin{equation}
    \left|\|\mhat{\beta}\|_2 - \|\Phi \beta\|_2\right| \le \frac{1}{\sqrt{n}} \sqrt{\|\hat{\Sigma}^\pinv\|_2}\|\epsilon\|_2.
  \end{equation}
\end{proposition}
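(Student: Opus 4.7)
The plan is to start from the closed-form decomposition in Eq.~\eqref{eq:beta-closeform} and invoke the reverse triangle inequality. Writing $\hat{\beta} = \Phi\beta + (X^\trnsp X)^\pinv X^\trnsp \epsilon$ and noting that $(X^\trnsp X)^\pinv = \tfrac{1}{n}\hat{\Sigma}^\pinv$, we obtain
\begin{equation*}
\left|\|\hat{\beta}\|_2 - \|\Phi\beta\|_2\right| \le \|\hat{\beta} - \Phi\beta\|_2 = \left\|\tfrac{1}{n}\hat{\Sigma}^\pinv X^\trnsp \epsilon\right\|_2.
\end{equation*}
So the whole task reduces to controlling the norm of the linear operator $A = \tfrac{1}{n}\hat{\Sigma}^\pinv X^\trnsp$ applied to $\epsilon$.

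Next I would identify $A$ spectrally via a (thin) SVD $X = UDV^\trnsp$, where $U$ is $n\times n$ orthogonal, $D$ is $n\times n$ diagonal (for the generic full-row-rank case $m>n$), and $V$ is $m\times n$ with orthonormal columns. A direct computation gives $\hat{\Sigma}^\pinv = n V D^{-2} V^\trnsp$ and hence $A = VD^{-1}U^\trnsp$. Since $V^\trnsp V = I_n$, it follows that $\|A\epsilon\|_2 = \|D^{-1} U^\trnsp \epsilon\|_2$, and the largest diagonal entry of $D^{-1}$ is $1/d_{\min}$, with $d_{\min}$ the smallest nonzero singular value of $X$.

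To finish, I would relate $d_{\min}$ to $\|\hat{\Sigma}^\pinv\|_2$. By definition $\hat{\Sigma} = \tfrac{1}{n}X^\trnsp X$ has nonzero eigenvalues $d_i^2/n$, so the operator norm $\|\hat{\Sigma}^\pinv\|_2$ is the reciprocal of the smallest nonzero eigenvalue, namely $n/d_{\min}^2$. Combined with the orthogonality $\|U^\trnsp \epsilon\|_2 \le \|\epsilon\|_2$, we get
\begin{equation*}
\|A\epsilon\|_2^2 \le \tfrac{1}{d_{\min}^2}\|U^\trnsp\epsilon\|_2^2 \le \tfrac{1}{n}\|\hat{\Sigma}^\pinv\|_2\,\|\epsilon\|_2^2,
\end{equation*}
which is exactly the claimed bound after taking square roots.

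There is essentially no obstacle: the whole argument is three lines once the algebraic identity $(X^\trnsp X)^\pinv X^\trnsp = VD^{-1}U^\trnsp$ is in hand. The only subtle point is being careful with the pseudo-inverse in the rank-deficient or non-square case, which is why I would state the SVD in thin form so that $V^\trnsp V = I_n$ and $D$ is invertible on its effective support; handling the possibly rank-deficient case just replaces $d_{\min}$ by the smallest nonzero singular value and $D^{-1}$ by $D^\pinv$, leaving the inequality unchanged.
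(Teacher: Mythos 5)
Your proof is correct and follows essentially the same route as the paper: the same decomposition $\hat{\beta} = \Phi\beta + \tfrac{1}{n}\hat{\Sigma}^\pinv X^\trnsp\epsilon$, the reverse triangle inequality, and a bound on the operator norm of $\tfrac{1}{n}\hat{\Sigma}^\pinv X^\trnsp$. The only cosmetic difference is that you evaluate that operator norm via an explicit thin SVD, whereas the paper uses $\|A\|_2 = \max_i\sqrt{\lambda_i(A^\trnsp A)}$ together with the pseudo-inverse identity $\hat{\Sigma}^\pinv\hat{\Sigma}\hat{\Sigma}^\pinv = \hat{\Sigma}^\pinv$ — the same spectral fact in different clothing.
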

\begin{proof}
  From Eq.~\eqref{eq:beta-closeform} and the triangular inequality:
\begin{equation}
  \left| \|\mhat{\beta}\|_2 - \|\Phi \beta\|_2 \right|\le  \frac{1}{\sqrt{n}} \left\|\frac{1}{\sqrt{n}}\hat{\Sigma}^\pinv X^\trnsp \epsilon\right\|_2
\end{equation}
In turn, we have  that
\begin{equation*}
  \left\|\frac{1}{n}\hat{\Sigma}^\pinv X^\trnsp \epsilon\right\|_2 \le \left\|\frac{1}{\sqrt{n}}\hat{\Sigma}^\pinv X^\trnsp \right\|_2\| \epsilon\|_2.
\end{equation*}
Here, $\|\cdot\|_2$ is used to denote the operator norm, such that for a matrix $A\in \R^ {m \times n}$, $\|A\|_2 = \max_i \sqrt{\lambda_{i}(A^\trnsp A)}$. Hence:
\begin{equation*}
   \left\|\frac{1}{n}\hat{\Sigma}^\pinv X^\trnsp \right\|_2 =  \max_i \sqrt{\lambda_i(\hat{\Sigma}^\pinv \hat{\Sigma}  \hat{\Sigma}^\pinv)} =   \sqrt{\|\hat{\Sigma}^\pinv\|_2},
\end{equation*}
where the second equality follows from direct use of the property $ \hat{\Sigma}^\pinv \hat{\Sigma}  \hat{\Sigma}^\pinv =  \hat{\Sigma}^\pinv$ and the fact that  $\hat{\Sigma}^\pinv$ is positive semidefinite.
\end{proof}
Equipped with the proposition and the lemmas we are now ready to prove the theorem. Note that  Lemma~\ref{thm:bias-variance-decomposition} could provide another possible route to prove similar results and (maybe) tighter bounds.
Nonetheless, here, we choose to use Proposition~\ref{thm:concentration_around_proj_norm} for two reasons: 1) it can  easily be combined with non-asymptotic results; 2) the argument applied above can be extended for any other $p$-norms.  We also point out that an analogous procedure could be used to provide a lower bound on $\|\mhat{\beta}\|_2$.
\begin{proof}[Proof of Theorem~\ref{thm:rate-beta}]
  From Proposition~\ref{thm:concentration_around_proj_norm} we have $\|\mhat{\beta}\|_2   \le \|\Phi \beta\|_2 + \frac{\|\epsilon\|_2}{\sqrt{n}} \sqrt{\lambda_{\max}(\hat{\Sigma}^\pinv)}$. Due to the fact that the noise is sub-Gaussian, a straightforward application of Theorem~3.1.1 from~\cite{vershynin_high-dimensional_2018} implies that $\left|\frac{\|\epsilon\|_2}{\sqrt{n}} - \sigma^2\right| \le t$ with probability greater  than $1 - 2\exp(c t^2 n)$. This together with Lemma~\ref{thm:bounds-trace} yields the desired upper bound for the second term. The first term can be bounded using Lemma~\ref{thm:bounds-norm}. The result follows.
\end{proof}

\subsection{$\ell_p$ adversaries}
\label{sec:other-ell-pnorms}

\begin{figure}
  \centering
  
    \subfloat[Risk]{\includegraphics[width=0.45\textwidth]{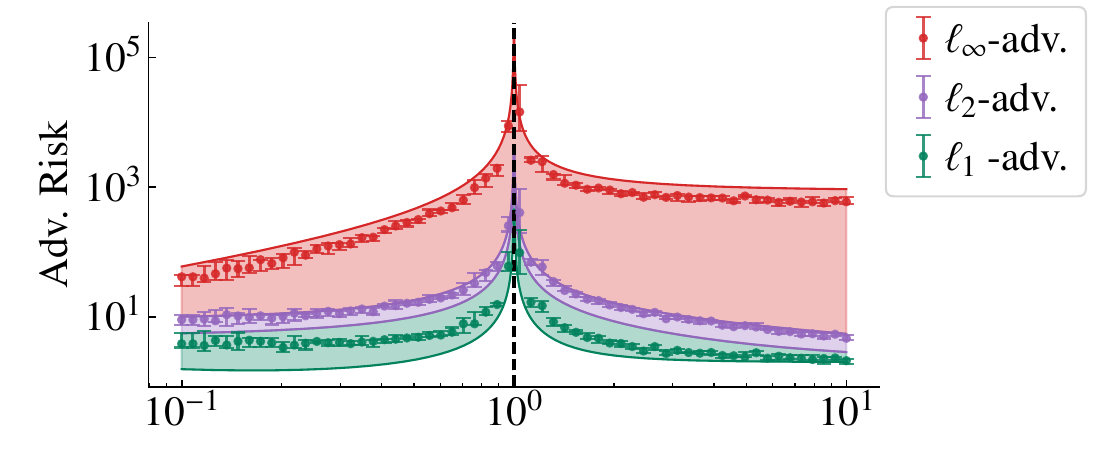}}\\
    \subfloat[Parameter norm]{\includegraphics[width=0.45\textwidth]{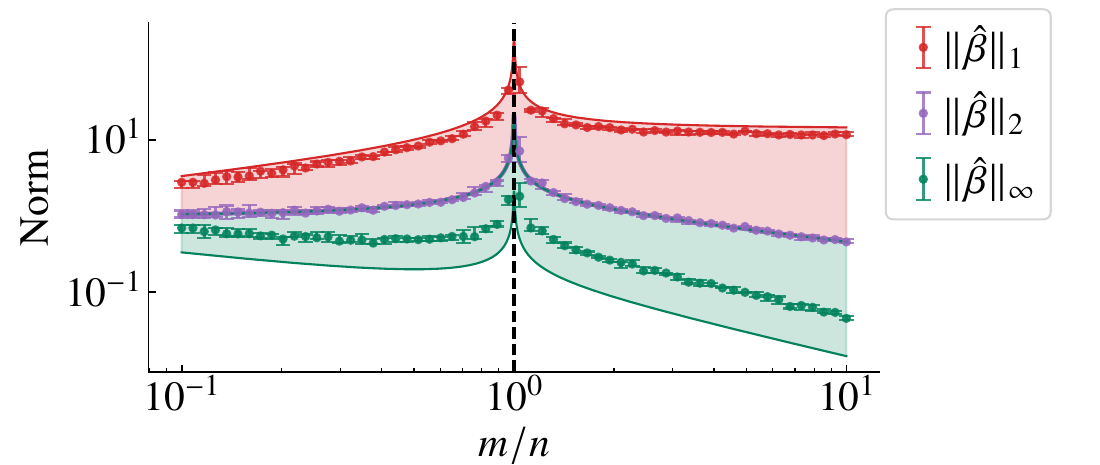}}
    \caption{\emph{Adversarial $\ell_p$-risk, isotropic  features.} The error bars give the median and the 0.25 and 0.75 quantiles obtained from numerical experiments (10 realizations) with a fixed training dataset of size $n=100$ and for adversarial disturbances of magnitude $\delta = 2$. The shaded region in \textcolor{MyBlue}{blue} gives the upper and lower bounds for $p=2$. The bounds obtained from norm inequalities~\eqref{eq:relation-between-p-norm}  for  $p, q \in \{1, \infty\}$ are given in \textcolor{MyRed}{red} and \textcolor{MyGreen}{green}. The results are for $r^2 =1$, $\sigma^2 = 1$.}
    \label{fig:double-descent-lp}
  \end{figure}

We now turn to the study of $\ell_p$-adversarial attacks when $p \not= 2$. As in Eq.~\eqref{eq:bounds-linear-regression}, let $q$ be the complement of $p$.  The following well-known relationship between vector norms will be useful in our development.

\begin{lemma}[Relationship between vector $p$-norms]
\label{thm:relation-between-p-norm}
Let $p$ and $q$ be values in the range $[1, \infty]$ and $\mhat{\beta}\in\R^m$. Assume that $q > p$, then:
\begin{equation}
    \label{eq:relation-between-p-norm}
    \|\mhat{\beta}\|_q \le  \|\mhat{\beta}\|_p \le  m^{1/p - 1/q}\|\mhat{\beta}\|_q.
\end{equation}
\end{lemma}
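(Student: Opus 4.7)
The plan is to prove the two inequalities separately using classical arguments; this is essentially a textbook fact, and the proof is short, so the main task is to be clean about the corner case $q = \infty$. I will first dispose of the finite case $1 \le p < q < \infty$ and then treat $q = \infty$ separately.

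For the left inequality $\|\hat\beta\|_q \le \|\hat\beta\|_p$, the plan is a normalization trick. If $\hat\beta = 0$ the claim is trivial, so assume $\hat\beta \ne 0$ and set $u_i = |\hat\beta_i|/\|\hat\beta\|_p$, so that $\sum_i u_i^p = 1$ and hence each $u_i \le 1$. Since $q > p$ and $u_i \in [0,1]$, we have $u_i^q \le u_i^p$ coordinate-wise, and summing over $i$ gives $\sum_i u_i^q \le 1$. Taking $q$-th roots and multiplying back by $\|\hat\beta\|_p$ yields $\|\hat\beta\|_q \le \|\hat\beta\|_p$.

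For the right inequality $\|\hat\beta\|_p \le m^{1/p - 1/q}\|\hat\beta\|_q$, the plan is to apply H\"older's inequality with the conjugate pair $(q/p,\, q/(q-p))$ to the product $|\hat\beta_i|^p \cdot 1$:
\begin{equation*}
\sum_{i=1}^m |\hat\beta_i|^p \cdot 1 \le \left(\sum_{i=1}^m |\hat\beta_i|^q\right)^{p/q} \left(\sum_{i=1}^m 1\right)^{(q-p)/q} = \|\hat\beta\|_q^{\,p}\, m^{(q-p)/q}.
\end{equation*}
Taking $p$-th roots gives exactly $\|\hat\beta\|_p \le m^{1/p - 1/q}\|\hat\beta\|_q$.

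Finally, for $q = \infty$, I would verify both inequalities directly. The left one, $\|\hat\beta\|_\infty \le \|\hat\beta\|_p$, follows from $|\hat\beta_j|^p \le \sum_i |\hat\beta_i|^p$ for the index $j$ achieving the maximum, and then taking $p$-th roots. The right one, $\|\hat\beta\|_p \le m^{1/p}\|\hat\beta\|_\infty$, is immediate from $|\hat\beta_i|^p \le \|\hat\beta\|_\infty^p$ summed over the $m$ coordinates. There is no real obstacle here; the only mild care is writing $1/p - 1/q$ consistently with the convention $1/\infty = 0$ so the exponents line up in the $q = \infty$ case.
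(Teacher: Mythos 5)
Your proof is correct. The right-hand inequality, via H\"older's inequality with the conjugate pair $(q/p,\, q/(q-p))$ applied to $|\hat{\beta}_i|^p\cdot 1$, is exactly the route the paper indicates. For the left-hand inequality the paper cites Minkowski's inequality, whereas you use the standard normalization argument (rescale so $\|\hat{\beta}\|_p = 1$ and use $u_i^q \le u_i^p$ for $u_i \in [0,1]$); both are one-line routes to the same monotonicity fact, and your version has the minor advantages of being self-contained and of treating the $q=\infty$ endpoint explicitly, which the paper leaves implicit.
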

The leftmost inequality follows from an application of Minkowski's inequality and the rightmost from an application of H\"older's inequality.

 The asymptotic results from Lemma~\ref{thm:asymptotics-linear-regression} to compute $R$ and $L_2$ can now be used in conjunction with the above inequalities to find the upper and lower bounds on the adversarial risk.
Hence, for any $1 \le p < 2$, the upper bound is the same as the upper bound obtained for $\ell_2$ attacks. However, there is a new multiplicative term in the lower bound. For instance, the $\ell_1$ adversarial risk is bounded by
\begin{equation}
    \label{eq:bounds-linear-regression_l1}
    R(\mhat{\beta}) + \frac{\delta^2}{m}\|\mhat{\beta}\|_2^2 \le R_1^{\text{adv}}(\mhat{\beta}) \le \left(\sqrt{R(\mhat{\beta})} + \delta  \|\mhat{\beta}\|_2 \right)^2.
\end{equation}
On the other hand, for $\ell_p$-adversarial attacks with $p>2$, we obtain an asymptotic upper bound that grows with $m^{1 - \frac{2}{p}}$. As an example, for $\ell_\infty$ attacks,
\begin{equation}
    \label{eq:bounds-linear-regression_linfty}
    R(\mhat{\beta}) + \delta^2 \|\mhat{\beta}\|_2^2 \le R_\infty^{\text{adv}}(\mhat{\beta}) \le \left(\sqrt{R} + \delta  \sqrt{m }\|\mhat{\beta}\|_2\right)^2.
\end{equation}
In Fig.~\ref{fig:double-descent-lp}(a) we illustrate the bounds obtained in this way.
We note that the  $\ell_\infty$-adversarial risk follows  the upper bound closely.
Moreover, Lemma~\ref{thm:relation-between-p-norm} implies that
\begin{equation}
  \|\hat\beta\|_2 \le \|\hat\beta\|_1 \le \sqrt{m}  \|\hat\beta\|_2 .
\end{equation}
From Fig.~\ref{fig:double-descent-lp}(b) we see that the $\ell_1$-norm of the estimated parameter seems to follow the upper bound closely. At the same time, the adversarial risk is also close to the upper bound closely. Next, we provide some insight into this observation, by following the same steps used in the non-asymptotic analysis of the $\ell_2$ parameter norm.

Lemma~\ref{thm:bounds-norm} show how  $\|\Phi\beta\|_2$  concentrate around $\sqrt{\frac{n}{m}}\|\beta\|_2$. One might wonder whether similar concentration inequalities can be obtained also for the $\ell_1$-norm.  In Fig.~\ref{fig:conjecture-l1} we illustrate the experiments for both $\|\Phi\beta\|_1$ and $\|\Phi\beta\|_2$. The first plot just illustrates the results known for the $\ell_2$-norm  from Lemma~\ref{thm:bounds-norm}; the second plot suggests that the $\ell_1$-norm of the projection has mean $c\sqrt{n} \|\beta\|_2$. From the experiments, we also estimate that $c\approx 0.8$. We state this result as a conjecture.
\begin{conjecture}
  \label{thm:bounds-norml1}
Let $\beta\in \R$ and $\Phi$ be a projection from $\R^m$ onto a random $n$-dimensional subspace uniformly sampled from the Grassmannian manifold $G(m, n)$.  Then,
$\E{\Phi}{\|\Phi \beta\|_1} = c\sqrt{n} \|\beta\|_2$.
\end{conjecture}
Since  $\|\Phi\beta\|_1 $ concentrates around its mean, the conjecture also implies a high probability statement.  Indeed, in  Appendix~\ref{sec:additional-proofs} it  is  proved that with probability greater than $1 - \exp(-t^2 n)$
\begin{equation} 
\label{thm:concentration-around-the-mean-norml1}
    |\|\Phi\beta\|_1 - \E{\Phi}{\|\Phi \beta\|_1} | < t \sqrt{n}\|\beta\|_2.
\end{equation}
Combined with this result, the conjecture implies that with probability greater than  $1 - \exp(-t^2 n)$,
\begin{equation}
    |\|\Phi\beta\|_1 -  c\sqrt{n} \|\beta\|_2 | < t \sqrt{n} \|\beta\|_2.
\end{equation}

\noindent
We point out that this result does have important consequences for the study of overparameterized models. We obtain smoother models by increasing the number of parameters for the $\ell_2$-norm, but the conjecture implies that this does not happen for the $\ell_1$-norm.
Indeed, it implies that with high probability 
\begin{equation}
 (c - t)\sqrt{m} \|\beta\|_2 < \frac{\|\Phi\beta\|_1}{\|\Phi\beta\|_2} <  (c + t)\sqrt{m} \|\beta\|_2.
\end{equation}
Now, using exactly the same argument as in Lemma~\ref{thm:concentration_around_proj_norm}, we obtain
\begin{equation*}
  \|\Phi \beta\|_1 - \|\epsilon\|_2 \sqrt{\tfrac{m}{n}\|\hat{\Sigma}^\pinv\|_2} \le  \|\mhat{\beta}\|_1 \le  \|\Phi \beta\|_1 +  \|\epsilon\|_2 \sqrt{\tfrac{m}{n}\|\hat{\Sigma}^\pinv\|_2} 
\end{equation*}
The conjecture implies that $ \|\Phi \beta\|_1 = \Theta(\sqrt{n}) \|\beta\|_2$, and Lemma~\ref{thm:bounds-trace} implies that the second term is $\Theta(\sqrt{n})  \|\epsilon\|_2$. Hence, for a sufficiently large signal-to-noise ratio ($r > \sigma$) it follows from the conjecture that $\|\mhat{\beta}\|_1 = \Theta(\sqrt{n})$. Since we obtained in the Theorem~\ref{thm:rate-beta} that $\|\mhat{\beta}\|_2 = \Theta(\sqrt{\frac{n}{m}})$, it follows that $\|\mhat{\beta}\|_1 =  \Theta(\sqrt{m})\|\mhat{\beta}\|_2$, which is consistent with the results we are experimentally observing.

 \begin{figure}
    \centering
    \subfloat[$\ell_2$-norm of projection]{\includegraphics[width=0.4\textwidth]{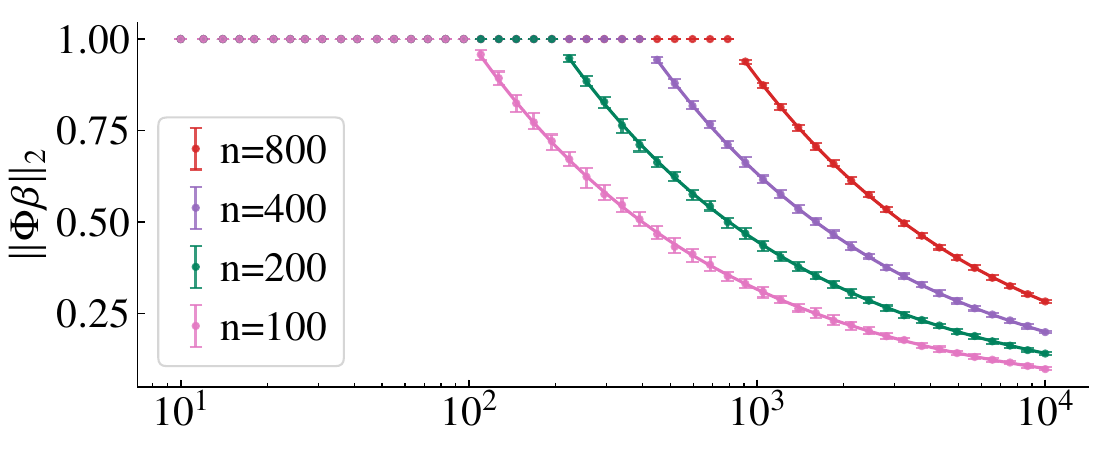}}\\
    \subfloat[$\ell_1$  norm of projection]{\includegraphics[width=0.4\textwidth]{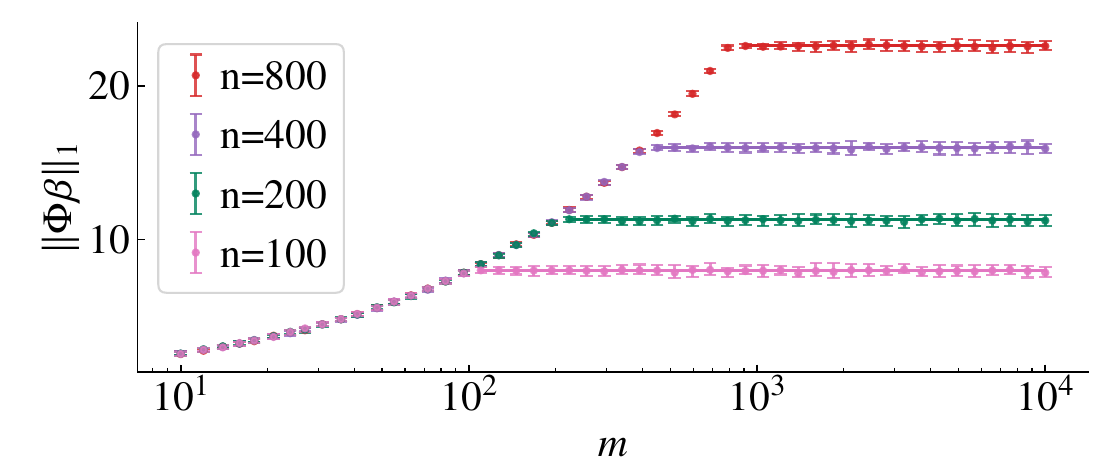}}
    \caption{\emph{Random projection and norms}. Let $\Phi$ be a  (uniform) random projection from $R^n$ into a subspace of dimension $m$. The full lines give the values predicted when $m<n$: $(m/n)^{-1/2}$ for the $\ell_2$-norm; and, the constant rate $c \sqrt{n}$ for $c = 0.8$ for the $\ell_1$-norm. The error bars give the median and interquartile range of $100$ experiments.}
    \label{fig:conjecture-l1}
\end{figure}

\subsection{Scaling}
\label{sec:scaling}

The scaling of variables plays an important role in the analysis. Assume that a given $\mhat{\beta}$ was estimated and  that the corresponding model prediction is $\mhat{\beta}^\trnsp x$. 
By simply redefining the input variable as $\tilde{x} = \frac{1}{\eta} x$ we could obtain an equivalent model  $\tilde{\beta}^\trnsp\tilde{x}$ that, for $\tilde{\beta} = \eta \mhat{\beta}$, would yield exactly the same predictions. 

Notice that while the standard risk $R$ for this new, rescaled, model is exactly the same as the first, the norm of the estimated parameter $\|\tilde{\beta}\|_q$ is $\eta$ times larger.
The adversarial risk is not the same for the two models, as an inspection of Eq.~\eqref{eq:bounds-linear-regression} reveals. The difference is because the \textit{relative}  magnitude of the adversarial disturbance is larger in the second model (even though it is the same in absolute value).

Since we are interested in the impact that the number of parameters~$m$ has on adversarial robustness, we will let the scaling factor depend on this parameter, i.e. $\eta = \eta(m)$.
The next proposition motivates two choices of scaling.  The proof is provided in the Supplementary Material.
\begin{proposition}
  \label{thm:scaling-motiv}
  Let $x$ be an isotropic random vector, $\Exp{\|x\|_2^2} = m$. Additionally, if $x$ is a sub-Gaussian random vector, then $\Exp{\|x\|_\infty} = \Theta(\sqrt{\log(m)})$.
\end{proposition}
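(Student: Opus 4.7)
The statement has two parts, which I would address separately.

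The first identity, $\Exp{\norm{x}_2^2} = m$, is immediate from isotropy. Using $\norm{x}_2^2 = \tr(xx^\trnsp)$ together with the linearity of trace and expectation, $\Exp{\norm{x}_2^2} = \tr(\Exp{xx^\trnsp}) = \tr(I_m) = m$.

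For the upper bound in the second claim, my plan is to invoke the standard exponential-moment argument for sub-Gaussian maxima. Let $K$ denote the sub-Gaussian norm of $x$; then each coordinate $x_i = \langle x, e_i\rangle$ is sub-Gaussian with norm at most $K$. Combining Jensen's inequality with a union bound over the moment generating functions,
\[ \exp\!\bigl(s\,\Exp{\norm{x}_\infty}\bigr) \le \Exp{\exp(s\norm{x}_\infty)} \le 2m\exp(cK^2 s^2/2). \]
Taking logarithms, dividing by $s$, and optimizing at $s \asymp \sqrt{\log m}/K$ yields $\Exp{\norm{x}_\infty} \le CK\sqrt{\log m}$, which gives the $O(\sqrt{\log m})$ side of the $\Theta$ claim.

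The matching lower bound is the more delicate direction and, I expect, the main technical obstacle. The natural starting point is to extract pointwise anti-concentration from isotropy and sub-Gaussianity: $\Exp{x_i^2} = 1$ together with the sub-Gaussian fourth-moment bound $\Exp{x_i^4} \lesssim K^4$ gives, via Paley--Zygmund, $P(|x_i| \ge 1/2) \ge c > 0$ uniformly in $i$. This single-coordinate statement is, however, insufficient on its own: one must promote it to a two-sided tail estimate of the form $P(|x_i| \ge t) \gtrsim e^{-Ct^2}$ for $t$ up to order $\sqrt{\log m}$, and then exploit some form of (near-)independence across coordinates to conclude that $P(\norm{x}_\infty \ge c\sqrt{\log m}) = \Omega(1)$. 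For the canonical cases relevant to the paper---Gaussian coordinates, or rotationally invariant sub-Gaussian vectors on $\sqrt{m}\,S^{m-1}$ as in Theorem~\ref{thm:rate-beta}---both the matching lower tail and the independence (or Haar-type concentration) are available, so a standard first-moment argument then delivers $\Exp{\norm{x}_\infty} \ge c'\sqrt{\log m}$, completing the $\Theta(\sqrt{\log m})$ statement.
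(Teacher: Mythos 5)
Your proof follows essentially the same route as the paper's: the trace identity giving $\Exp{\|x\|_2^2}=m$ and the Jensen/moment-generating-function union bound, optimized at $t\asymp\sqrt{\log m}$ to get $\Exp{\|x\|_\infty}=O(\sqrt{\log m})$, are exactly the paper's argument. For the lower bound the paper likewise gives no self-contained proof (its proof is explicitly labeled ``(Partial)'' and defers to Kamath's note and Vershynin's Exercises 2.5.10--2.5.11 on Gaussian-type maxima), so your observation that the $\Omega(\sqrt{\log m})$ direction requires structure beyond bare isotropic sub-Gaussianity---a matching lower tail plus (near-)independence, available in the Gaussian or rotationally invariant cases---is consistent with, and if anything more explicit than, what the paper does.
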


Hence, $\eta(m) = \sqrt{m}$ or $\eta(m) = \sqrt{\log{m}}$ are both quite natural choices of the scaling factor. They render, respectively, the expected $\ell_2$ and $\ell_\infty$-normss of the input vector constant as the number of features~$m$ varies. 

Assume that the inputs are redefined as $\tilde{x}_i =  \frac{1}{\eta(m)} x_i$. A quick inspection of Eq.~\eqref{eq:min-norm-sol} reveals that the estimated parameter is $\tilde{\beta} =  \eta(m) \mhat{\beta}$.
The risk $R(\mhat{\beta})$ does not change by the transformation, but the expected squared norm of the parameter does, ${\|\tilde\beta\|_2 = \left(\eta(m)\right)^2 \|\hat\beta\|_2}$. Hence, when $\eta(m) = \sqrt{\log{m}}$, it follows from Eq.~\eqref{eq:approx_norm_beta} that
\begin{equation}
  \label{eq:l2_norm_logsqrt_scaling}
     \|\tilde\beta\|_2^2 \approx \log m \left(r^2\frac{1}{m/n} + \sigma^2 \frac{1}{m/n -1}\right).
\end{equation}
Here, the logarithmic term changes slowly compared to the linear term in the denominator.
Hence, the result is similar to what was obtained without any scaling. On the other hand, the square root scaling $\eta(m) = \sqrt{m}$ yields:
\begin{equation}
    \label{eq:l2_norm_sqrt_scaling}
     \|\tilde\beta\|_2^2 \approx r^2 n + \sigma^2 \frac{1}{1/n - 1/m}.
\end{equation}
Here, the parameter norm does not go to zero. Instead, it approaches a constant as $m\rightarrow \infty$. 
The behavior is illustrated in Fig.~\ref{fig:isotropic-predrisk-and-norm} in the Supplementary material. One interesting consequence of Eq.~\eqref{eq:l2_norm_sqrt_scaling} is that $\|\tilde\beta\|_2$ grows with the number of training datapoints. Hence, \textit{the $\ell_2$-adversarial performance degrades as we add more training data points}.

The situation is even more pathological in the case of $\ell_\infty$-adversarial attacks. In Fig.~\ref{fig:linf-isotropic},
we show the $\ell_\infty$-adversarial risk as a function of $m$ when the input is scaled by $\eta(m) = \sqrt{m}$ and $\eta(m) = \sqrt{\log(m)}$. We also provide the upper bound obtained from Lemma~\ref{thm:asymptotics-linear-regression} and the inequality in~\eqref{eq:bounds-linear-regression_linfty}.  The behavior of $\ell_\infty$-adversarial attacks is governed by $\|\tilde{\beta}\|_1$ (see Eq.~\eqref{eq:ineq_adversarial_risk}) and we have $\|\tilde{\beta}\|_1 =  \Theta(\sqrt{m})\|\tilde{\beta}\|_2$ (recall Section~\ref{sec:other-ell-pnorms}). Hence, \eqref{eq:l2_norm_sqrt_scaling} and \eqref{eq:l2_norm_logsqrt_scaling} yield, respectively, $\|\tilde{\beta}\|_1 = \Theta(\sqrt{m})$ and $\|\tilde{\beta}\|_1 = \Theta(\sqrt{\log m})$, which explain the behavior observed in the figure.

\begin{figure}
    \centering
    \includegraphics[width=0.45\textwidth]{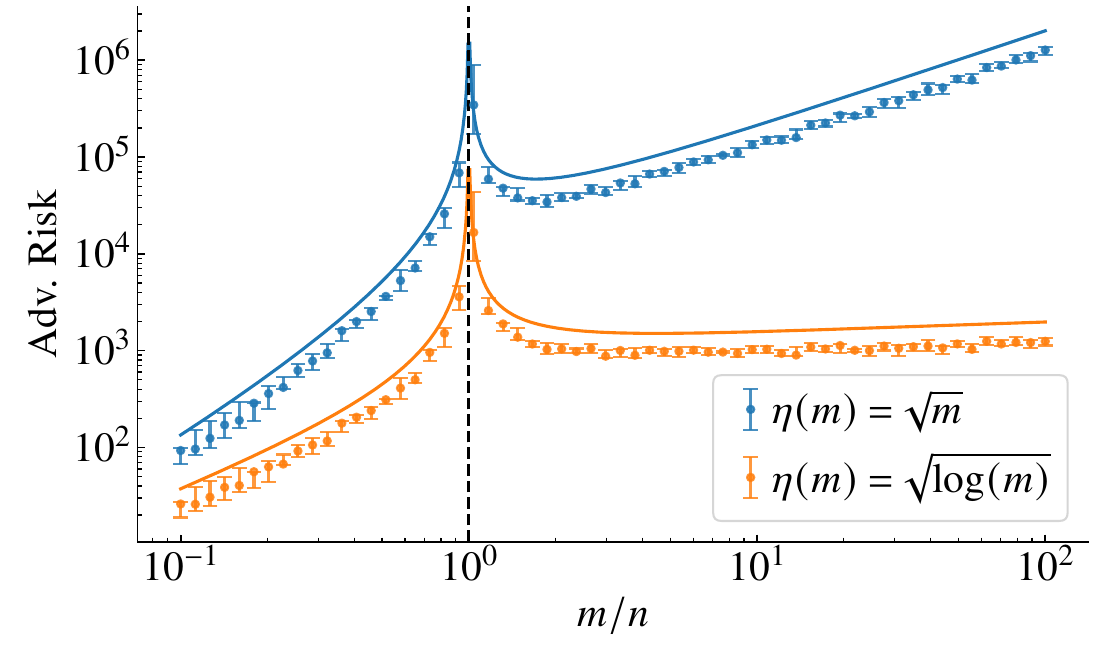}
    \caption{\emph{Adversarial $\ell_\infty$-risk, isotropic  features}.  The asymptotic upper bound is indicated by the full trace.  The error bars give the median and the 0.25 and 0.75 quantiles obtained from numerical experiments (10 realizations). The analysis is performed for a fixed training dataset of size $n=100$ and  for adversarial disturbances of magnitude $\delta = 0.1$. The results are for $r^2 =1$, $\sigma^2 = 1$. We show the results for two different scaling: $\eta(m) = \sqrt{m}$ and $\eta(m) = \sqrt{\log(m)}$.} 
    \label{fig:linf-isotropic}
\end{figure}

We end this section with another interpretation of rescaling. Let us consider the following change of variables $\tilde x = \frac{1}{\eta(m)} x $ and $ \tilde{\beta} = \eta(m) \mhat{\beta}$.  The next proposition states that this can also be interpreted as keeping the input and parameter constant while re-scaling the adversarial disturbance region $\delta$ by a factor $\eta(m)$. 
\begin{proposition}
Let 
$$\text{adv-error}(x, \beta, \delta) = \max_{\|\Delta x\|_2 \le \delta}(y - (x + \Delta x)^\trnsp\beta)^2,$$
then we have that:
$$\text{adv-error}\left(\frac{x}{\eta(m)}, \eta(m)\beta, \delta\right) =  \text{adv-error}\left(x, \beta, \eta(m)\delta\right).$$
\end{proposition}

\subsection{Latent space model}
\label{sec:latent-space-model}

For the model studied in the previous section, it is in general possible to achieve where the test error is smaller in the underparameterized region than in the overparameterized region. 
Thus, it could be argued that the lack of $\ell_\infty$-adversarial robustness in the overparameterized region should not be a problem in practice. 
Let us now illustrate a different data generation procedure for which we have better performance in the overparameterized regime and where the performance is continuously improved as more features are added. 
However, it is still possible to observe that the $\ell_\infty$-adversarial robustness degrades indefinitely with the number of features (recall Fig.~\ref{fig:three-types-of-risk}).

\begin{figure}
    \centering
    \subfloat[$\ell_2$ attacks]{\includegraphics[width=0.4\textwidth]{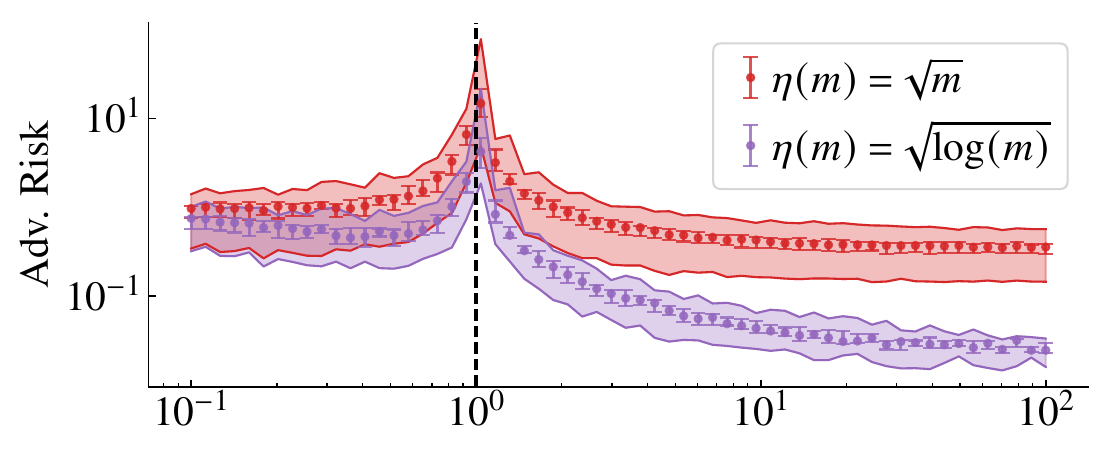}}\\
    \subfloat[$\ell_{\infty}$ attacks]{\includegraphics[width=0.4\textwidth]{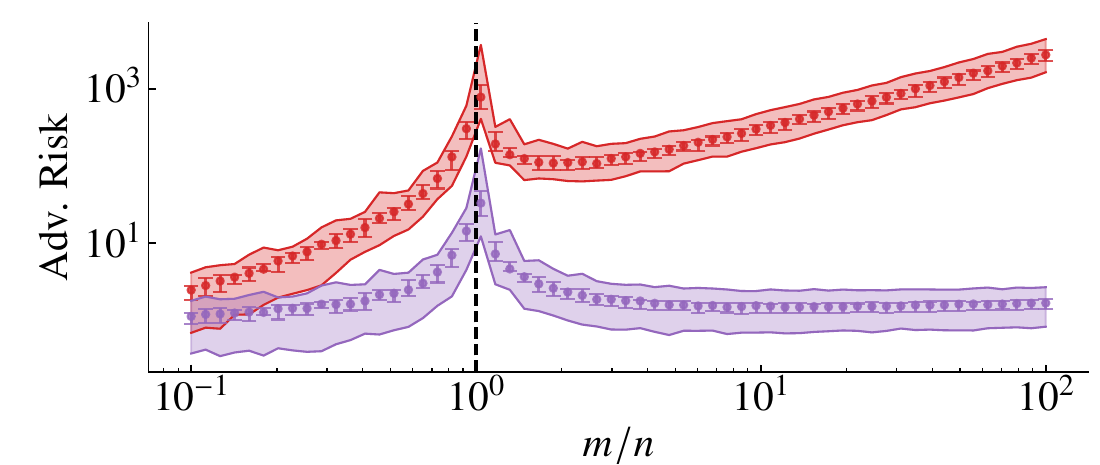}}
    \caption{\emph{Adversarial risk for a latent model}. The  median and interquartile range of the adversarial risk is obtained from numerical experiments (10 realizations) and indicated by error bars. Unlike the other plots, here the shaded region indicates the upper and lower bounds obtained empirically from Eq.~\eqref{eq:ineq_adversarial_risk}. The empirical risk in the test and in the parameter norm are obtained from the experiments. The analysis is performed for adversarial disturbances of magnitude $\delta = 0.1$ and for a training dataset with size $n=200$. The results are for $\sigma_\xi^2 = 0.1$ and for a latent space with constant dimension $d = 20$. We show the results for two different scaling: $\eta(m) = \sqrt{m}$ and $\eta(m) = \sqrt{\log(m)}$. }
    \label{fig:latent-l2-and-linf}
\end{figure}

We consider a data model where the features $x$ are noisy observations of a lower-dimensional subspace of dimension $d$.  A vector in this \textit{latent space} is represented by $z \in \R^d$. This vector is indirectly observed via the features $x \in \R^m$ according to
\begin{equation}
    \label{eq:latent-model-features}
    x = W z + u,
\end{equation}
where $W$ is an $m \times d$ matrix, for $m \ge d$.
We assume that the responses are described by a linear model in this latent space
\begin{equation}
    \label{eq:latent-model-outputs}
    y = \theta^\trnsp z + \xi,
\end{equation}
where $\xi \in \R$ and $u\in \R^m$ are mutually independent noise variables. Moreover, $\xi \sim \N(0, \sigma_{\xi}^2)$ and $u \sim \N\left(0, I_m\right)$. We consider the features in the latent space to be isotropic and normal $z_i = \N\left(0, I_d\right)$. To facilitate the analysis, we choose $W$ such that its columns are orthogonal, $W^\trnsp W = \frac{m}{d} I_d$, where the factor $ \frac{m}{d}$ is introduced to guarantee that the signal-to-noise ratio of the feature vector $x$ (i.e. $\frac{\|W z_i\|_2^2}{\|u_i\|_2^2}$) is kept constant.

This model is related to the other setups we have presented so far. The weak feature example (Section~\ref{sec:unidimensional-latent-space}) is a special case of this model class with $\theta = 1$, $\sigma_{\xi} = 0$, and $W~=~1$. Moreover, this latent model can actually be written as in Eq.~\eqref{eq:linear-data-model}, as we show in the Supplementary Material, Section~\ref{sec:latent-model-extension}.

Assume a training dataset $\{(x_i, y_i)\}_{i=1}^n$ generated using the above procedure. To this data, we fit a linear model $\mhat{\beta} x_i$ using the minimum-norm solution. Most of the arguments we presented for the isotropic case can be reused here. Asymptotics from~\citet{hastie_surprises_2019} are available in the Supplementary Material. From our non-asymptotic analysis in the isotropic case we obtained $\|\mhat{\beta}\|_2 = \bigO\left(\frac{1}{\sqrt{m/n}}\right)$, the same rate is obtained here. See  Supplementary Material, Section~\ref{sec:latent-model-extension}.

If again, we allow the input to be scaled, i.e. $\tilde{x}_i =  \frac{1}{\eta(m)} x_i$, a similar analysis shows that the factors $\eta(m) = \sqrt{m}$ and $\eta(m) = \sqrt{\log(m)}$ would correspond to keeping $\E{x}{\|x\|_2}$ and $\E{x}{\|x\|_\infty}$ constant, respectively. In Fig.~\ref{fig:latent-l2-and-linf} we illustrate the results for the two scaling and a fixed latent dimension $d = 20$.   We observe that the adversarial  $\ell_2$-risk, for both input scalings, continuously decreases in the overparameterized region and achieves  better results there than in the underparameterized region. The adversarial  $\ell_\infty$-risk,  on the other hand, presents quite a different behavior depending on the scaling. For $\eta(m) = \sqrt{m}$ it displays a linear growth with the number of parameters in the overparameterized region, while it remains basically constant when $\eta(m) = \sqrt{\log(m)}$.  Fig.~\ref{fig:three-types-of-risk} is an illustration of this same setting where we also include the standard risk in the same plot (scaling $\eta(m) =  \sqrt{m}$). Additional results are presented in Supplementary Material,  Section~\ref{sec:latent-model-extension}.

\section{Adversarial training and regularization}
\label{sec:adversarial-training}

Empirical risk minimization (ERM) is a popular paradigm for estimating predictive models~\cite{shalev-shwartz_understanding_2014}.
In the last section, the model was trained to minimize the empirical risk $\mhat{R}(\beta)$ but evaluated according to an adversarial criteria.
One natural idea to obtain models that are more robust to adversarial attacks is to instead minimize  the empirical adversarial risk,
\begin{equation}
  \label{eq:empirical_advrisk}
    \mhat{R}_p^{\text{adv}}(\mhat{\beta}) = \frac{1}{n}\sum_{i=1}^n{\max_{\|\Delta x_i\|_p \le \delta}(y_i - (x_i + \Delta x_i)^\trnsp\mhat{\beta})^2}.
  \end{equation}
This method is commonly called adversarial training~\citep{madry_deep_2018}. 

In this section, we use Lemma~\ref{thm:advrisk-closeform} to develop a convex formulation of adversarial training for linear regression problems. With this tool in hand, we explore the effect of adversarial training on how the model robustness changes the number of features. We also compare it to ridge regression,
\begin{equation}
  \label{eq:ridge}
    \mhat{R}_{\text{ridge}}(\mhat{\beta}) = \frac{1}{n}\sum_{i=1}^n{(y_i - x_i^\trnsp\mhat{\beta})^2} + \delta \|\mhat{\beta}\|_2^2,
  \end{equation}
  and lasso,
  \begin{equation}
  \label{eq:lasso}
    \mhat{R}_{\text{lasso}}(\mhat{\beta}) = \frac{1}{n}\sum_{i=1}^n{(y_i - x_i^\trnsp\mhat{\beta})^2} + \delta \|\mhat{\beta}\|_1.
  \end{equation}

\subsection{Adversarial training using convex programming}
\label{sec:adversarial-train-convex}
Using Lemma~\ref{thm:advrisk-closeform} we can show the convexity of the adversarial risk (defined in Eq.~\eqref{eq:adversarial-risk}).

\begin{proposition}
   For $p \in [1, \infty]$, $R_p^{\text{adv}}(\mhat{\beta})$ is convex in $\beta$.
\end{proposition}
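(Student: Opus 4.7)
The plan is to invoke Lemma~\ref{thm:advrisk-closeform} as the starting point, since it recasts the adversarial risk as
\begin{equation*}
  R_p^{\text{adv}}(\mhat{\beta}) = \E{x_0, y_0}{\left(|y_0 - x_0^\trnsp\mhat{\beta}| + \delta\|\mhat{\beta}\|_q\right)^2},
\end{equation*}
which is more tractable than the original $\max$-over-perturbations definition. Since expectation is a non-negative linear operator and linear operators preserve convexity, it suffices to prove that the integrand inside the expectation is convex in $\beta$ for each fixed $(x_0, y_0)$. From there the result follows by standard arguments.

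First, I would define the inner function $f_{x_0, y_0}(\beta) = |y_0 - x_0^\trnsp \beta| + \delta \|\beta\|_q$ and observe that it is the sum of two convex functions of $\beta$: the absolute value of an affine function, and a $q$-norm scaled by the positive constant $\delta$. Hence $f_{x_0, y_0}$ is convex. It is also clearly non-negative.

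Next, I would use the composition rule for convex functions: if $g : \R \to \R$ is convex and non-decreasing on the range of a convex function $h$, then $g \circ h$ is convex. Taking $g(t) = t^2$, which is convex and non-decreasing on $[0, \infty)$, and $h = f_{x_0, y_0}$, which takes values in $[0, \infty)$, we conclude that $(f_{x_0, y_0}(\beta))^2$ is convex in $\beta$.

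Finally, I would observe that convexity is preserved under expectation (equivalently, a non-negative weighted average of convex functions is convex). Applying this to the family $\{(f_{x_0, y_0})^2\}$ indexed by $(x_0, y_0) \sim P_{x, y}$ yields that $R_p^{\text{adv}}(\mhat{\beta})$ is convex. I do not anticipate any real obstacle here: the only subtlety worth flagging is that we must check that squaring preserves convexity, which requires the base function to be non-negative — this is why the absolute value appears in Lemma~\ref{thm:advrisk-closeform} rather than a signed residual, and it is exactly what makes the argument clean.
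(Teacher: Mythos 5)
Your proposal is correct and follows essentially the same route as the paper: apply Lemma~\ref{thm:advrisk-closeform}, show the inner function $|y_0 - x_0^\trnsp\beta| + \delta\|\beta\|_q$ is convex and non-negative, compose with $t \mapsto t^2$ (convex and non-decreasing on $[0,\infty)$), and use that expectation preserves convexity. No gaps to report.
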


\begin{proof}
  Let $\gamma \in [0, 1]$,  for $q \in [1, \infty]$, $\|\mhat{\beta}\|_q$ is a norm and from the triangular inequality, we have that:
  \begin{equation}
    \|\gamma \mhat{\beta}_1 + (1 -\gamma)\mhat{\beta}_1\|_q , \le \gamma \|\mhat{\beta}_1\|_q + (1 -\gamma)\|\mhat{\beta}_2\|_q. 
  \end{equation}
  Moreover,
  \begin{equation*}
    |y_0 - x_0^\trnsp(\gamma \mhat{\beta}_1 + (1 -\gamma)\mhat{\beta}_1 )| \le \gamma |y_0 - x_0^\trnsp\mhat{\beta}_1| + (1 -\gamma)|y_0 - x_0^\trnsp\mhat{\beta}_2|. 
  \end{equation*}
Hence, $h(\mhat{\beta}) = |y_0 - x_0^\trnsp\mhat{\beta}| +\|\mhat{\beta}\|_q$  is convex and, also, $h(\mhat{\beta}) \ge 0$ for all $\mhat{\beta}$. Now,  since $g(x) = x^2$ is convex and non-decreasing for $x\ge 0$, the composition $g \circ h$ is convex -- See~\citet[Section 3.2.4]{boyd_convex_2004}; moreover, the expected value of a convex function is also convex~\cite[Section 3.2.1]{boyd_convex_2004} and it follows that the right-hand side of Eq.~\eqref{eq:closeform-adv-risk} is convex.
\end{proof}

The results obtained for the adversarial risk are also valid for the empirical adversarial risk. Hence, it follows from Lemma~\ref{thm:advrisk-closeform} that:
  \begin{equation}
    \label{thm:advtraining-closeform}
    \mhat{R}_p^{\text{adv}}(\mhat{\beta}) =   \frac{1}{n}\sum_{i=1}^n\left(|y_i - x_i^\trnsp\mhat{\beta}| + \delta\|\mhat{\beta}\|_q\right)^2,
  \end{equation}
  and that it is convex. The above expression can be entered into a standard convex modeling language to obtain the adversarial training solution. In the numerical examples that follows we use CVXPY~\citep{diamond_cvxpy_2016} to train the model.

\subsection{Overparameterized models: Latent space feature model}

In this example, we consider artificially generated data from the latent space feature model described in Section~\ref{sec:adversarial-robustness}. The same experiment for the isotropic feature model is provided in the Supplementary Material, Section~\ref{sec:regularization-isotropic-model}. In Section~\ref{sec:adversarial-robustness}, we saw the unfortunate effect that if the input variables scale with $\eta(m) = \sqrt{m}$ (which corresponds to keeping $\Exp{\|x\|_2}$ constant as we vary the number of features $m$) we observe that $\|\mhat{\beta}\|_1$ grows indefinitely with $m$ when  $\mhat{\beta}$ was estimated using the minimum-norm solution. We also showed how this makes the  $\ell_\infty$-adversarial risk grow indefinitely with the number of features (i.e., Fig.~\ref{fig:three-types-of-risk}).

Let us now investigate if the same effect can be observed for models trained with ridge regression, lasso and adversarial training.
In Fig.~\ref{fig:norm-advtraining} we show the norm $\|\mhat{\beta}\|_1$ in these cases. For ridge regression the parameter norm grows with $\bigO(m)$ regardless of how large the regularization parameter $\delta$ is. We notice that $\ell_2$-adversarial training has a similar behavior for $\delta$ smaller then a certain threshold, in these cases it displays curves similar to ridge regression that grow with $\bigO(m)$. However, for sufficiently large values of the regularization parameter, the parameter norm of the solution is zero for all values of $m$.

For lasso, we see that the parameter norm goes to zero for overparameterized models with sufficiently large $m$. Looking at lasso as a bi-objective optimization problem helps interpret this behavior: as the number of features $m$ increases, the scaling affects the two objectives differently and the objective of keeping $\|\mhat{\beta}\|_1$ starts to be prioritized over the objective of keeping the square training error low, the more $m$ is increased. Interestingly, the $\ell_\infty$-adversarial training seems to behave in a very similar way.

In Fig.~\ref{fig:testerror-advtraining} we provide the adversarial test error for models trained with ridge regression, lasso and adversarial training, respectively.
As expected by our analysis of $\|\mhat{\beta}\|_1$, lasso and $\ell_\infty$-adversarial training yield solutions that do not deteriorate indefinitely. We believe this observation adds to our discussion about the role of scaling.
It highlights the fact that, even in the case of a mismatch between disturbance and how the input scales with the number of variables (i.e., $\E{x}{\|x\|_2^2}$  constant while we evaluate it under an $\ell_\infty$-adversary) it is still possible to avoid brittleness by considering a type of regularization that acts under the right norm.

\begin{figure}
    \centering
    \subfloat[Ridge regression]{\includegraphics[width=0.42\textwidth]{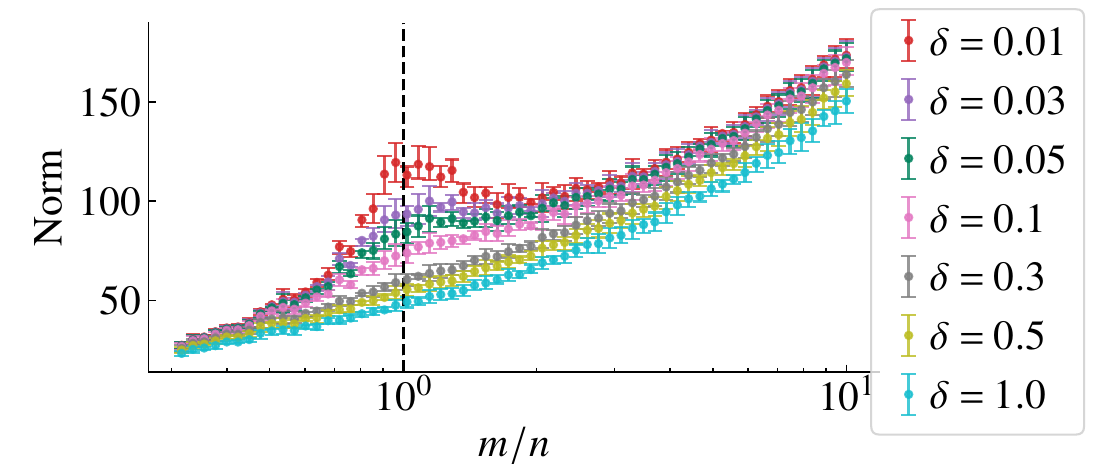}}\\
    \subfloat[Adversarial training $\ell_2$ ]{\includegraphics[width=0.42\textwidth]{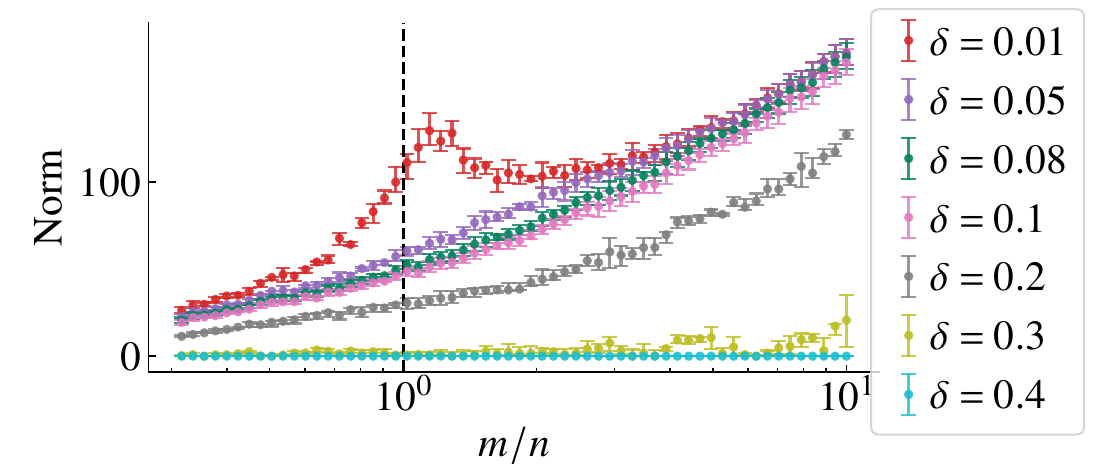}}\\
    \subfloat[Lasso regression]{\includegraphics[width=0.42\textwidth]{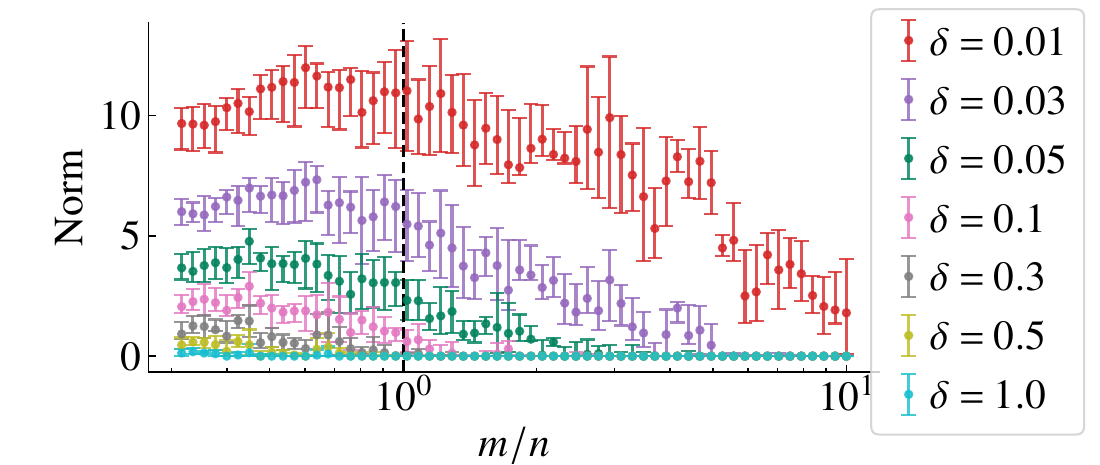}}\\
    \subfloat[Adversarial training  $\ell_\infty$  ]{\includegraphics[width=0.42\textwidth]{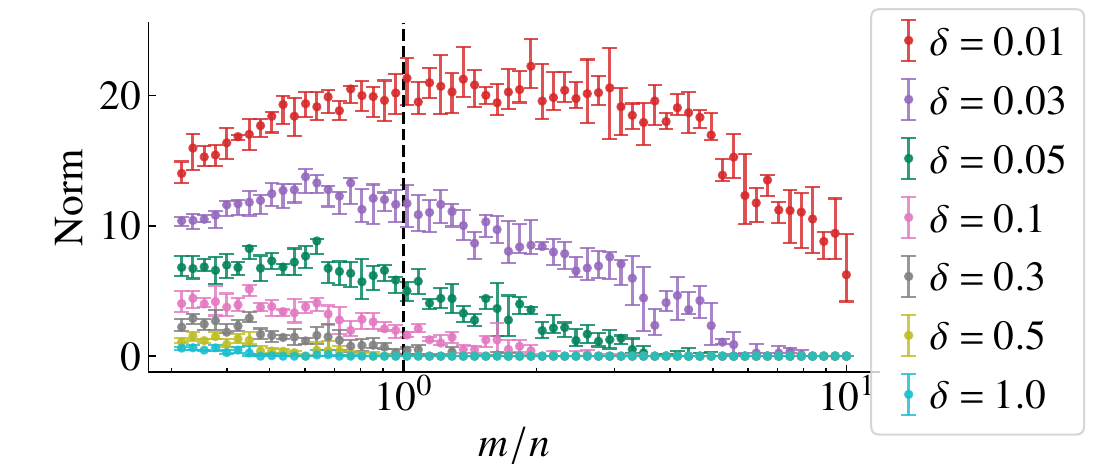}}\\
    \caption{\emph{Parameter norm $\|\mhat{\beta}\|_1$.} The input variables are scaled with $\eta(m) = \sqrt{m}$. The error bars give the median and the 0.25 and 0.75 quantiles obtained from numerical experiments (6 realizations) for a fixed training dataset of size $n=100$. We repeat the experiment for different amounts of regularization. The regularization parameter $\delta$ defined in Eq.~\eqref{eq:empirical_advrisk} for the adversarial training and in Eq.~\eqref{eq:ridge} and~\eqref{eq:lasso} for ridge regression and lasso.}
    \label{fig:norm-advtraining}
  \end{figure}

\begin{figure}
    \centering
    \includegraphics[width=0.5\textwidth]{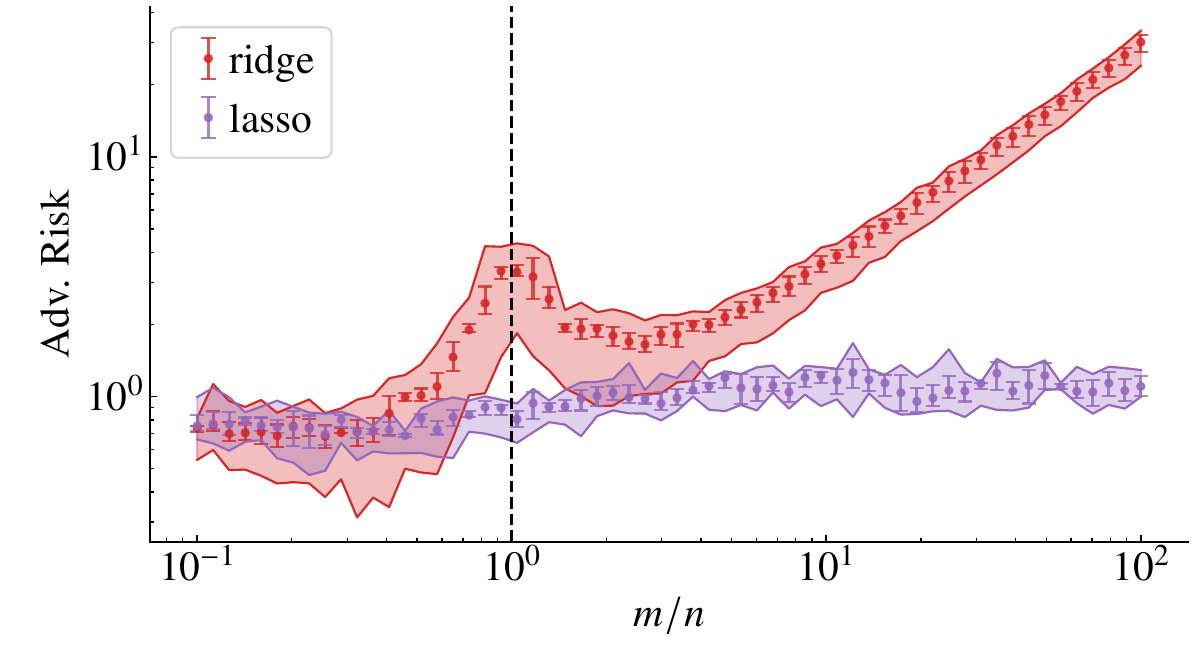}
    \includegraphics[width=0.5\textwidth]{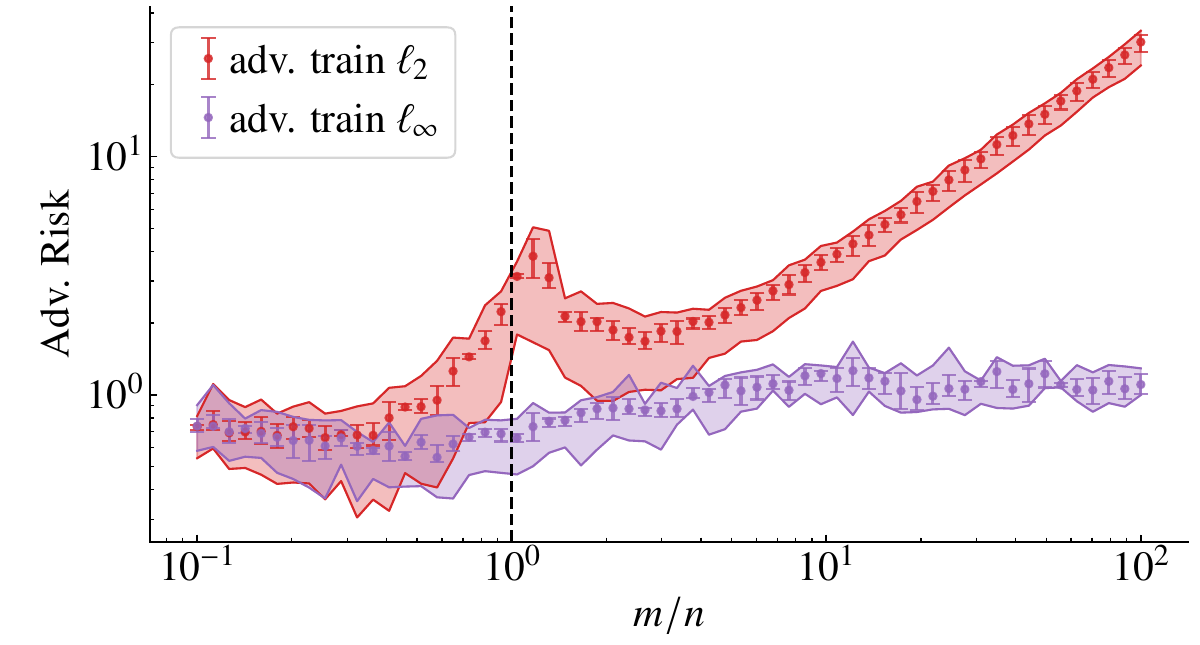}
 \caption{\emph{Adversarial $\ell_\infty$-risk.} \emph{Top:} ridge and lasso.  \emph{Bottom:}  $\ell_2$ and  $\ell_\infty$  adversarial training. On the $y$-axis we show the $\ell_\infty$-adversarial risk for models obtained by different training methods.   On the $x$-axis we have the ratio between  the number of features $m$ and the number of training datapoints $n$. The error bars give the median and the 0.25 and 0.75 quantiles obtained from numerical experiments (6 realizations). We use $\delta=0.01$ both during inference (to compute the adversarial risk) and during the adversarial training, as in Eq.~\eqref{eq:empirical_advrisk}. We also use $\delta=0.01$ for lasso and ridge regression, see Eq.~\eqref{eq:ridge} and~\eqref{eq:lasso}. Here the shaded region indicates the upper and lower bounds obtained empirically from Eq.~\eqref{eq:ineq_adversarial_risk}. The empirical risk in the test and the parameter norm are obtained from the experiments. }
    \label{fig:testerror-advtraining}
  \end{figure}

\section{Discussion}

\subsection{Related work}

\paragraph{Adversarial attacks}
The study of adversarial attacks pre-dates the widespread use of deep neural networks~\citep{dalvi_adversarial_2004, globerson_nightmare_2006}. An overview of earlier work is provided by \citet{biggio_wild_2018}. Nonetheless, the susceptibility of high-performance neural networks to adversarial attacks gave this framework higher visibility~\cite{bruna_intriguing_2014}. The framework of adversarial attacks has generated striking examples of the vulnerability of such models to very small input perturbations. Small changes in the input can cause a substantial drop in performance in otherwise state-of-the-art models, see for instance~\citep{bruna_intriguing_2014,goodfellow_explaining_2015, kurakin_adversarial_2018, fawzi_analysis_2018, ilyas_adversarial_2019, yuan_adversarial_2019}.

\paragraph{Robustness and the role of high-dimensionality} 
The conflict between robustness and high-performance models is explored by~\citet{tsipras_robustness_2019} and \citet{ilyas_adversarial_2019}. Indeed, one of the examples we give for the worst-case scenario of the $\ell_\infty$-adversarial error is motivated by an idea presented in~\citep{tsipras_robustness_2019}.  Moreover, simple examples where high-dimensional inputs yield easy-to-construct adversarial examples are abundant in the literature~\citep{gilmer_adversarial_2018}. 
The analysis of the robustness of more general nonlinear models, such as neural networks is provided in~\cite{bubeck_law_2021} and extended in \citet{bubeck_universal_2021}. They show how overparametrization can be a recipe for robustness. An alternative view is provided in~\cite{daniely_most_2020, bartlett_adversarial_2021}, where it is shown that ReLU neural networks can be made vulnerable since what these models learn is locally very similar to random linear functions. As we mentioned in the introduction, our work tries to reconcile these somewhat conflicting views in the context of linear models.

\paragraph{Double-descent} The double-descent performance curve has been experimentally observed for a variety of machine learning models, such as random Fourier features, random forests, shallow networks, transformers, convolutional networks and nonlinear ARX models; and for datasets obtained in diverse contexts, including image classification, natural language processing datasets and the identification of nonlinear dynamical systems~\citep{belkin_reconciling_2019, nakkiran_deep_2020, geiger_jamming_2019, geiger_scaling_2020, ribeiro_occam_2021b}. For instance, we illustrate the double-descent phenomena using random Fourier features in the Supplementary Material Fig.~\ref{fig:dd-diabetes}. Theoretical models for such phenomena are also often pursued:~\citet{bartlett_benign_2020} derive non-asymptotic bounds for linear regression models using concentration inequalities. In~\cite{geiger_jamming_2019} the authors draw connections with the physical phenomena of "jamming" in a class of glassy systems.~\citet{deng_model_2020} characterize logistic regression test error using the Gaussian min-max theorem.~\citet{muthukumar_harmless_2020} provides bounds on the risk.

Random matrix theory has been a useful tool for studying statistical phenomena. The framework and its potential for explaining and studying neural networks have been the focus of recent work \citep{pennington_nonlinear_2017, pennington_emergence_2018, pastur_random_2020}. It has also been a powerful tool in producing theoretical models for the double-descent phenomenon~\citep{belkin_two_2020, mei_generalization_2022, hastie_surprises_2019, advani_highdimensional_2020, adlam_neural_2020}. In our study of linear regression with random covariates, we make direct use of the asymptotic results obtained by~\citet{hastie_surprises_2019}. 

\paragraph{Analysis of adversarial attacks} Theoretical analysis of models under adversarial attacks is currently a rather popular topic. \citet{hassani_curse_2022} obtain exact asymptotics for random feature regressions. Also in the context of random feature regression and \citet{damour_underspecification_2020} provide asymptotics based on \citet{mei_generalization_2022}. 
They study a scenario where the adversarial attack is constrained to not change the risk.
The theoretical model is used to explain how underspecification might present a challenge in deployment and is backed by experiments. \citet{diochnos_adversarial_2018} consider the adversarial risk when the instances are uniformly distributed over $\{0, 1\}^n$. \citet{dohmatob_generalized_2019} provides a no-free-lunch theorem where it is shown that any classifier can be adversarially fooled with high probability when the perturbations are slightly greater than the natural noise level in the problem.

\paragraph{Adversarial attacks in linear models}
While a lot of current research focuses on adversarial examples for deep learning models, there is a growing body of work that study the fundamental properties of adversarial attacks in linear models.
There is a sound reason for this focus: linear models allow for analytical analysis while still reproducing phenomena of interest.

\citet{bhagoji_lower_2019} obtain optimal transport-based lower bounds for adversarial examples in classification problems. They consider Gaussian data and norm-bounded adversaries.
\citet{taheri_asymptotic_2021} derived asymptotics for adversarial training in binary classification.
Moreover, 
\citet{javanmard_precise_2020} provide asymptotics for adversarial attacks in linear regression. 
\citet{javanmard_precise_2020a} study classification settings. 

These asymptotics are often used to gain insight into the effect of adversarial training and adversarial robustness.
\citet{javanmard_precise_2020} studies the trade-off between adversarial risk and standard risk. Note that \citet{javanmard_precise_2020a} studies how overparametrization affects robustness to perturbations in the input and 
\citet{min_curious_2021} studies how the size of the dataset affects adversarial performance. We corroborate their observation that the adversarial performance might degrade as the size of the dataset increases.

The derivation of exact asymptotics is an impressive technical development, but we point out that it is not always trivial to gain insight from these results. 
The asymptotics obtained often do not have closed-form expressions and require the solution of either polynomials or integral equations.
Here, we advocate a simpler approach: approximating the adversarial risk using terms that often appear in other contexts. We believe that this is a powerful tool to gain insight into the problem, providing extra flexibility for quickly navigating between different setups. We use~\eqref{eq:ineq_adversarial_risk} and reduce the analysis to the risk and the parameter norm. Lemma~\ref{thm:advrisk-closeform} is an important tool for this analysis. 
We point out that \citet{xing_generalization_2021} proved a version of Lemma~\ref{thm:advrisk-closeform} specialized to the Gaussian case and $\ell_2$-norm and that~\citet{javanmard_precise_2020} state a version of the same lemma for the $\ell_2$-norm.

\paragraph{Rademacher complexity analysis}  Close to our work is that of \citet{yin_rademacher_2019}, which provided an analysis of $\ell_\infty$-adversarial attack on linear classifiers based on the Rademacher complexity. Their Theorem 1 resembles Equation~\eqref{eq:ineq_adversarial_risk}: we show that the adversarial risk and $(\text{risk} + \delta^2\|\mhat{\beta}\|_q^2)$ are upper and lower bounded by constant factors. \citet{yin_rademacher_2019} prove a similar relation for the adversarial Rademacher complexity of a linear classifier. In their proof they use a reformulation of the adversarial loss similar to that of Lemma~1 but for classifiers. Similar to our results, they showed an unavoidable dimension dependence unless the weight vector has a bounded norm. On the one hand, our work extends their results for regression. On the other hand, by analysing general $\ell_p$-adversarial attacks and different covariate scalings we studied a wide variety of possible behaviors that they do not observe by focusing only on $\ell_\infty$-adversarial attacks and training.

\subsection{Connections to neural networks}

The success of deep neural networks is an important reason for digging deeper into the properties of overparameterized models. Here, however, we study the phenomenon in linear models. To motivate the relevance of our study also for neural networks, we appeal to a recent line of work that has pointed out a direct connection between linear models and more complex models such as neural networks~\cite{allen-zhu_convergence_2019,chizat_global_2018,du_gradient_2019,jacot_neural_2018}. The idea can be understood in simple terms. Let  the parameterized function $f(\cdot; \theta)$ denote the neural network, where $\theta \in \mathbb{R}^m$ denote the vector of parameters. Assume that the number of parameters is very large and that training the neural network moves each of them just by a small amount w.r.t. its initialization $\theta_0$.  A linearization of the model around $\theta_0$, yields 
\begin{equation}
\label{eq:nn_linearization}
f(x; \theta) \approx  f(x; \theta_0) + \nabla_\theta f(x; \theta_0)^\top\tilde{\theta},
\end{equation}
where $\tilde\theta = \theta - \theta_0$. Hence, the problem can be approximated by an affine problem that could be solved using linear regression. Indeed, it can be established that as the neural network becomes infinitely wide  the training of the neural network actually becomes solving a problem similar to that in \eqref{eq:nn_linearization}.

Linear models are also a natural setup to study adversarial attacks.
Indeed, while there was initial speculation that the highly nonlinear nature of deep neural networks was the cause of its vulnerabilities to adversarial attacks~\citep{bruna_intriguing_2014}, that the idea was later dismissed and the vulnerabilities can be observed already in purely linear settings~\citep{goodfellow_explaining_2015}. 

\subsection{Extension to nonlinear models}
A question that naturally comes to mind is if parts of this analysis can 
be generalized to nonlinear settings. For that, define the adversarial risk associated with a given function $f$ by:
\begin{equation}
    R^{\text{adv}}_p(f)= \E{x_0, y_0}{\max_{\|\Delta x_0\|_p \le \delta}(y_0 - f(x_0 + \Delta x_0))^2}.
  \end{equation}
Let $L(f)$ be the Lipschitz constant of $f$, i.e. $|f(x_1) - f(x_2)|\le L(f) \|x_1 - x_2\|$ for all $x_1, x_2$. The idea of using the Lipschitz constant as a proxy for robustness is quite standard and a common procedure to obtain robust models is to jointly optimize the risk $R(f)$ and the Lipschitz constant $L(f)$, see e.g.~\cite{fazlyab_efficient_2019}. Indeed, an analysis equivalent to the one used in the proof of Lemma~\ref{thm:advrisk-closeform} yields
\begin{equation}
R^{\text{adv}}_p(f) \le \E{x_0, y_0}{ \left(|y_0 - f(x_0)| + \delta L(f)\right)^2}.
\end{equation}
In this case, equality does not necessarily hold. Proposition~\ref{thm:tight-holder} was used in the proof for the linear case, but there is not an obvious equivalent in the nonlinear case. Hence, instead of the approximation~(\ref{eq:ineq_adversarial_risk}) we would only have an upper bound (and no lower bound).

\section{Conclusion}
\label{sec:discussion}

In this paper, we focus on the behavior of the adversarial risk as we change the number of features. Our analysis is based on the fact that $\ell_p$-adversarial risk is between 1 and 2 times $(\text{risk} + \delta^2\|\mhat{\beta}\|_q^2)$, where $q$ is the complementary norm to $p$. Hence, the behavior of the adversarial risk can be studied by analysing these two components. We use such results to analyse the role of high-dimensionality in the performance of linear models under adversarial attacks.

On the one hand, the result implies that $\ell_2$-adversarial risk presents a double-descent curve when both the risk and $\|\mhat{\beta}\|_2$ present such behavior. We use asymptotic results from~\citet{hastie_surprises_2019} to illustrate a double-descent curve in the adversarial risk for models with features randomly generated with isotropic, equicorrelated and spiked (i.e., latent space model) covariance matrices.

On the other hand, we focus on the analyse of cases where the risk is small but the $\ell_p$-adversarial risk grows with the number of features. In our setup, as a direct consequence of the aforementioned approximation of the adversarial risk, this happens if and only if $\delta\|\mhat{\beta}\|_q\rightarrow \infty$ as the number of features  $m \rightarrow \infty$. In order to analyse the term $\delta\|\mhat{\beta}\|_q$,
\begin{itemize}
\item we use non-asymptotic analysis for the norm of the estimated parameter obtained by the minimum-norm solution. For Gaussian covariates, we show that: $\|\mhat{\beta}\|_2 = \bigO(1/m)$, while $\|\mhat{\beta}\|_1= \bigO(1)$.
\item we show that for isotropic covariates, if $\delta \propto E\{\|x\|_2\}$ then  $\delta = \bigO(m)$. Furthermore, for sub-Gaussian covariates we have that if $\delta \propto E\{\|x\|_\infty\}$, then $\delta = \bigO\left(\sqrt{\log(m}\right)$.
\end{itemize}
We combine the two results to show examples that are robust to $\ell_2$-adversarial attacks but can be made vulnerable to $\ell_\infty$-adversarial attacks as we increase the number of features. The most pathological results are usually obtained in a mismatched situation, where we apply an $\ell_\infty$-adversarial attack with magnitude $\delta \propto E\{\|x\|_2\}$. In this case, we have shown that the adversarial risk can be made arbitrarily large (i.e., the model is arbitrarily vulnerable to an adversary) as the number of features grows. Such a mismatched setup (with a $\ell_\infty$-adversarial attack with scaling proportional to the $\ell_2$-norm) is present in influential examples such as those in~\citep{tsipras_robustness_2019, goodfellow_explaining_2015}, and the mismatch often appears hidden in the argument. Finally, we also provided a convex optimization formulation of adversarial training and studied similarities between adversarial training and parameter-shrinking methods.

\section*{Acknowledgement}
The authors would like to thank Dave Zachariah for very fruitful discussions. 
This research was financially supported by the project \emph{Deep probabilistic regression -- new models and learning algorithms } (contract number: 2021-04301), funded by the Swedish Research Council and by \emph{Kjell och M{\"a}rta Beijer Foundation}.

\printbibliography

\newpage~
\newpage

\appendices
\pagenumbering{roman} 

\setcounter{page}{1}

\setcounter{equation}{0}
\renewcommand{\theequation}{S.\arabic{equation}}%

\setcounter{figure}{0}
\renewcommand{\thefigure}{S.\arabic{figure}}%

\section{Additional Proofs}
\label{sec:additional-proofs}

\subsection{Proof Proposition 2}
Let us now prove the first statement: \textit{if $1/p + 1/q = 1$ and $\Delta x_i = \text{sign}(\beta_i) |\beta_i|^{q/p}$ than $|\beta^\trnsp\Delta x| = \|\beta\|_q \|\Delta x\|_p$.} On the one hand,
\begin{align*}
\|\Delta x\|_p = \left(\sum |\Delta x_i |^p\right )^{1/p} = \left(\sum |\beta_i|^{q} \right)^{1/p}  = \|\beta_i\|^{q/p}_q.
\end{align*}
Hence,
\begin{align*}
\|\Delta x\|_p\|\beta_i\|_q=  \|\beta_i\|^{q}_q = \sum_{i =1}^m |\beta_i|^{q},
\end{align*}
where $q/p+1 = q$ was used in the first equality. Using this again we obtain that $\|\Delta x\|_p\|\beta_i\|=\sum_{i =1}^m |\beta_i|^{q/p} |\beta_i|$ which is equal to $|\Delta x_i^\trnsp \beta| $ by the definition of $\Delta x_i$. The other two statements can be verified in a similar way by replacing the given $\Delta x$ into the formula and verifying the equality.

\subsection{Inequalities~\eqref{eq:ineq_adversarial_risk} and~\eqref{eq:bounds-linear-regression}}

The two inequalities~\eqref{eq:ineq_adversarial_risk} and~\eqref{eq:bounds-linear-regression} follow from~\eqref{eq:closeform-adv-risk1}. The first can be derived  by a direct application of Jensen's  inequality that yields $0 \le\E{x_0, y_0}{|e_0|} \le \sqrt{ R(\mhat{\beta})}$. The second inequality follows from a direct application of the Cauchy-Schwartz inequality: $0 \le\E{y, x_0, y_0}{\norm{\mhat{\beta}}_q |e_0|} \le \sqrt{L_q R}$.

\subsection{Proof of Lemma~\ref{thm:bias-variance-decomposition}}

 \textbf{Proof for $ \|\mhat{\beta}\|_2$:}
From Eq.~\eqref{eq:linear-data-model} and Eq.~\eqref{eq:min-norm-sol} it follows that:
\begin{equation}
    \label{eq:betahat-close-formula}
    \mhat{\beta} = \underbrace{(X^\trnsp X)^\pinv X^\trnsp X}_{\Phi} \beta + \underbrace{(X^\trnsp X)^\pinv}_{\frac{1}{n}\hat{\Sigma}^\pinv} X^\trnsp \epsilon.
\end{equation}
\hspace{-2pt}
Hence, since $\hat{\Sigma}$ is symmetric:
\begin{equation}
\mhat{\beta}^\trnsp \mhat{\beta} = \beta^\trnsp \Phi^\trnsp \Phi \beta + \frac{1}{n} \beta^\trnsp \Phi^\trnsp \hat{\Sigma}^\pinv X^\trnsp \epsilon  +  \frac{1}{n^2} \epsilon^\trnsp X \hat{\Sigma}^\pinv \hat{\Sigma}^\pinv X^\trnsp \epsilon ,
\end{equation}
where the first term is equal to $\beta^\trnsp \Phi \beta$, since $\Phi$ is an \textit{orthogonal projector} i.e., $\Phi^\trnsp = \Phi$ and  $\Phi  \Phi =  \Phi$. Moreover, the middle term has zero expectation. 

Now, since the second term is a scalar, it is equal to its trace. Using the fact that the trace is invariant over cyclic permutations, 
$\epsilon^\trnsp X \hat{\Sigma}^\pinv \hat{\Sigma}^\pinv X^\trnsp \epsilon = \tr \brackets{\hat{\Sigma}^\pinv X^\trnsp \epsilon\epsilon^\trnsp X \hat{\Sigma}^\pinv}.$

From the assumption that the noise samples are independent and have variance $\sigma^2$, we have $\E{\epsilon}{\epsilon \epsilon^\trnsp} = \sigma^2 I$, where $I$ is the identity matrix. Since we can swap the trace and the expectation operator we obtain
\begin{equation*}
\E{\epsilon}{\mhat{\beta}^\trnsp \mhat{\beta}} = \beta^\trnsp \Phi \beta +  \frac{1}{n^2}\tr \brackets{\hat{\Sigma}^\pinv X^\trnsp \underbrace{\E{\epsilon}{{\epsilon \epsilon^\trnsp}}}_{\sigma^2 I} X \hat{\Sigma}^\pinv }.
\end{equation*}
The results follow from the definition of $\hat{\Sigma}^\pinv$ and the following property of pseudo-inverse 
$\hat{\Sigma}^\pinv\hat{\Sigma}\hat{\Sigma}^\pinv = \hat{\Sigma}^\pinv$.

\textbf{Proof for $R$:}
Now, 
\begin{equation}
    \label{eq:matrix-equationing-risk}
    R(\mhat{\beta}) =\E{x_0, y_0}{(\beta^\trnsp x_0 - y_0)^2} = (\beta-\mhat{\beta})^\trnsp \Sigma (\beta-\mhat{\beta}) + \sigma^2.
\end{equation}

From~\eqref{eq:betahat-close-formula} it follows that
\begin{equation*}
    \beta - \mhat{\beta} = \underbrace{(I - \Phi)}_{\Pi} \beta + \frac{1}{n}\hat{\Sigma}^\pinv X^\trnsp \epsilon,
\end{equation*}
where $\Pi$ is again an orthogonal projector, i.e., $\Pi^\trnsp = \Pi$  and $\Pi\,\Pi =  \Pi$. We can then compute a closed-form expression for $\E{\epsilon}{R(\mhat{\beta})}$  using the same procedure as above.

\subsection{Proof of Proposition~\ref{thm:scaling-motiv}}
\label{sec:isotropic-features-norms}

Here $x \in \R^m$. To compute the expected $\ell_2$-norm of $x_i$ we use the fact that $\Cov{x} = I_m$. Hence, $\E{x}{\|x\|_2^2} = \E{x}{\tr\{x x^\trnsp\}} = \tr \{\E{x}{x x^\trnsp}\}  = m$. 
Let us proceed by analyzing the $\ell_\infty$-norms. Let $t>0$ be an arbitrary value to be chosen later. Via direct use of Jensen's inequality we obtain
\begin{multline}
    \exp (t\E{x}{\|x\|_\infty}) \le \Exp{\exp (t\|x\|_\infty)} = \\ \Exp{\max_j \exp (t |x^{j}|)} \le \sum_{j = 1}^m\left(\Exp{\exp (t |x^{j}|)}\right).
\end{multline}
Using the fact that $x^{j}$ is sub-Gaussian (without loss of generality, we assume unitary proxy variance), we have $\Exp{\exp (t|x^{j}|)}  \le \exp(t^2/ 2)$. Therefore, $\exp (t\Exp{\|x\|_\infty}) \le 2m \exp(t^2/ 2)$ and 
\begin{equation}
    \Exp{\|x\|_\infty} \le \frac{\log(2m)}{t} +  \frac{t}{2}.
\end{equation}
Here, we can choose $t = 2\sqrt{\log(2m)}$, which yields $\Exp{\|x\|_\infty} \le \sqrt{2\log(2m)}$.
We conclude that $\Exp{\|x\|_\infty} = \bigO(\sqrt{\log(m)})$. It is also possible to obtain $\Exp{\|x\|_\infty} = \Omega(\sqrt{\log(m)})$ using a similar argument to the one described in \textit{``Bounds on the Expectation of the Maximum of Samples from a
Gaussian''} by Gautam Kamath (\href{www.gautamkamath.com/writings/gaussian_max.pdf}{www.gautamkamath.com/writings/gaussian\_max.pdf} [online accessed: 2021-10-11]). Hence, $\Exp{\|x\|_\infty} = \Theta(\sqrt{\log(m)})$.
The result for is also provided in~\citet[Exercises 2.5.10 and 2.5.11]{vershynin_high-dimensional_2018}.

\subsection{Proof of Proposition}

We used the change of variables $\Delta x = \eta(m) \widetilde{\Delta x}$ which does not change the result of the maximization:
\begin{footnotesize}
\begin{align*}
\footnotesize
\text{adv-error}\left(x, \beta,  \eta(m)\delta  \right) &= \max_{\|\Delta x\|_2 \le \eta (m) \delta }(y - (x + \Delta x)^\trnsp\beta)^2 \\
 &= \max_{\|\widetilde{\Delta x}\|_2 \le \delta }\left(y - \left(x + \eta(m)\widetilde{\Delta x}\right)^\trnsp\beta\right)^2 \\
 &= \max_{\|\widetilde{\Delta x}\|_2 \le \delta }\left(y - \left(\frac{x}{\eta(m)} + \widetilde{\Delta x}\right)^\trnsp\left(\eta(m)\beta\right)\right)^2 \\
 &= \text{adv-error}\left(\frac{x}{\eta(m)}, \eta(m)\beta, \delta\right)
\end{align*}
\end{footnotesize}

\subsection{Proof of \eqref{thm:concentration-around-the-mean-norml1}}

Let us define the function $f(\Phi) = \|\Phi \beta\|_1$. The function $f:G(m, n) \rightarrow \R $ is Lipschitz with constant $\|\beta\|_2 \sqrt{m} $ since
$$
\begin{aligned}
|f(\Phi_1) - f(\Phi_2) | &\le \|(\Phi_1 - \Phi_2) \beta\|_1 \\
& \le \sqrt{m} \|(\Phi_1 - \Phi_2) \beta\|_2  \\
&\le  \sqrt{m} \|\beta\|_2  \|\Phi_1 - \Phi_2\|_2, 
\end{aligned}$$
where we first used the triangular inequality. Then Lemma~\ref{thm:relation-between-p-norm} provided an upper bound on the $\ell_1$-norm by the $\ell_2$-norm, and finally the definition of the operator norm of a matrix ${\|\Phi\|_2 = \sup_\beta \left( \|\Phi\beta\|_2 / \|\beta\|_2 \right)}$. The result in Eq.~\eqref{thm:concentration-around-the-mean-norml1} now follows from Theorem 5.2.9  in~\citet{vershynin_high-dimensional_2018}.

\section{Illustration of the double-descent phenomena}
\label{sec:dd}

  \begin{figure}[H]
    \centering
    \subfloat[Mean square error]{\includegraphics[width=0.8\columnwidth]{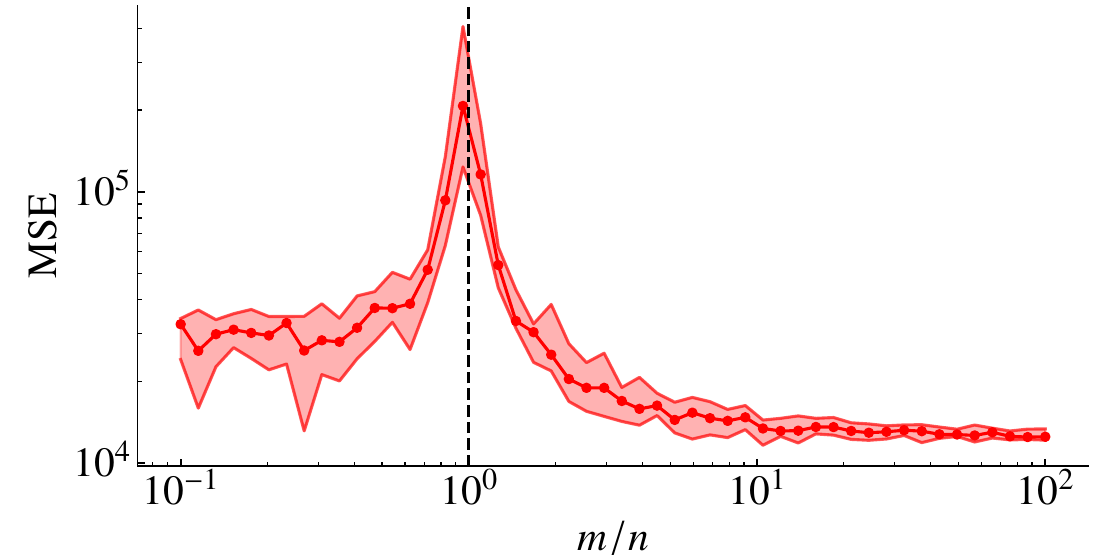}}\\
    \subfloat[Parameter $\ell_2$-norm]{\includegraphics[width=0.8\columnwidth]{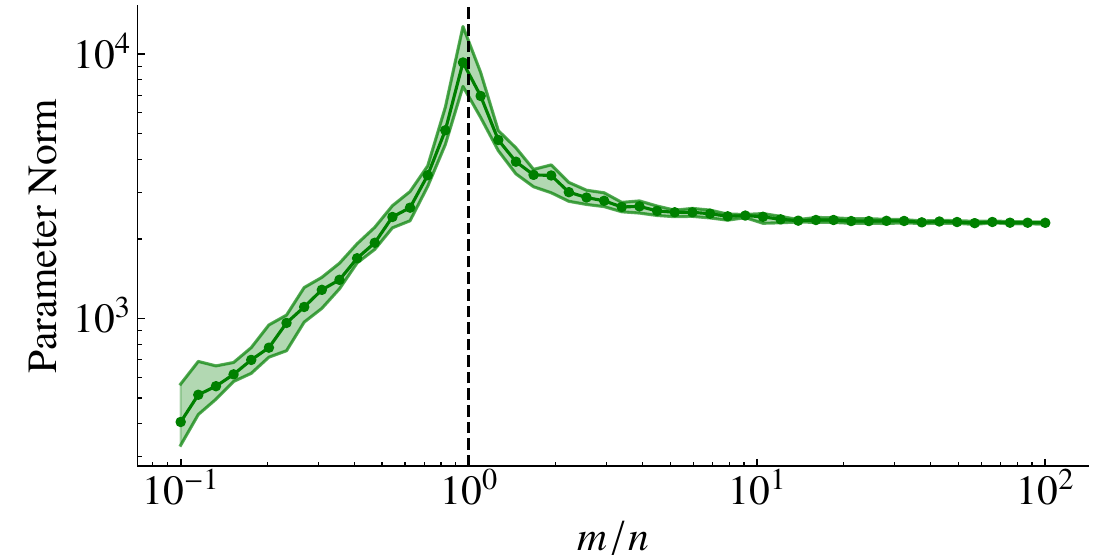}}
 \caption{\textit{Double-descent in diabetes dataset~\cite{efron_least_2004}.} Here we give a simple illustration of the double-descent phenomena using a real dataset. The dataset has $10$ baseline variables (age, sex, body mass index, average blood pressure, and six blood serum measurements), which were obtained for $442$ diabetes patients. Data from $n= 300$ patients was used as training and the remaining patients for testing. To this data we adjusted a nonlinear model that is linear-in-the-parameters $\hat{y}_i =  \sum_{i=1}^m \beta_i \phi_i(x)$. Here, $\phi_i$ are maps obtained by means of Random Fourier Features~\cite{rahimi_random_2008}. This model can be understood as a one-layer neural network where only the readout weights are trained. We use $m$ to denote the number of features. When $m > n$ and there are multiple solutions we choose the solution with minimum parameter norm. As a function of the ratio  between the number of features and the number of training data points $m/n$ we display; (a) the mean square error in the test dataset; and, (b) the parameter norm. The solid line is the median of the 10 experiments and the shaded region indicates the inter-quartile range.}
    \label{fig:dd-diabetes}
  \end{figure}

\section{Isotropic feature model}
\label{sec:isotropic-features-model-extended}

The asymptotic behavior of $R$ and $L_2$ for isotropic features is described in Lemma~\ref{thm:asymptotics-linear-regression}. In Fig.~\ref{fig:isotropic-predrisk-and-norm}(a), we illustrate the behavior of the risk for different values of $r^2$. Again, as in the main text, we refer to the ``null risk'' as the quantity $r^2 + \sigma^2$ that correspond to the risk of the null estimator $\tilde{\beta} = 0$.  In the underparameterized region, the prediction risk is smaller then the null risk iff $\frac{m}{n} < \frac{r^2}{\sigma^2 + r^2}$. In the overparameterized region, when $r^2 > \sigma^2$, as in the situation studied in the main text, the prediction risk has a local minima at $\gamma =  \frac{r}{r+ \sigma}$. Furthermore, it approaches the null risk from below as $\gamma \rightarrow \infty$. If $r^2 < \sigma^2$, the prediction risk decreases monotonically, approaching the null risk from above as $\gamma \rightarrow \infty$.
As mentioned in the main text, the prediction risk does not change as the inputs are rescaled. However, the parameter norm does change. The parameter norm is shown for different scaling in  Fig.~\ref{fig:isotropic-predrisk-and-norm}(b). We discuss the behavior for each case in the main text. We also illustrate the effect of different input scalings on the adversarial risk in Fig.~\ref{fig:double-descent-l2-isotropic}.

\begin{figure}
    \centering
    \subfloat[Prediction risk]{\includegraphics[width=0.5\columnwidth]{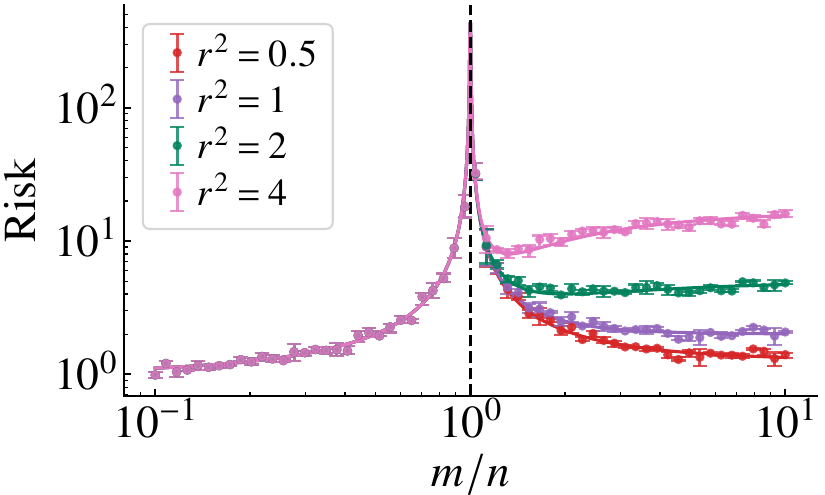}}
    \subfloat[$\ell_2$ parameter norm]{\includegraphics[width=0.5\columnwidth]{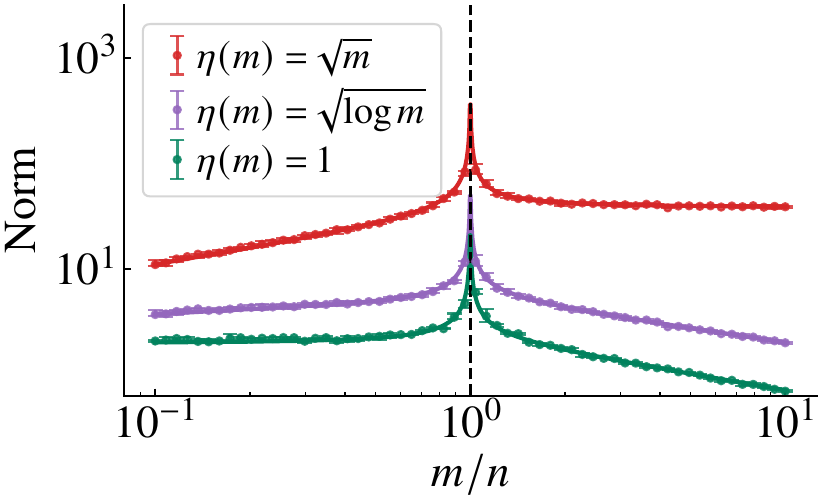}}
    \caption{\emph{Isotropic features.}  In (a), we show the \textbf{prediction risk} for $r^2 \in \{0.5, 1, 2, 4\}$. In (b), we show the the \textbf{parameter norm} for $r^2 = 2$ and for different choices of scaling of the input. The remaining parameters are the same as in Fig.~\ref{fig:double-descent-l2-isotropic}.}
    \label{fig:isotropic-predrisk-and-norm}
\end{figure}

\begin{figure}
    \centering
    \begin{tabular}{c|c|c|c}
        & $r^2 = 0.5$ &  $r^2 = 1$ &  $r^2 = 2$\\\hline
        \rotatebox{90}{\footnotesize $\,\,\,\,\,\,\,\,\,\eta(m) = 1\,\,\,\,\,\,\,\,\,\,$} & \includegraphics[width=0.13\textwidth]{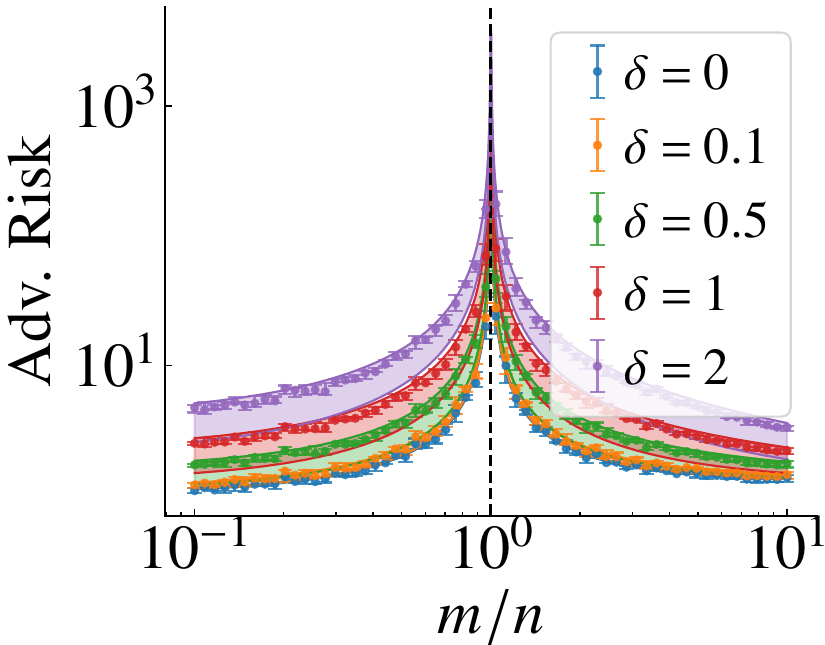} & \includegraphics[width=0.13\textwidth]{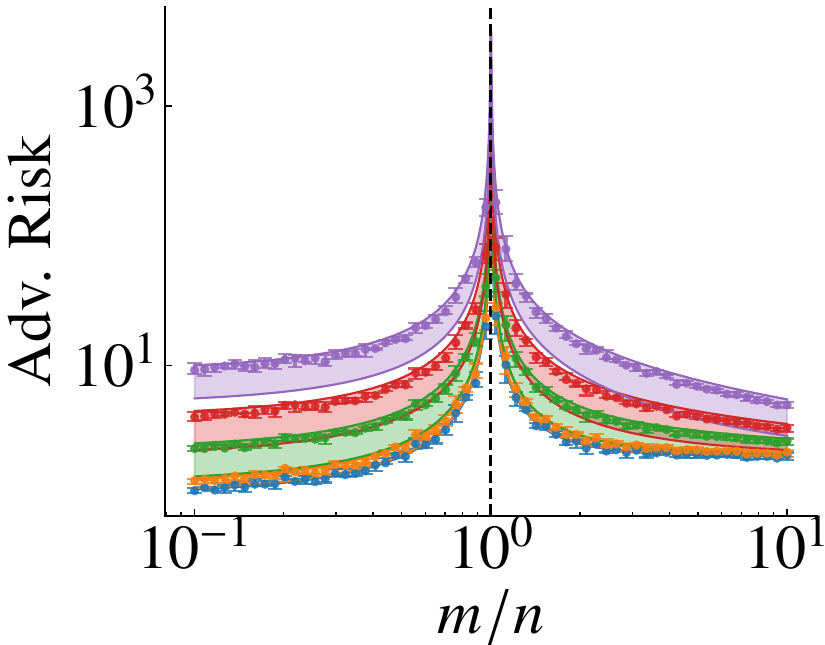}& \includegraphics[width=0.13\textwidth]{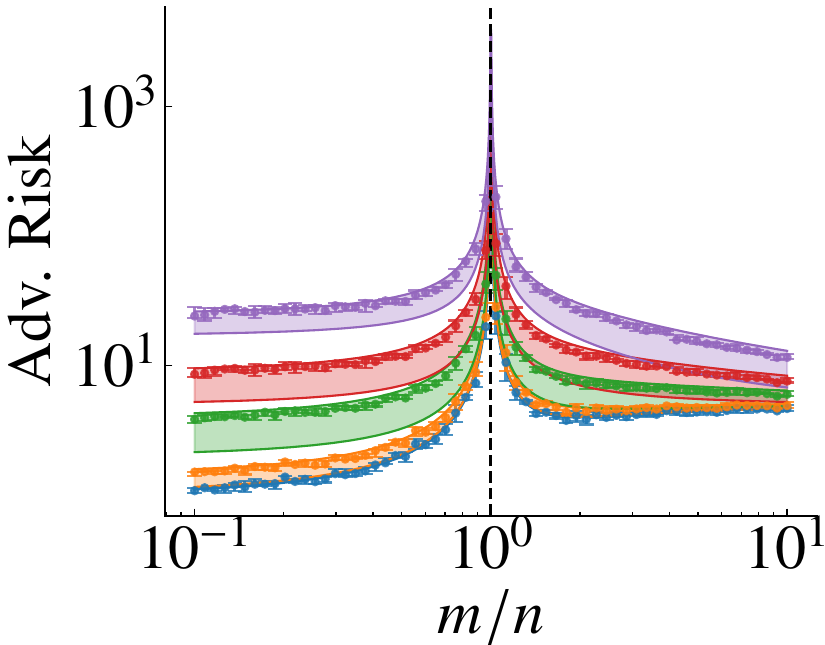}\\\hline
        \rotatebox{90}{\footnotesize $\,\,\eta(m) = \sqrt{\log{m}}\,\,\,\,$} & \includegraphics[width=0.13\textwidth]{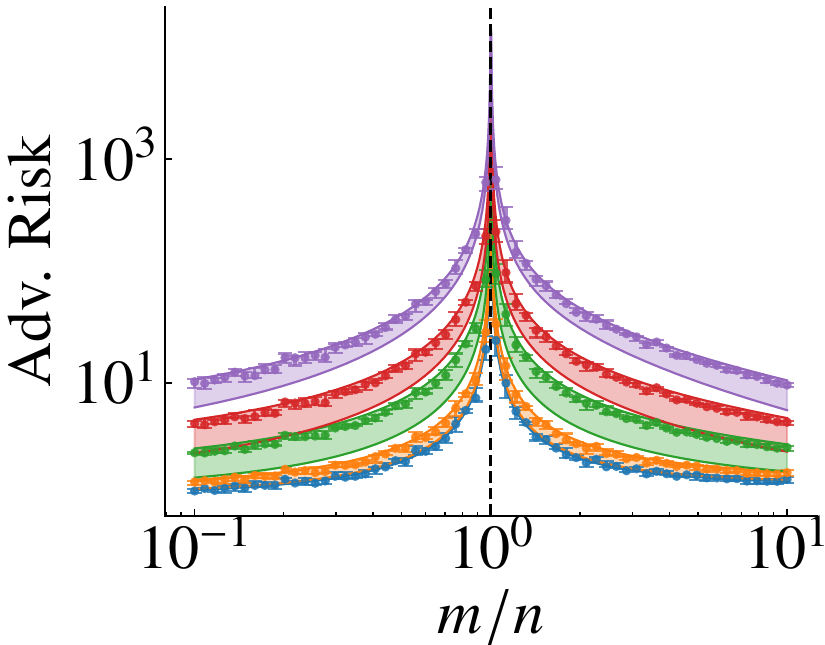} & \includegraphics[width=0.13\textwidth]{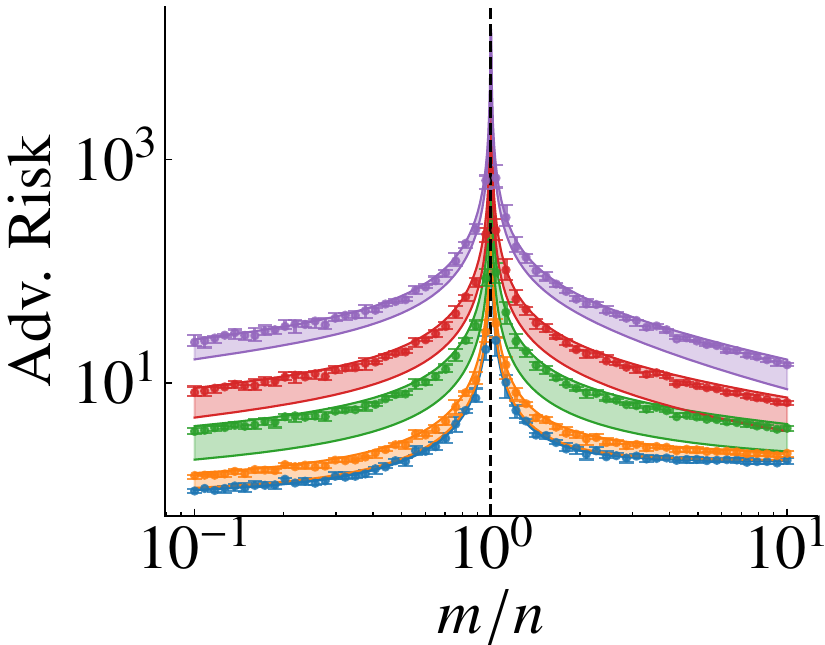}& \includegraphics[width=0.13\textwidth]{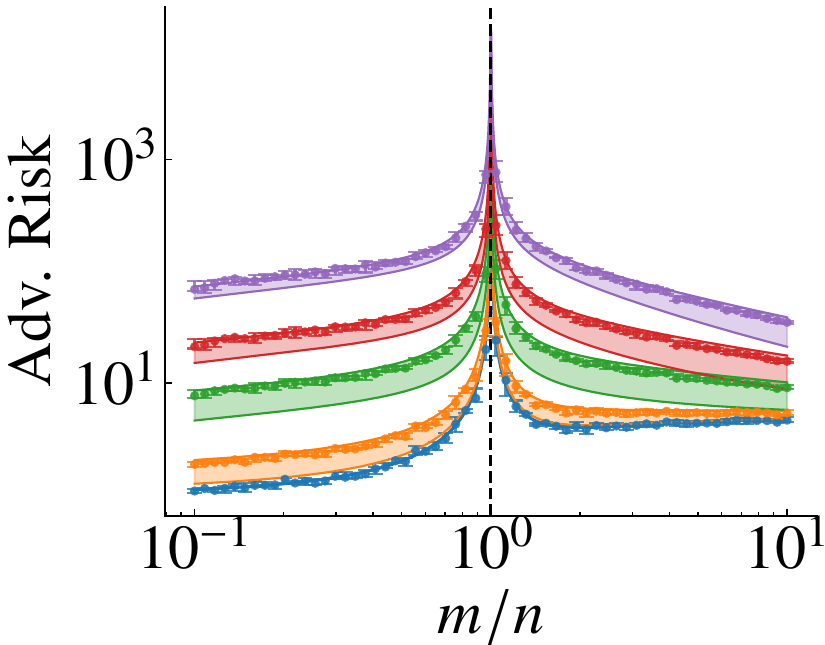} \\\hline
        \rotatebox{90}{\footnotesize $\,\,\,\,\,\,\,\, \eta(m) = \sqrt{m} \,\,\,\,\,\,\,\,$} & \includegraphics[width=0.13\textwidth]{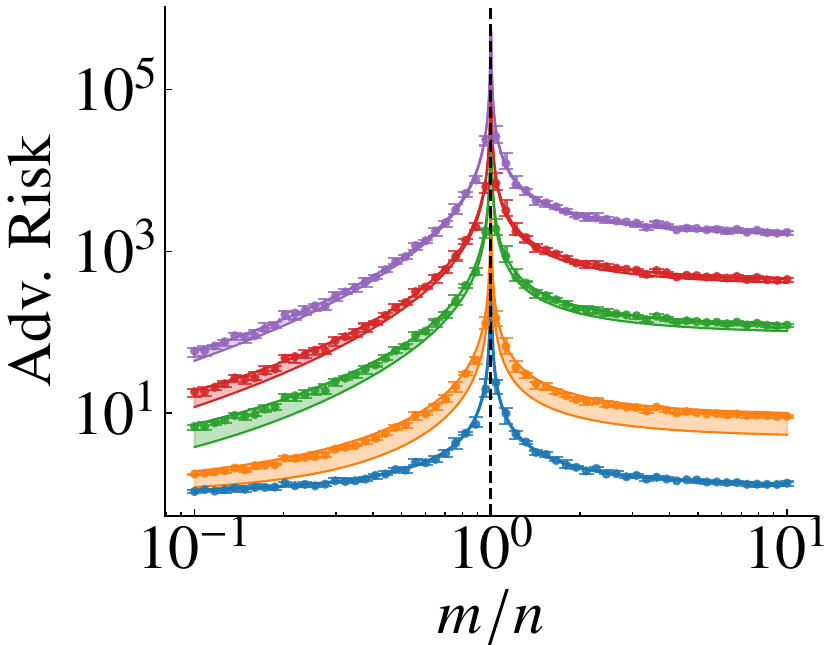} & \includegraphics[width=0.13\textwidth]{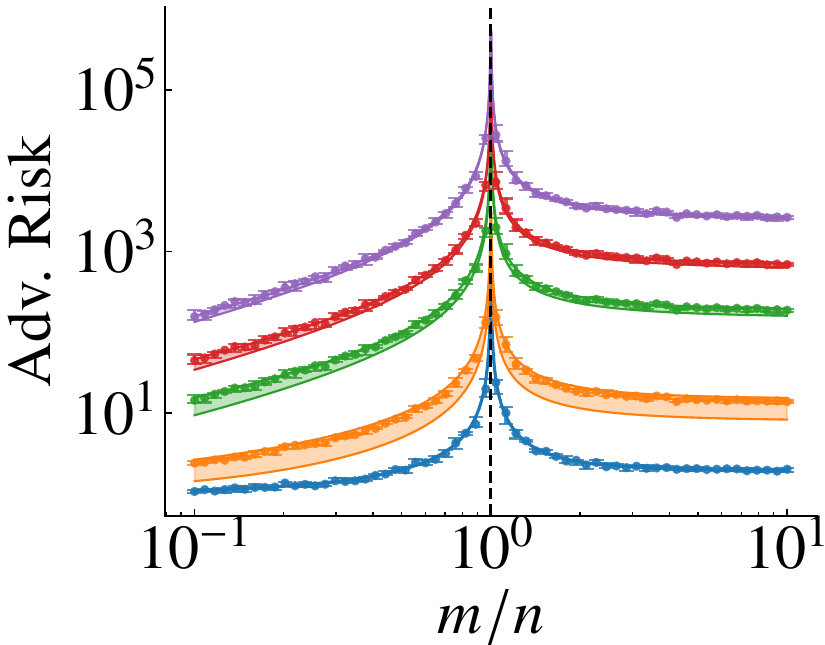}& \includegraphics[width=0.13\textwidth]{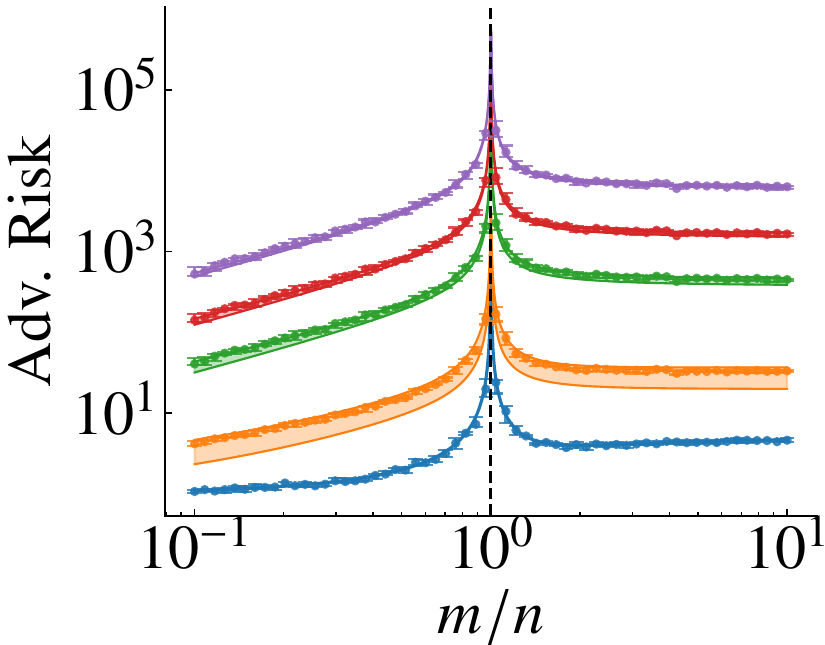}
    \end{tabular}
    \caption{\emph{Adversarial $\ell_2$-risk, isotropic features.}  The setup is the same as in Fig.~\ref{fig:double-descent-l2-isotropic} but for a \textit{different scaling} and different values of the parameter norm $r^2$.}
    \label{fig:double-descent-l2-isotropic-all}
\end{figure}

\section{Equicorrelated features model}
\label{sec:equicorrelated-model}

Here, we consider the case where the features are $\rho$-equicorrelated. Let $\Sigma$ be such that its $(i,j)$-th entry is $\Sigma_{i, j} = 1$ if $i = j$ and  $\Sigma_{i, j} = 
\rho$ otherwise
and $x = \Sigma^{1/2} z$ for $z$ composed of i.i.d. features with zero mean, unitary variance, and bounded moments of order greater than 4 that is finite. In this case, the following result holds:
\begin{lemma}
\label{thm:equicorrelated-features}
Assume that $x_i$ is generated as described above. Also, assume that $\beta\sim \N\left(0, \frac{r^2}{m}I\right)$. Then, as $p, n\rightarrow \infty$ $p/n \rightarrow \gamma$, it holds almost surely that:
\begin{eqnarray}
    \E{\beta}{R} &\rightarrow&
    \begin{cases}
    \sigma^2 \frac{\gamma}{1 - \gamma}, \gamma < 1, \\
    r^2 (1-\rho)(1 - \frac{1}{\gamma}) + \sigma^2 \frac{1}{\gamma - 1}, \gamma > 1.
    \end{cases}\\
     \E{\beta}{L_2}  &\rightarrow &
    \begin{cases}
    r^2 + \sigma^2 \frac{\gamma}{(1 - \gamma)(1-\rho)}, \gamma < 1, \\
    r^2 \frac{1}{\gamma} + \sigma^2 \frac{1}{(\gamma - 1)(1-\rho)}, \gamma > 1.
    \end{cases}
\end{eqnarray}
\end{lemma}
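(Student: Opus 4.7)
The plan is to start from the bias--variance decomposition in Lemma~\ref{thm:bias-variance-decomposition}, take an additional expectation over $\beta \sim \N(0, (r^2/m) I)$, and exploit the rank-one structure $\Sigma = (1-\rho) I + \rho\, \mathbf{1}\mathbf{1}^\trnsp$ to reduce the bulk of the computation to the isotropic asymptotics already established in Lemma~\ref{thm:asymptotics-linear-regression}, with the spike contributing only subleading corrections. Writing $X = Z\Sigma^{1/2}$ with $Z$ isotropic, all the required random-matrix quantities can be expressed in terms of $Z$ and $\Sigma$.

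After averaging over $\beta$ one has $\E{\beta}{\|\Pi \beta\|_\Sigma^2} = (r^2/m)\tr(\Sigma \Pi)$ and $\E{\beta}{\|\Phi\beta\|_2^2} = (r^2/m)\tr(\Phi)$. In the underparameterized case $\gamma < 1$ we have $\Phi = I$, so $\Pi = 0$, the bias in the risk vanishes, and $\E{\beta}{\|\Phi\beta\|_2^2} = r^2$. In the overparameterized case $\gamma > 1$, I would split $\tr(\Sigma\Pi) = (1-\rho)\tr(\Pi) + \rho \|\Pi\mathbf{1}\|^2 = (1-\rho)(m-n) + \rho \|\Pi \mathbf{1}\|^2$; the first summand supplies the $(1-\rho)(1-1/\gamma) r^2$ bias, and the second is $o(m)$ because the spike of $\Sigma$ with strength $1+(m-1)\rho = \Theta(m)$ lies far above the BBP threshold, so $\hat\Sigma$ has a single outlier eigenvalue whose eigenvector is aligned with $\mathbf{1}/\sqrt{m}$ up to a $o(1)$ correction; that eigenvector lies in the row space of $X$, so $\Pi\mathbf{1} \to 0$.

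For the variance terms in the overparameterized case, I use $\tr(\hat\Sigma^\pinv) = n\tr((Z\Sigma Z^\trnsp)^{-1})$ together with the decomposition $Z\Sigma Z^\trnsp = (1-\rho) ZZ^\trnsp + \rho (Z\mathbf{1})(Z\mathbf{1})^\trnsp$. Sherman--Morrison gives a leading term $\frac{1}{1-\rho}\tr((ZZ^\trnsp)^{-1}) \to \frac{1}{(1-\rho)(\gamma-1)}$ by the isotropic asymptotics of Lemma~\ref{thm:asymptotics-linear-regression}. The correction vanishes: by rotational invariance, $(Z\mathbf{1})^\trnsp (ZZ^\trnsp)^{-k}(Z\mathbf{1})$ has the same law as $m \cdot Z_{:,1}^\trnsp (ZZ^\trnsp)^{-k} Z_{:,1}$, and a leave-one-out decoupling of $Z_{:,1}$ from the remaining columns together with Hanson--Wright concentration yields $(Z\mathbf{1})^\trnsp (ZZ^\trnsp)^{-1}(Z\mathbf{1}) \approx m/\gamma$, making the correction $O(1/n)$. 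The same splitting for $\tr(\hat\Sigma^\pinv \Sigma) = (1-\rho)\tr(\hat\Sigma^\pinv) + \rho\, \mathbf{1}^\trnsp\hat\Sigma^\pinv \mathbf{1}$ produces the $\frac{1}{\gamma-1}$ prefactor in the risk (the $(1-\rho)$ cancels), while the second summand is negligible by the eigenvector alignment above ($\mathbf{1}^\trnsp \hat\Sigma^\pinv \mathbf{1} \approx m/(\rho m) = 1/\rho$, vanishing once divided by $n$). The underparameterized case is handled analogously using $\tr(\hat\Sigma^{-1}) = \tr(\Sigma^{-1}\hat\Sigma_z^{-1})$ and Sherman--Morrison on $\Sigma^{-1}$ directly.

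The main obstacle is making the spike alignment rigorous in both places it is used ($\Pi\mathbf{1} \to 0$ in the bias, and smallness of $\mathbf{1}^\trnsp\hat\Sigma^\pinv \mathbf{1}$ in the variance), together with the quadratic-form concentrations $(Z\mathbf{1})^\trnsp (ZZ^\trnsp)^{-k}(Z\mathbf{1})$ for $k = 1, 2$. This can be handled either by invoking the standard outlier and eigenvector results for spiked covariance models, or in a self-contained manner by combining rotational invariance with a Sherman--Morrison leave-one-out step to decouple one column of $Z$ from the rest, after which Hanson--Wright yields the required concentration. The remaining computation is algebraic, matching the constants against the stated asymptotics.
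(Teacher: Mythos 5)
Your proposal is correct in substance, but it takes a genuinely different route from the paper. The paper's proof is essentially citation-based: the risk asymptotics are read off from Corollary~7 of \citet{hastie_surprises_2019}, and for $L_2$ the paper only computes the limiting spectral distribution of the equicorrelated $\Sigma$ (the single spike $1+(m-1)\rho$ is negligible in the empirical spectral law, so $H = \delta_{1-\rho}$), solves the fixed-point equation to get $c_0 = \frac{1}{\gamma(\gamma-1)(1-\rho)}$, and plugs this into Corollary~2 of the same reference. You instead work directly from Lemma~\ref{thm:bias-variance-decomposition}, average over $\beta$ to get $\frac{r^2}{m}\tr(\Sigma\Pi)$ and $\frac{r^2}{m}\tr(\Phi)$, and exploit the rank-one structure via Sherman--Morrison so that the leading terms reduce to the isotropic asymptotics of Lemma~\ref{thm:asymptotics-linear-regression}; your bookkeeping of which terms survive is right: $\frac{r^2}{m}(1-\rho)\tr(\Pi)\to r^2(1-\rho)(1-\tfrac1\gamma)$, the cancellation of $(1-\rho)$ in $\tr(\hat{\Sigma}^\pinv\Sigma)$, the $\frac{1}{(1-\rho)(\gamma-1)}$ factor in $\frac1n\tr(\hat{\Sigma}^\pinv)$, and the vanishing of the spike corrections. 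The paper's route buys brevity and directly inherits the weak moment assumptions of \citet{hastie_surprises_2019}; yours buys a self-contained argument that makes visible why the spike is asymptotically irrelevant. Two small repairs are worth noting. First, Hanson--Wright and the distributional exchange of $Z\mathbf{1}$ with $\sqrt{m}$ times a single column require sub-Gaussian or rotationally invariant entries, neither of which is assumed here (only moments of order greater than $4$); but you do not need that precision, since every spike term only has to be shown negligible: Bai--Yin bounds $\lambda(ZZ^\trnsp)=\Theta(m)$, the law of large numbers $\|Z\mathbf{1}\|_2^2=\Theta(nm)$, and the exact Sherman--Morrison identities for the quadratic forms in $u=Z\mathbf{1}$ already give $\|\Pi\mathbf{1}\|_2^2 = o(m)$ and $\rho\,\mathbf{1}^\trnsp\hat{\Sigma}^\pinv\mathbf{1} = O(1)$, which is all that is needed, so the BBP eigenvector-alignment machinery can be dropped entirely. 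Second, a convention mismatch: starting from Lemma~\ref{thm:bias-variance-decomposition} produces an extra additive $\sigma^2$ in the risk, which the stated lemma (following the convention of \citet{hastie_surprises_2019}, unlike Lemma~\ref{thm:asymptotics-linear-regression}) omits; this is a constant offset and does not affect the rest of your argument.
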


The asymptotics for $R$ are presented in Corollary 7 from \citet{hastie_surprises_2019}. The proof for the asymptotics of $L_2$ follows from Corollary 2 of the same paper and relies on the nice properties of the equicorrelated matrix. The next proposition gives the eigenvalues and eigenvectors of such a matrix. We use $\vec{\mathbf{1}}$ to denote a vector of dimension $m$ with all its entries equal to 1.
Furthermore, $s_i$, $i = 1, \dots, n$ denotes the eigenvalues of $\Sigma$ and $v_i$ the corresponding eigenvectors.
\begin{proposition}
    \label{thm:equicorrelated-eigdecomposition}
    Let $\Sigma\in\R^{m\times m}$ be an equicorrelated matrix. 
    Then $s_1 = 1 + (m-1) \rho$  and $s_i = (1-\rho)$ for every $i \not= 1$. Moreover, $v_1 = \frac{1}{\sqrt{m}}\vec{\mathbf{1}} $ and $v_i$ for $i \not= 1$ is such that the sum of its entries is equal to zero, that is, $v_{i}^\trnsp \vec{\mathbf{1}}  = 0$.
\end{proposition}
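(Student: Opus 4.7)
The plan is to prove this proposition by exploiting the rank-one structure of $\Sigma$. First I would write
\[
\Sigma \;=\; (1-\rho) I_m \;+\; \rho\, \vec{\mathbf{1}} \vec{\mathbf{1}}^\trnsp,
\]
which is immediate from the definition in Eq.~\eqref{eq:equicorrelated}. This reduces the spectral analysis to that of a rank-one perturbation of a scalar multiple of the identity, whose eigen-structure is standard.

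Next I would verify the first eigenpair directly. Since $\vec{\mathbf{1}}^\trnsp \vec{\mathbf{1}} = m$, I get
\[
\Sigma \vec{\mathbf{1}} \;=\; (1-\rho)\vec{\mathbf{1}} + \rho m \vec{\mathbf{1}} \;=\; \bigl(1 + (m-1)\rho\bigr)\,\vec{\mathbf{1}},
\]
so $s_1 = 1 + (m-1)\rho$ with (unit-norm) eigenvector $v_1 = \vec{\mathbf{1}}/\sqrt{m}$. For any vector $v$ in the orthogonal complement $\vec{\mathbf{1}}^\perp$, the relation $\vec{\mathbf{1}}^\trnsp v = 0$ yields $\vec{\mathbf{1}} \vec{\mathbf{1}}^\trnsp v = 0$ and therefore $\Sigma v = (1-\rho) v$. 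The subspace $\vec{\mathbf{1}}^\perp$ has dimension $m-1$, so it furnishes an $(m-1)$-fold eigenspace at eigenvalue $1-\rho$.

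Finally, I would invoke the spectral theorem: $\Sigma$ is real symmetric, hence diagonalizable in an orthonormal basis, and I have already exhibited $1 + (m-1) = m$ linearly independent eigenvectors, so no other eigenvalues exist. Any orthonormal basis $\{v_2, \ldots, v_m\}$ of $\vec{\mathbf{1}}^\perp$ then satisfies $v_i^\trnsp \vec{\mathbf{1}} = 0$ for $i\neq 1$ by construction, giving exactly the stated description of the eigenvectors. There is no real obstacle in this argument; the only thing to be careful about is recording that the eigenvectors $v_i$ for $i\neq 1$ are only specified up to an orthogonal change of basis inside $\vec{\mathbf{1}}^\perp$, which is why the statement of the proposition characterizes them by the single constraint $v_i^\trnsp \vec{\mathbf{1}} = 0$ rather than pinning down specific vectors.
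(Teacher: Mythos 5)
Your proof is correct: the decomposition $\Sigma = (1-\rho) I_m + \rho\,\vec{\mathbf{1}}\vec{\mathbf{1}}^\trnsp$, the direct verification that $\vec{\mathbf{1}}$ is an eigenvector with eigenvalue $1+(m-1)\rho$, and the observation that the $(m-1)$-dimensional subspace $\vec{\mathbf{1}}^\perp$ is an eigenspace for $1-\rho$, together with a dimension count via the spectral theorem, is exactly the standard argument for this statement. The paper itself states the proposition without proof (treating it as a known fact about equicorrelated matrices), so your write-up simply supplies the omitted details, and it does so correctly, including the caveat that $v_2,\dots,v_m$ are determined only up to an orthonormal basis of $\vec{\mathbf{1}}^\perp$.
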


Following \citet{hastie_surprises_2019}, let us define:
\begin{equation}
\widehat{H}_{n}(s) =\frac{1}{m} \sum_{i=1}^{m} I{\left\{s \geq s_{i}\right\}},
\end{equation}
where $I$ is the indicator function, and is equal to one when $s \geq s_{i}$ and equal to $0$ otherwise. For the equicorrelated matrix, we have that:
\begin{equation*}
    \widehat{H}_{n}(s) = \frac{m-1}{m}I\left\{s \geq \left(1 - \rho\right)\right\} +
     \frac{1}{m}I\left\{s \geq \left(1 +(m-1) \rho \right)\right\}.
\end{equation*}
Hence, $\widehat{H}_{n}(s) \rightarrow H(s) = I\left\{s \geq 1 - \rho\right\}$ at all continuity points. Hence, $dH= \delta_{1-\rho}$ and we have $\ c_0 = \frac{1}{\gamma(\gamma -1)(1-\rho)}$ which when replaced in Corollary 2 establishes the result.\footnote{There is a small typo in  Corollary 2 from \citet{hastie_surprises_2019}. The term $r^2$ should have appeared multiplying the integral for the overparameterized case.}

We illustrate in Fig.~\ref{fig:equicorrelated-l2} empirical experiments and asymptotic results for equicorrelated feature models under $\ell_2$-adversarial attacks. In Fig.~\ref{fig:equicorrelated-lp} we illustrate it for $\ell_p$ attacks when $p \in \{1.5, 2, 20\}$ and in Fig.~\ref{fig:equicorrelated-linf}, for $p=\infty$.

\begin{figure}
    \centering
    \subfloat[$\eta(m) = 1$]{\includegraphics[height=0.19\columnwidth]{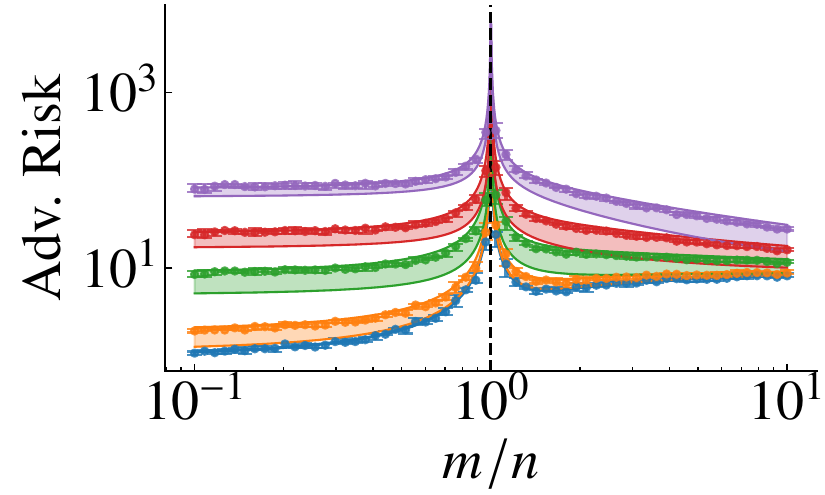}}
    \subfloat[$\eta(m) = \sqrt{\log{m}}$]{\includegraphics[height=0.19\columnwidth]{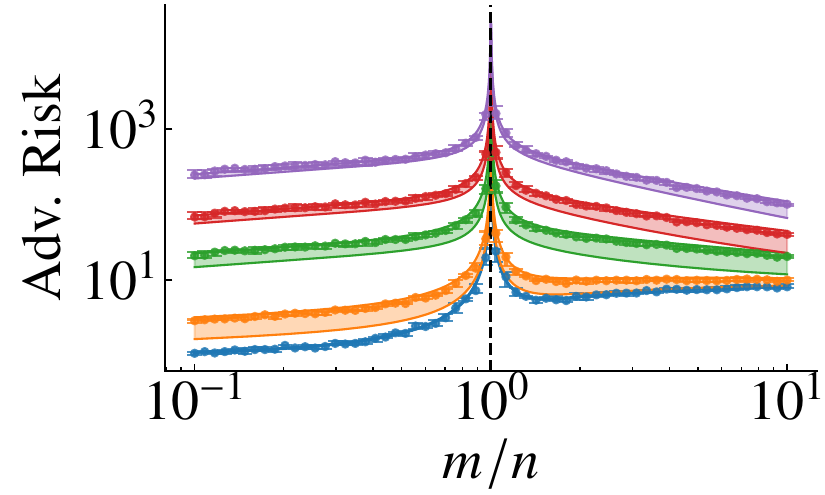}}
    \subfloat[$\eta(m) = \sqrt{m}$]{\includegraphics[height=0.19\columnwidth]{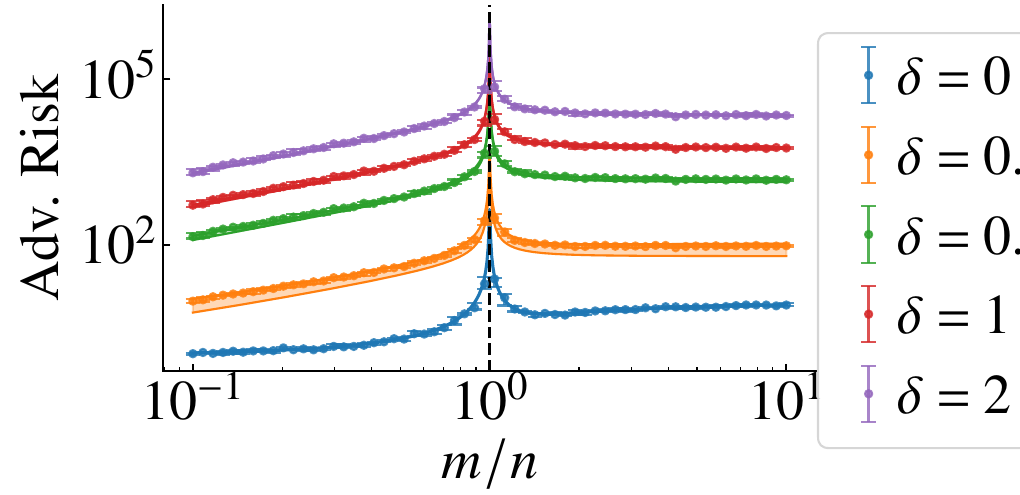}}
    \caption{\emph{Adversarial $\ell_2$-risk, equicorrelated features.} 
     The solid line show the upper and lower bounds on the asymptotic risk obtained from Lemma~\ref{thm:equicorrelated-features}. The results are for equicorrelated features with $r^2 =4$, $\sigma^2 = 1$, $\rho=0.5$.  The error bars give the median and the 0.25 and 0.75 quantiles obtained from numerical experiments (10 realizations) with a fixed training dataset of size $n=300$. We show the results for different scaling choices.}
    \label{fig:equicorrelated-l2}
\end{figure}

\begin{figure}
    \centering
    \subfloat[$\eta(m) = 1$]{\includegraphics[width=0.33\columnwidth]{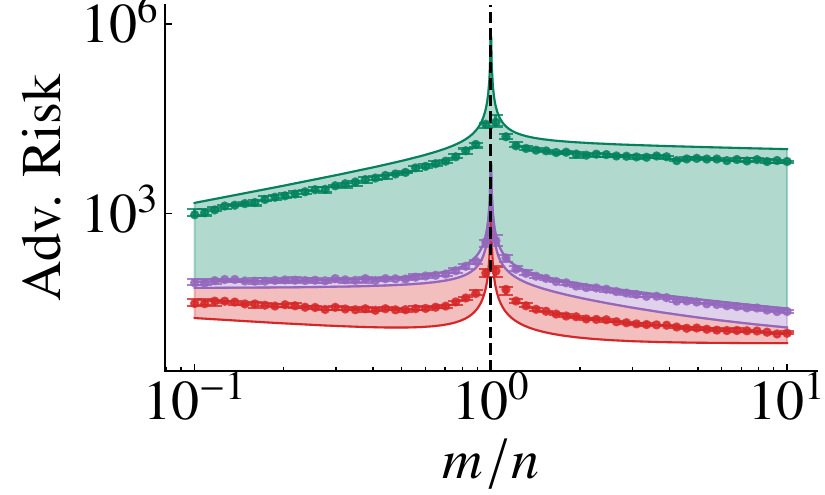}}
    \subfloat[$\eta(m) = \sqrt{\log{m}}$]{\includegraphics[width=0.33\columnwidth]{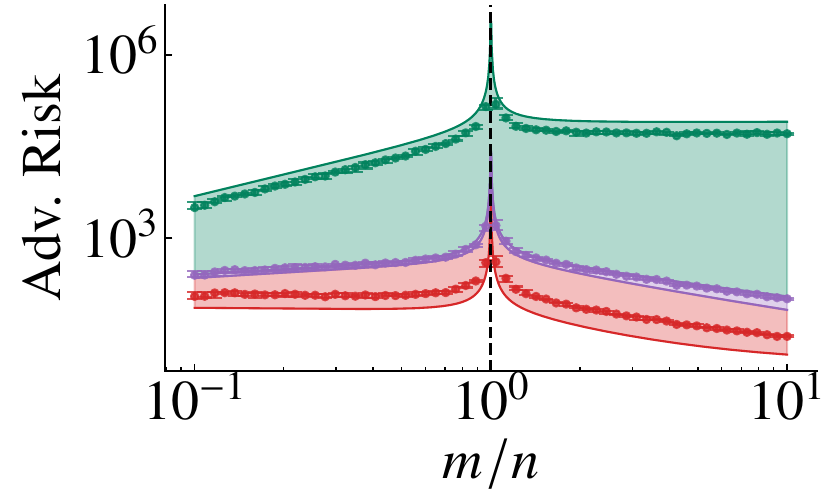}}
    \subfloat[$\eta(m) = \sqrt{m}$]{\includegraphics[width=0.33\columnwidth]{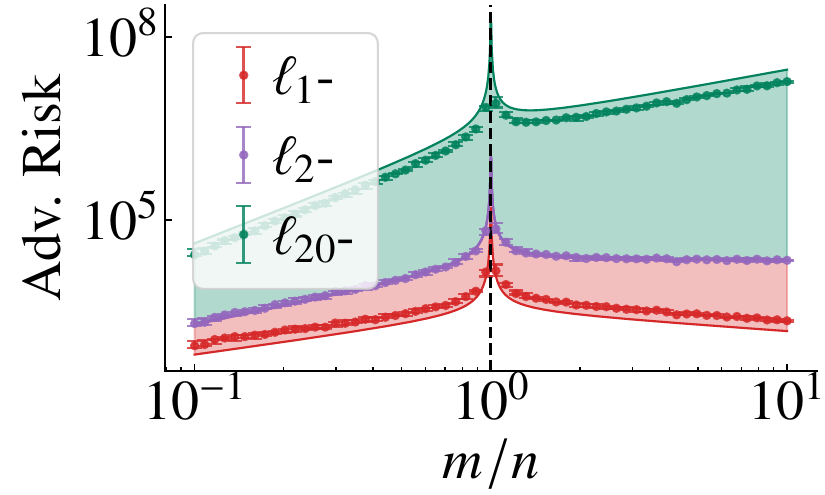}}
    \caption{\emph{Adversarial $\ell_p$-risk, equicorrelated features.} The setup is the same as in Fig.~\ref{fig:equicorrelated-l2}, but for $\ell_p$-adversarial attacks, $p\in \{ 1, 2,\infty\}$ and $\delta=2$. We show the results for different scaling choices.}
    \label{fig:equicorrelated-lp}
\end{figure}

\begin{figure}
    \centering
    \subfloat[$\ell_\infty$-adversarial Risk]{\includegraphics[width=0.5\columnwidth]{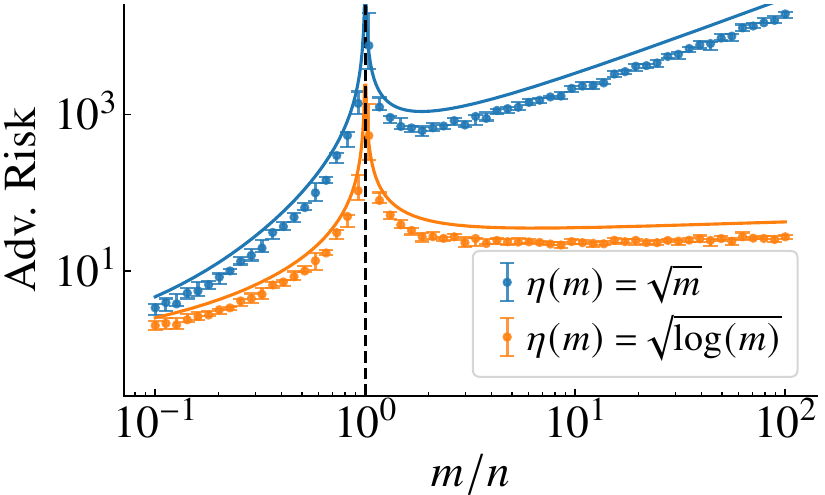}}
    \subfloat[$\|\mhat{\beta}\|_1$]{\includegraphics[width=0.5\columnwidth]{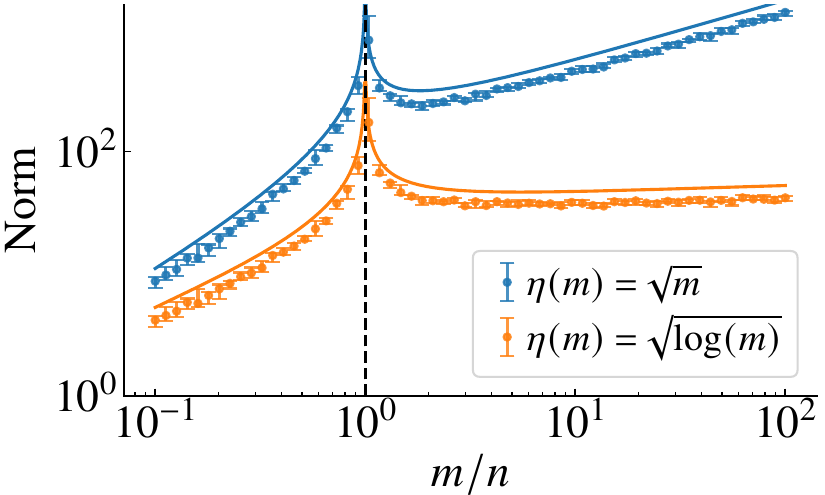}}
    \caption{\emph{Adversarial $\ell_\infty$-risk and $\ell_1$ parameter norm, equicorrelated features.} The asymptotic upper bound is indicated by the full trace.  The error bars give the median and the 0.25 and 0.75 quantiles obtained from numerical experiments (10 realizations). The analysis is performed for a fixed training dataset of size $n=100$ and  for adversarial disturbances of magnitude $\delta = 0.1$. The results are for $r^2 =1$, $\sigma^2 = 1$, $\rho=0.5$. We show the results for two different scaling: $\eta(m) = \sqrt{m}$ and $\eta(m) = \sqrt{\log(m)}$. }
    \label{fig:equicorrelated-linf}
\end{figure}

\section{Latent space model}
\label{sec:latent-model-extension}

Here we show that the latent model described in Equations~\eqref{eq:latent-model-features} and~\eqref{eq:latent-model-outputs} is actually equivalent to a special case of the linear model described in Eq.~\eqref{eq:linear-data-model} for the case where the covariates and the noise are both normal, i.e. $x_i \sim \N(0, \Sigma)$ and $\epsilon_i \sim \N(0, \sigma)$. In both formulations, the pair $(x_i, y_i) \in \R^{m+1}$ is jointly a multivariate Gaussian with zero mean. By matching the covariances, we can conclude that the formulations are equivalent for
\begin{eqnarray}
    \beta &=& W (I_d + W^\trnsp W)^{-1} \theta, \nonumber\\
    \Sigma &=& I_p + W W^\trnsp, \label{eq:equivalent-parameters}\\
    \sigma^2 &=& \sigma^2_{\xi} + \theta^\trnsp (I_d + W^\trnsp W)^{-1}\theta. \nonumber
\end{eqnarray}
That is, from  Eq.~\eqref{eq:linear-data-model} the covariance matrix of the joint Gaussian vector $(x_i, y_i) \in \R^{m+1}$ is given by 
\begin{equation}
    \begin{bmatrix}
        \Exp{x_i x_i^\trnsp} & \Exp{y_i x_i} \\
        \Exp{y_i x_i^\trnsp} & \Exp{y_i y_i}
    \end{bmatrix}
    = 
    \begin{bmatrix}
        \Sigma &  \beta^\trnsp \Sigma  \\
        \Sigma \beta  & \beta^\trnsp \Sigma \beta + \sigma^2
    \end{bmatrix}.
\end{equation}
On the other hand, the latent model~\eqref{eq:latent-model-features} and~\eqref{eq:latent-model-outputs} implies the following covariance matrix for this same vector
\begin{multline*}
    \begin{bmatrix}
        \Exp{W z_i z_i^\trnsp W^\trnsp + u_i u_i^\trnsp} &  \Exp{\theta^\trnsp z_i z_i^\trnsp W + \xi_i u_i} \\
        \Exp{\theta^\trnsp z_i z_i^\trnsp W + \xi_i u_i^\trnsp} & \Exp{\theta^\trnsp z_i z_i^\trnsp \theta + \xi_i \xi_i}
    \end{bmatrix}
    =\\
=\begin{bmatrix}
        W W^\trnsp + I_d &  \theta^\trnsp W^\trnsp\\
        W\theta  & \theta^\trnsp\theta + \sigma_\xi^2
    \end{bmatrix}.
\end{multline*}

Using the matrix inversion lemma $(I_d + W^\trnsp W)^{-1} = I_d + W^\trnsp (I - W W^\trnsp)^{-1}W$ and the identity
$(I_d + W^\trnsp W)^{-1} W^\trnsp = W (I_d + W W^\trnsp)^{-1} $ it is easy to check that~\eqref{eq:equivalent-parameters} renders the two covariance matrices to be equal and the two data generation procedures equivalent.

Now, the hypothesis that $W^\trnsp W = \nicefrac{m}{d} I_d$ makes the calculation of the asymptotics easy. First notice that: 
$\|\beta\|^2_2 = \beta^\trnsp \beta = \theta^\trnsp (I_d + W^\trnsp W)^{-1} W^T W (I_d + W^\trnsp W)^{-1}  \theta$
using the fact that $W^T W = \nicefrac{m}{d} I_d$ we obtain:
$\|\beta\|^2_2  = \frac{\nicefrac{m}{d} }{(1 + \nicefrac{m}{d} )^2} \|\theta\|_2^2$.

Moreover, it is quite straightforward to compute the eigenvalues and eigenvectors of $\Sigma = W W^\trnsp + I_p$. From the hypothesis, it follows that $\Sigma W = (1 + \nicefrac{m}{d})W$. Hence, the first $d$ eigenvectors $v_i$ are the columns of $\sqrt{\nicefrac{d}{m}} W$ and the first $d$ eigenvalues are equal to $s_i = (1 + \nicefrac{m}{d})$. Now, the remaining eigenvectors are orthogonal to the columns of $W$ such that $W^\trnsp v_i = 0$ for $i > d$. The corresponding eigenvalues would be $s_i = 1$. 

Following \citet{hastie_surprises_2019}, let us define:
\begin{equation}
\widehat{H}(s) =\frac{1}{m} \sum_{i=1}^{m} I{\left\{s \geq s_{i}\right\}}, ~~
\widehat{G}(s) =\frac{1}{\|\beta\|_2^2} \sum_{i=1}^{m}(\beta^\trnsp v_i)^2I{\left\{s \geq s_{i}\right\}}, 
\end{equation}
Hence:
\begin{equation}
\label{eq:widehatH}
\widehat{H}(s) =\frac{d}{m} I\left\{ s \ge (1 + m/d)\right\} + \left(1 - \frac{d}{m}\right) I\left\{ s \ge 1\right\}.
\end{equation}
Moreover, $v_i^\trnsp\beta= 0$ if $i > d$, i.e., $\beta$  is orthogonal to the last $m - d$ eigenvectors. Hence, simple manipulation yields that:
\begin{equation}
\label{eq:widehatG}
\widehat{G}(s) =\frac{\nicefrac{d}{m}\|W^\trnsp \beta\|_2^2}{\|\beta\|_2^2} I\left\{ s \ge (1 + m/d)\right\}  = I\left\{ s \ge (1 + m/d)\right\}.
\end{equation}
Let us define $\psi = \frac{d}{m}$. 
\begin{proposition}
    Let $c_0 = c_0(\psi, \gamma)$ be the unique non-negative solution of the following second-order equation:
    \begin{equation}
    1-\frac{1}{\gamma}=\frac{1-\psi}{1+c_{0} \gamma}+\frac{\psi}{1+c_{0}\left(1+\psi^{-1}\right) \gamma}.
    \end{equation}
    Define:\footnote{We define $\mathscr{B}(\psi, \gamma)$ and $\mathscr{V}(\psi, \gamma)$ slightly different from~\citep{hastie_surprises_2019}. The reason is to make explicit the role of $r^2$ and $\sigma^2$. The formulas are equivalent.}
    \begin{equation}
    \begin{aligned}
    \mathscr{B}(\psi, \gamma) &=\left\{1+\gamma c_{0} \frac{\mathscr{E}_{1}(\psi, \gamma)}{\mathscr{E}_{2}(\psi, \gamma)}\right\} \cdot \frac{\left(1+\psi^{-1}\right)}{\left(1+c_{0} \gamma\left(1+\psi^{-1}\right)\right)^{2}}, \\
    \mathscr{V}(\psi, \gamma) &=  \gamma c_{0} \frac{\mathscr{E}_{1}(\psi, \gamma)}{\mathscr{E}_{2}(\psi, \gamma)}, \\
    \mathscr{E}_{1}(\psi, \gamma) &=\frac{1-\psi}{\left(1+c_{0} \gamma\right)^{2}}+\frac{\psi\left(1+\psi^{-1}\right)^{2}}{\left(1+c_{0} \gamma\right)^{2}}, \\
    \mathscr{E}_{2}(\psi, \gamma) &=\frac{1-\psi}{\left(1+c_{0} \gamma\right)^{2}}+\frac{1+\psi}{\left(1+c_{0}\left(1+\psi^{-1}\right) \gamma\right)^{2}},
    \end{aligned}
    \end{equation}
then:
\begin{eqnarray}
    R &\rightarrow& 
    \begin{cases}
    \sigma^2 \frac{\gamma}{1 - \gamma}, \gamma < 1, \\
    r^2 \mathscr{B}(\psi, \gamma)  + \sigma^2  \mathscr{V}(\psi, \gamma), \gamma > 1.
    \end{cases}
    \\
    L_2  &\rightarrow &
    \begin{cases}
    r^2 + \sigma^2 \frac{\gamma}{(1 - \gamma)(1 + \psi)}, \gamma < 1, \\
    r^2 \frac{c_0 \gamma (1 + \psi^{-1})}{1 + c_0 \gamma (1 + \psi^{-1})} + \sigma^2 c_0 \gamma, \gamma > 1.
    \end{cases}
\end{eqnarray}
\end{proposition}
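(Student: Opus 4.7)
The plan is to reduce the proposition to a direct application of the general random-matrix asymptotics for ridgeless least squares with misspecified covariance, as developed by \citet{hastie_surprises_2019} (their general Theorems~2 and~3 together with Corollary~2). Those results express the limiting bias and variance of $\hat\beta$, and the limiting parameter norm, entirely in terms of two spectral objects attached to $\Sigma$ and $\beta$: the empirical spectral distribution $\widehat H$ of the covariance and the $\beta$-weighted spectral distribution $\widehat G$. So the task reduces to computing these two measures under our latent-space construction and then plugging them into the off-the-shelf formulas.

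First I would invoke the equivalence already established in Section~\ref{sec:latent-model-extension}: the latent model~\eqref{eq:latent-model-features}--\eqref{eq:latent-model-outputs} is the standard linear model~\eqref{eq:linear-data-model} with $\Sigma = I_m + WW^\trnsp$, $\beta = W(I_d + W^\trnsp W)^{-1}\theta$, and effective noise $\sigma^2 = \sigma_\xi^2 + \theta^\trnsp(I_d + W^\trnsp W)^{-1}\theta$. Under the hypothesis $W^\trnsp W = (m/d) I_d$, one checks that the columns of $\sqrt{d/m}\,W$ are orthonormal eigenvectors of $\Sigma$ with eigenvalue $1 + m/d$, while any orthogonal complement gives $m - d$ eigenvectors with eigenvalue $1$. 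This yields $\widehat H$ exactly as in Eq.~\eqref{eq:widehatH}. Because $\beta \in \operatorname{range}(W)$, it is orthogonal to every eigenvector with eigenvalue $1$, hence $\widehat G$ is the point mass at $1 + m/d$ from Eq.~\eqref{eq:widehatG}. Finally, $\|\beta\|_2^2 = \theta^\trnsp(I_d + W^\trnsp W)^{-1} W^\trnsp W (I_d + W^\trnsp W)^{-1}\theta = \frac{m/d}{(1 + m/d)^2}\|\theta\|_2^2$, which fixes the scaling constant $r^2$.

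For the underparameterized regime $\gamma < 1$, $\Sigma$ is invertible and the OLS estimator is unbiased, so only the variance term survives. The formula $R \to \sigma^2 \gamma/(1-\gamma)$ is then obtained from Lemma~\ref{thm:bias-variance-decomposition} together with the classical Marchenko--Pastur result $n^{-1}\tr(\hat\Sigma^{-1}\Sigma) \to \gamma/(1-\gamma)$ (which is independent of $\widehat H$ by rotational properties of the trace). For $L_2$, Eq.~\eqref{eq:l2norm-decomposition} gives $r^2 + (\sigma^2/n)\tr(\hat\Sigma^{-1})$, and here the spectrum of $\Sigma$ enters nontrivially: one uses the standard Stieltjes-transform identity for the sample covariance with population spectrum $\widehat H$, evaluated at zero, giving $(\sigma^2 n)^{-1}\tr(\hat\Sigma^{-1}) \to \sigma^2 \gamma/[(1-\gamma)(1+\psi)]$ after integrating the two-point measure in~\eqref{eq:widehatH}.

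For the overparameterized regime $\gamma > 1$, I would apply the Hastie--Montanari--Rosset--Tibshirani master formulas directly. Their fixed-point constant $c_0$ is defined as the unique nonnegative solution of $1 - 1/\gamma = \int \frac{dH(s)}{1 + c_0 \gamma s}$; substituting the two-point $\widehat H$ from~\eqref{eq:widehatH} yields exactly the second-order equation in the statement. The bias and variance functionals in their theorem are ratios of integrals $\int s^2\, dG/(1 + c_0\gamma s)^2$ and $\int dH/(1 + c_0\gamma s)^2$; evaluating these against our two-point $\widehat H$ and point-mass $\widehat G$ produces the stated $\mathscr E_1$, $\mathscr E_2$, $\mathscr B$ and $\mathscr V$. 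The $L_2$ asymptotic in the interpolating regime follows the same recipe: $\|\Phi\beta\|_2^2$ yields the $r^2$ term (after cancelling the alignment factor with $\widehat G$), and $(\sigma^2/n)\tr(\hat\Sigma^\pinv)$ yields the $\sigma^2 c_0 \gamma$ term via the companion Stieltjes transform. The only real labour is the bookkeeping of these four integrals against a two-point measure, and the main obstacle is not technical but notational: one must carefully reconcile the normalization conventions of~\citet{hastie_surprises_2019} with our definitions of $r^2$ and $\sigma^2$, which is precisely the footnote warning the reader that $\mathscr B$ and $\mathscr V$ have been redefined to make the dependence on these two quantities explicit.
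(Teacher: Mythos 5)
Your proposal is correct and follows essentially the same route as the paper: establish the equivalence with the linear model, use $W^\trnsp W = (m/d)I_d$ to obtain the two-point spectral measure $\widehat H$ and the point mass $\widehat G$ of Eqs.~\eqref{eq:widehatH}--\eqref{eq:widehatG}, and then plug these into the general asymptotic results of \citet{hastie_surprises_2019} (their Theorems 1--2 and Corollaries 3--4 with Definition 1), treating the under- and over-parameterized regimes separately. The remaining work in both your argument and the paper's is the same bookkeeping of the fixed-point equation and the spectral integrals against the two-point measure, together with reconciling the $r^2$, $\sigma^2$ normalizations.
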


The proposition is (partially) provided by~\citet[Corollary 4]{hastie_surprises_2019} and extended here to also state the asymptotic for $L_2$ and for $R$ in the underparameterized region, both results actually follow from the developments in~\citep{hastie_surprises_2019}.
In the overparameterized region, the proof follows by replacing $\widehat{G}$ and $\widehat{H}$ from Eq.~\eqref{eq:widehatH} and Eq.~\eqref{eq:widehatG} in Definition 1 from~\citet{hastie_surprises_2019} and by further simplifying the expressions. It then follows from the (more general) Theorem 2 in \citep{hastie_surprises_2019} that $R \rightarrow r^2\mathscr{B}(\psi, \gamma) + \sigma^2\mathscr{V}(\psi, \gamma)$.  In the underparameterized region, the result can be obtained using Theorem 1 in \citep{hastie_surprises_2019}.  The convergence of the $L_2$ norm follows from similar analysis using Corollary 3 in \citep{hastie_surprises_2019}.

\begin{figure}[H]
    \centering
    \subfloat[$\ell_2$-adversarial attacks]{\includegraphics[width=0.48\columnwidth]{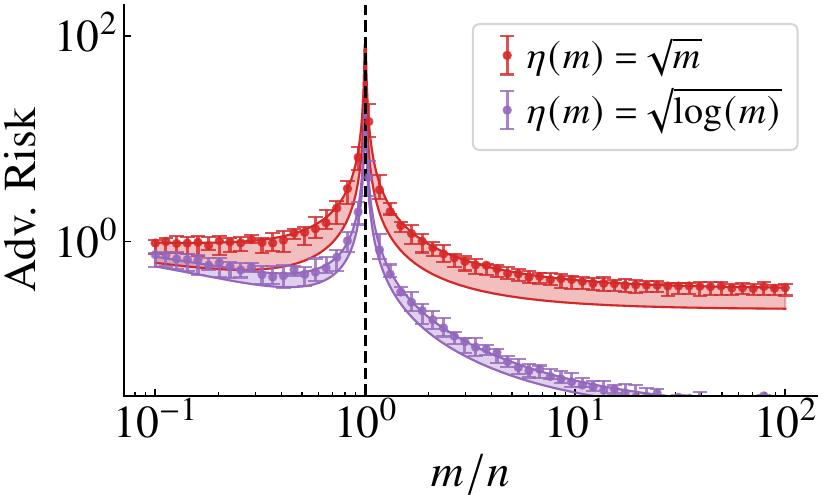}}
    \subfloat[$\ell_\infty$-adversarial attacks]{\includegraphics[width=0.48\columnwidth]{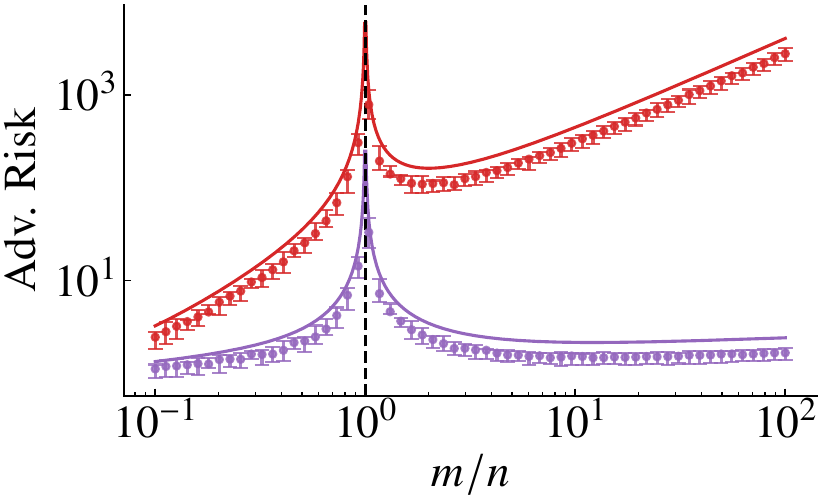}}
    \caption{\emph{Adversarial $\ell_p$-risk, latent space model.} The setup is the same as in Fig.~\ref{fig:latent-l2-and-linf}, but  we plot the asymptotics obtained from the above proposition. In (a), we show the upper and lower bounds; in (b), we show only the upper bound. The argument for why the upper bound is followed closely for $\ell_\infty$-adversarial attacks is provided in the main text.}
    \label{fig:latent-lp-asymptotics}
\end{figure}

\begin{figure}[H]
    \centering
    \subfloat[$p =2, \eta(m) = \sqrt{m}$]{\includegraphics[width=0.48\columnwidth]{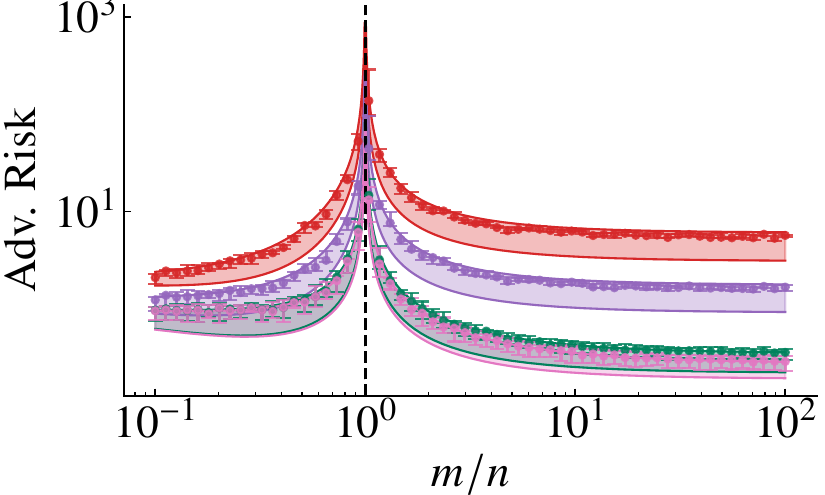}}
    \subfloat[$p =2, \eta(m) = \sqrt{\log(m)}$]{\includegraphics[width=0.48\columnwidth]{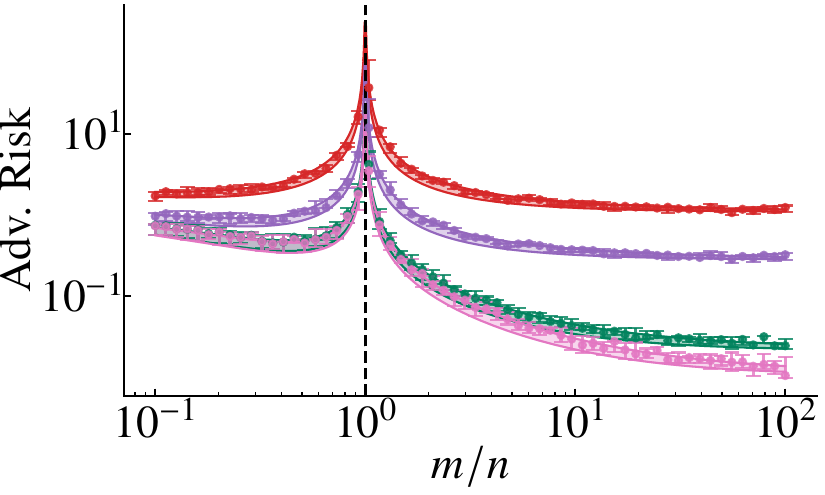}}\\
    \subfloat[$p =\infty, \eta(m) = \sqrt{m}$]{\includegraphics[width=0.48\columnwidth]{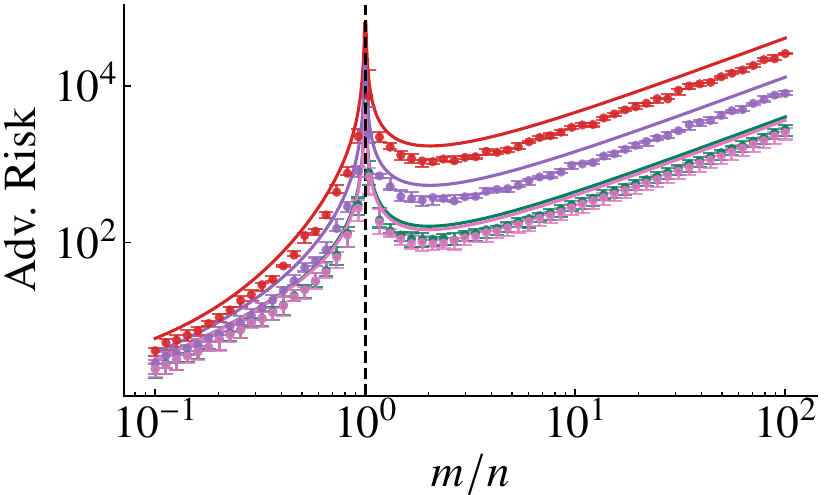}}
    \subfloat[$p =\infty, \eta(m) = \sqrt{\log(m)}$]{\includegraphics[width=0.48\columnwidth]{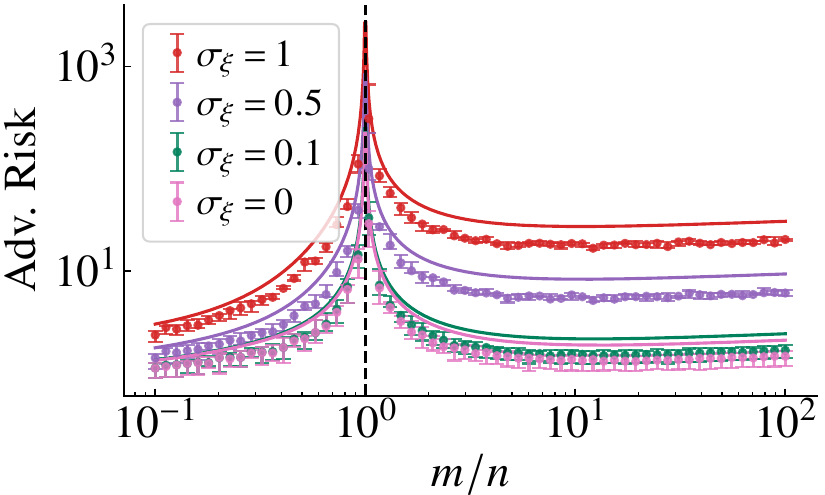}}
    \caption{\emph{Adversarial risk, latent space model, different values of $\sigma_\xi$.} The setup is the same as in Fig.~\ref{fig:latent-lp-asymptotics} but for different values of noise $\sigma_\xi$.}
    \label{fig:latent-lp-noise}
\end{figure}

\begin{figure}[H]
    \centering
    \subfloat[$p =2, \eta(m) = \sqrt{m}$]{\includegraphics[width=0.48\columnwidth]{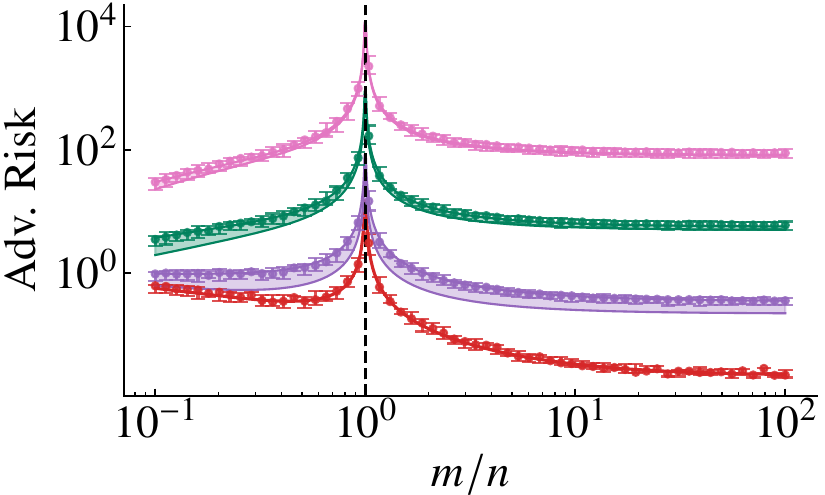}}
    \subfloat[$p =2, \eta(m) = \sqrt{\log(m)}$]{\includegraphics[width=0.48\columnwidth]{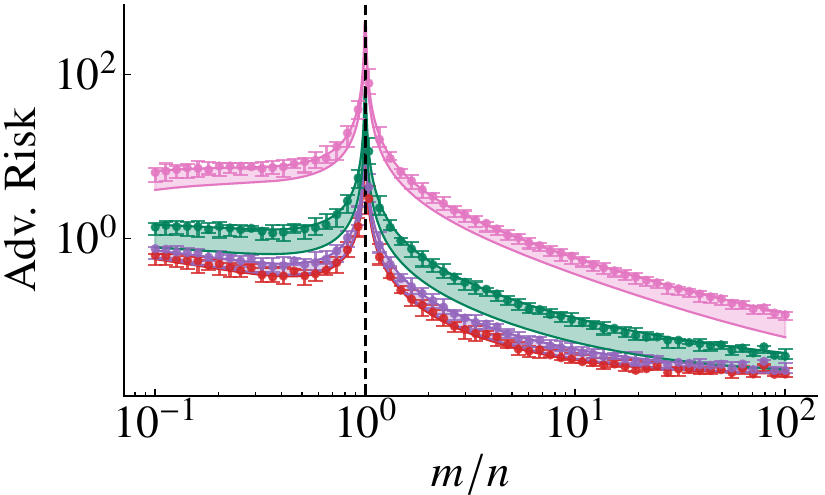}}\\
    \subfloat[$p =\infty, \eta(m) = \sqrt{m}$]{\includegraphics[width=0.48\columnwidth]{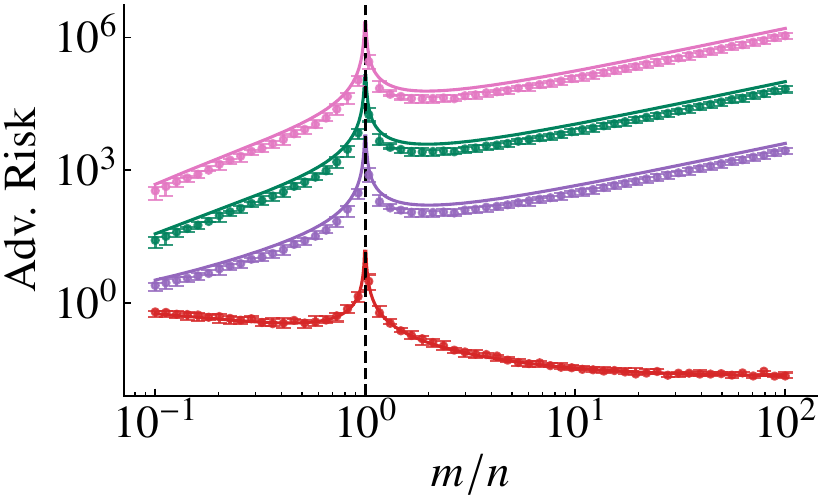}}
    \subfloat[$p =\infty, \eta(m) = \sqrt{\log(m)}$]{\includegraphics[width=0.48\columnwidth]{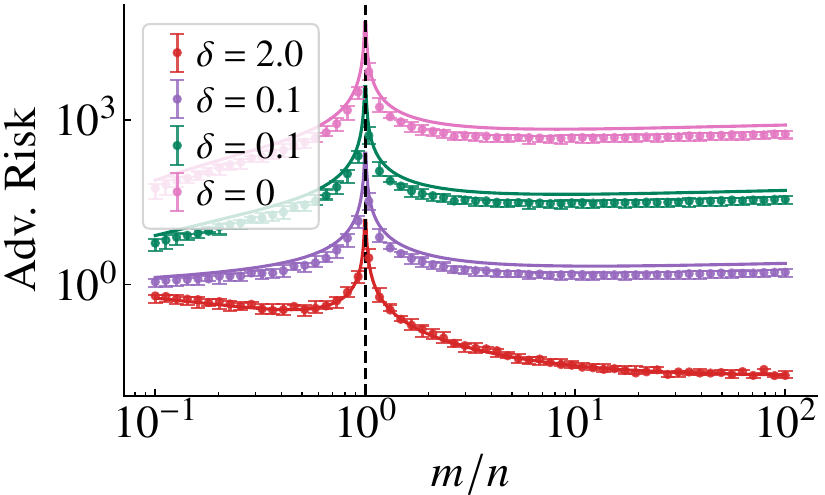}}
    \caption{\emph{Adversarial risk, latent space model, different $\delta$.} The setup is the same as in Fig.~\ref{fig:latent-lp-asymptotics} but for different values of~$\delta$. Here, $\delta=0$ correspond to no adversarial attack.}
    \label{fig:latent-lp-delta}
\end{figure}

\section{Comparison between regularization methods in isotropic features model}
\label{sec:regularization-isotropic-model}

  \begin{figure}[H]
    \centering
    \subfloat[Ridge regression]{\includegraphics[width=0.5\columnwidth]{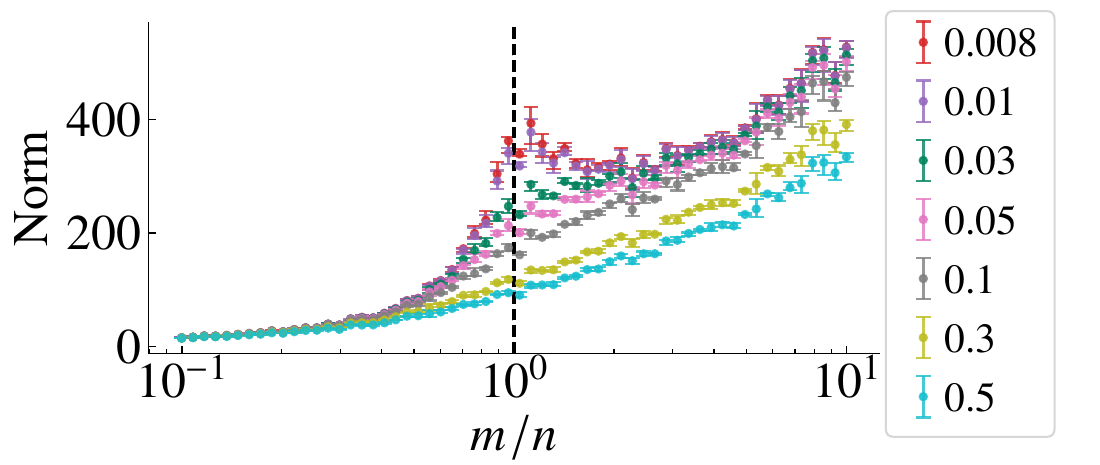}}
    \subfloat[Adversarial training $\ell_2$ ]{\includegraphics[width=0.5\columnwidth]{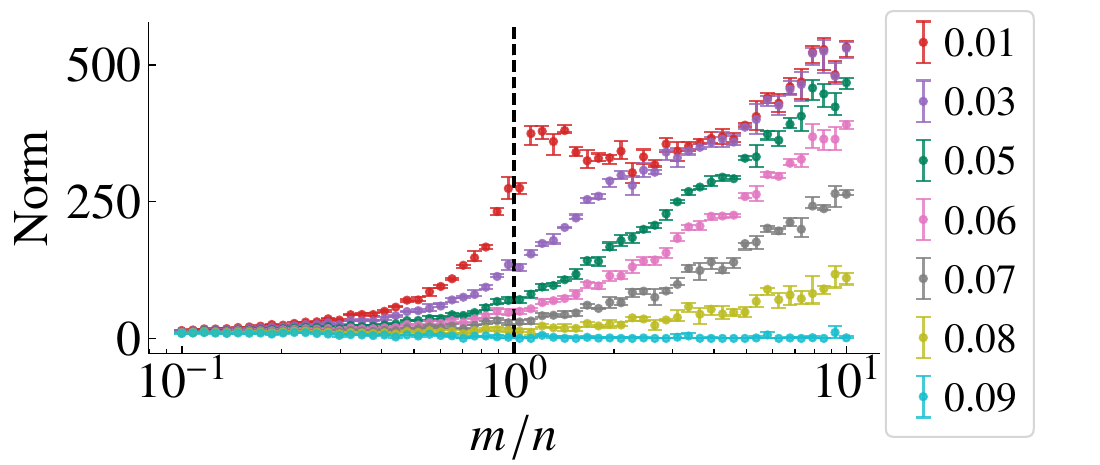}}\\
    \subfloat[Lasso regression]{\includegraphics[width=0.5\columnwidth]{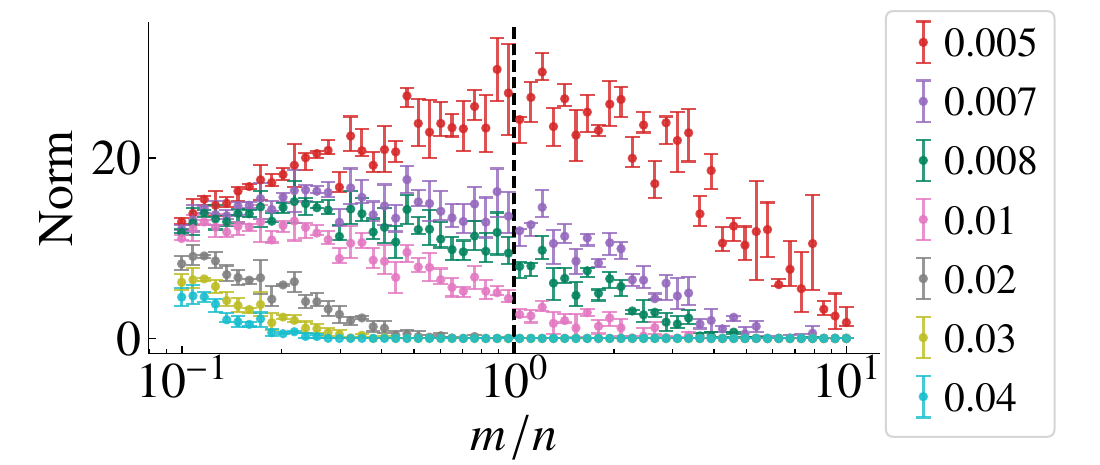}}
    \subfloat[Adversarial training  $\ell_\infty$  ]{\includegraphics[width=0.5\columnwidth]{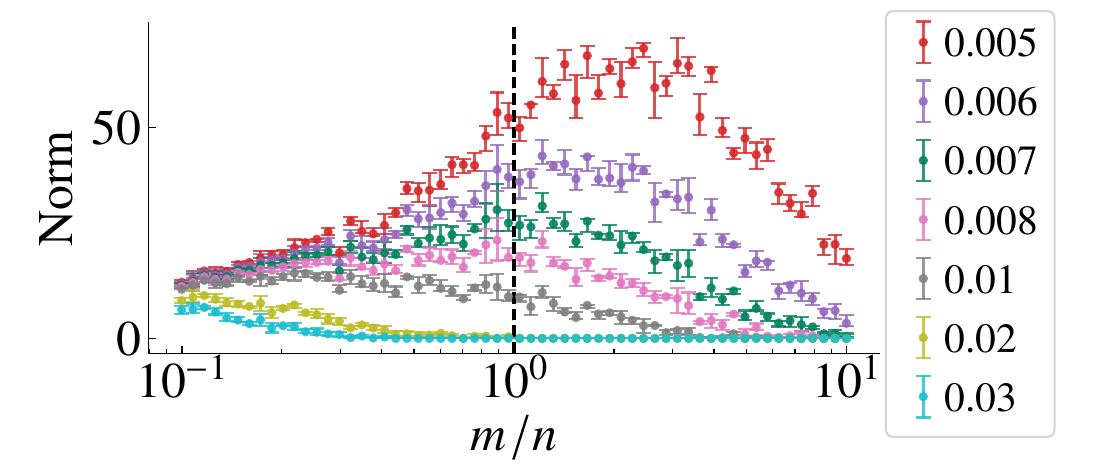}}\\
    \caption{\emph{Parameter norm  $\|\mhat{\beta}\|_1$.} The input variables are scaled with $\eta(m) = \sqrt{m}$. The error bars give the median and the 0.25 and 0.75 quantiles obtained from the numerical experiment (4 realizations) for a fixed training dataset of size $n=100$. The values of $\delta$ for each experiment are indicated in the plot. We repeat the experiment for different amounts of regularization. The regularization parameter $\delta$ is defined in Eq.~\eqref{eq:empirical_advrisk} for the adversarial training and in Eq.~\eqref{eq:ridge} and~\eqref{eq:lasso} for ridge regression and lasso.}
    \label{fig:isotropic-norm-advtraining}
  \end{figure}
  
  \begin{figure}[H]
    \centering
    \subfloat[$\ell_\infty$-adversarial risk]{\includegraphics[width=0.5\columnwidth]{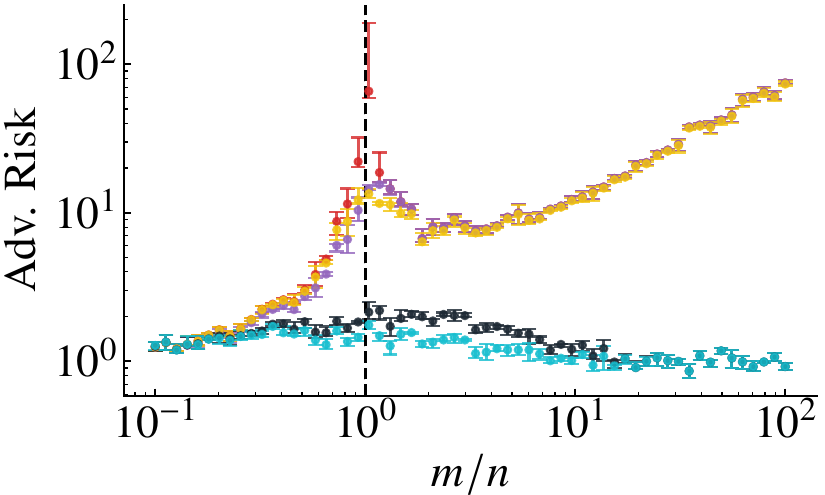}}
    \subfloat[Standard risk ]{\includegraphics[width=0.5\columnwidth]{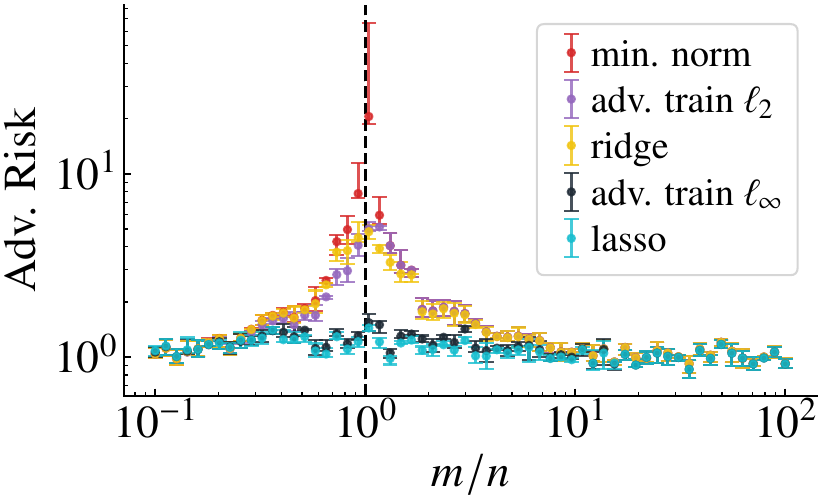}}
 \caption{\emph{Adversarial $\ell_\infty$-risk.} On the $y$-axis we show the risk for models obtained by different training methods. In (a), it is the $\ell_\infty$-adversarial risk; and, in (b) the standard risk. On the $x$-axis we have the ratio between the number of features $m$ and the number of training datapoints $n$. The error bars give the median and the 0.25 and 0.75 quantiles obtained from numerical experiment (4 realizations). We use $\delta=0.01$ during both inference (to compute the adversarial risk) and adversarial training, as in Eq.~\eqref{eq:empirical_advrisk}. We also use $\delta=0.01$ for lasso and ridge regression, see Eq.~\eqref{eq:ridge} and~\eqref{eq:lasso}.}
    \label{fig:isotropic-testerror-advtraining}
  \end{figure}

\end{document}